\documentclass[11pt,letterpaper]{article}

\usepackage[margin=1in]{geometry}
\usepackage[T1]{fontenc}
\usepackage{lmodern,microtype}
\usepackage{aliascnt}
\usepackage{amsmath,amssymb,amsfonts,amsthm,mathtools,bm}
\usepackage{graphicx,xcolor}
\usepackage{booktabs,tabularx,array,multirow,makecell}
\usepackage{enumitem,placeins}
\usepackage{algorithm}
\usepackage[noend]{algpseudocode}
\usepackage[hidelinks]{hyperref}
\usepackage[capitalize,nameinlink]{cleveref}
\allowdisplaybreaks
\newcolumntype{Y}{>{\centering\arraybackslash}X}
\newcolumntype{L}{>{\raggedright\arraybackslash}X}
\numberwithin{equation}{section}
\numberwithin{algorithm}{section}

\theoremstyle{plain}
\newtheorem{theorem}{Theorem}[section]
\newaliascnt{lemma}{theorem}
\newtheorem{lemma}[lemma]{Lemma}
\aliascntresetthe{lemma}
\newaliascnt{corollary}{theorem}
\newtheorem{corollary}[corollary]{Corollary}
\aliascntresetthe{corollary}
\newaliascnt{proposition}{theorem}

\aliascntresetthe{proposition}
\theoremstyle{definition}
\newaliascnt{definition}{theorem}

\aliascntresetthe{definition}
\newaliascnt{assumption}{theorem}
\newtheorem{assumption}[assumption]{Assumption}
\aliascntresetthe{assumption}
\theoremstyle{remark}
\newaliascnt{remark}{theorem}
\newtheorem{remark}[remark]{Remark}
\aliascntresetthe{remark}
\crefname{assumption}{Assumption}{Assumptions}
\Crefname{assumption}{Assumption}{Assumptions}
\crefname{remark}{Remark}{Remarks}
\Crefname{remark}{Remark}{Remarks}
\crefname{appendix}{Appendix}{Appendices}
\Crefname{appendix}{Appendix}{Appendices}

\newcommand{\R}{\mathbb{R}}

\newcommand{\E}{\mathbb{E}}
\newcommand{\Pp}{\mathbb{P}}
\newcommand{\dd}{\,\mathrm{d}}
\newcommand{\eps}{\varepsilon}
\newcommand{\wt}{\widetilde}

\newcommand{\cA}{\mathcal{A}}
\newcommand{\cB}{\mathcal{B}}
\newcommand{\cD}{\mathcal{D}}
\newcommand{\cE}{\mathcal{E}}

\newcommand{\cG}{\mathcal{G}}

\newcommand{\cR}{\mathcal{R}}
\newcommand{\cU}{\mathcal{U}}

\newcommand{\norm}[1]{\left\lVert #1\right\rVert}
\newcommand{\abs}[1]{\left|#1\right|}

\newcommand{\dist}{\operatorname{dist}}
\newcommand{\Lip}{\operatorname{Lip}}
\newcommand{\argmin}{\operatorname*{arg\,min}}
\newcommand{\esssup}{\operatorname*{ess\,sup}}
\newcommand{\ptesssup}[2]{\esssup_{\tau\in(0,T)}
  \norm{#1(\tau,\cdot)}_{\infty,#2}}

\newcommand{\one}{\mathbf{1}}
\newcommand{\comp}{\mathrm{comp}}
\newcommand{\ext}{\mathrm{ext}}

\newcommand{\cons}{\mathrm{cons}}
\newcommand{\PI}{\mathrm{PI}}

\DeclareMathOperator{\Pois}{Pois}

\title{Monotone Neural Policy Iteration for High-Dimensional First-Order
Hamilton--Jacobi--Bellman Equations\thanks{This work was supported
by the National Research Foundation of Korea (NRF) grant funded by the
Korea government (MSIT) (\mbox{RS-2023-00219980}, \mbox{RS-2026-25490291},
\mbox{RS-2026-25499800}).}}

\author{Minseok Kim\thanks{Department of Applied Artificial Intelligence,
Seoul National University of Science and Technology, Seoul, Republic of
Korea.  Equal contribution.}
\and Yeongjong Kim\thanks{Department of Mathematics, Pohang University of
Science and Technology, Pohang, Republic of Korea.  Equal contribution.}
\and Namkyeong Cho\thanks{Department of Mathematical Finance and Big Data,
Gachon University, Seongnam, Republic of Korea
(\href{mailto:namkyeong.cho@gmail.com}{\nolinkurl{namkyeong.cho@gmail.com}}).  Co-corresponding author.}
\and Yeoneung Kim\thanks{Department of Industrial Engineering, Yonsei
University, Seoul, Republic of Korea (\href{mailto:yeoneung@yonsei.ac.kr}{\nolinkurl{yeoneung@yonsei.ac.kr}}).
Co-corresponding author.}}

\date{}
\hypersetup{
  pdftitle={Monotone Neural Policy Iteration for High-Dimensional First-Order Hamilton--Jacobi--Bellman Equations},
  pdfauthor={Minseok Kim, Yeongjong Kim, Namkyeong Cho, and Yeoneung Kim}
}

\begin{document}
\maketitle

\begin{abstract}
We analyze a neural semi-discrete method for high-dimensional first-order
Hamilton--Jacobi--Bellman (HJB) equations with known or learned dynamics.
Centered differences and an artificial viscosity $Nh=O(h)$ define a monotone
operator evaluated through $2d+1$ shifted network queries; policy iteration
solves the resulting Bellman equation without a tensor grid.
At fixed $h$, the sharp componentwise condition $\max_i|f_i|\le2N$
turns every frozen-policy operator into a nearest-neighbor Markov-chain
generator with a policy-independent total jump rate.
Uniformization gives whole-space well-posedness for measurable feedbacks,
an explicit Poisson-tail bound on the numerical domain of dependence,
and boundary-free localization.
The representation also yields a posteriori policy-evaluation bounds
that account for residual and learned-model errors.
A greedy-gap analysis controls inexact policy iteration at fixed $h$;
a separate consistency estimate connects the semi-discrete equation
to the continuous HJB equation.
Experiments reproduce the extremal tail, show rates consistent with
$O(\sqrt h)$ and nearly $h$-independent exact-policy-iteration decay,
and assess empirical estimator effectivity.
A nonsmooth example shows that the continuous residual can miss a
non-viscosity solution, whereas the shifted residual detects the defect.
Further tests provide a structured interval-verified certificate
calibration, an early-budget benefit of policy freezing for bang--bang
control, and learned-dynamics diagnostics.
A structured nonlinear problem with active compact-control constraints
is tested against a manufactured semi-discrete reference through $d=1024$.
\end{abstract}

\medskip
\noindent\textbf{Keywords:}
high-dimensional Hamilton--Jacobi--Bellman equations, deterministic optimal
control, monotone finite differences, neural policy iteration, learned
dynamics, Markov-chain approximation, a posteriori error estimates
\par

\medskip
\noindent\textbf{Mathematics Subject Classification:}
65M06, 65M12, 49L25, 68T07, 60J27
\par

\section{Introduction}
\label{sec:introduction}

Monotone finite-difference and semi-Lagrangian methods provide the natural
route to viscosity solutions of first-order Hamilton--Jacobi--Bellman
(HJB) equations, but a tensor grid with $M$ nodes per coordinate contains
$M^d$ unknowns.  Neural representations remove this array, yet a small
continuous HJB residual does not enforce the monotonicity needed to select the
viscosity solution.  We retain a neural value function while keeping the
HJB operator semi-discrete: centered differences and the vanishing
artificial viscosity $\nu_h=Nh=O(h)$ are evaluated at
$V_\theta(\tau,x)$ and $V_\theta(\tau,x\pm he_i)$, $i=1,\ldots,d$.
For a frozen feedback, the monotone stencil is a nearest-neighbor jump
generator; policy evaluation is followed by minimization of the discrete
Hamiltonian.  Evaluating the residual at one collocation point requires  $2d+1$ value queries and coordinate
streaming uses $O(Bd)$ shifted-input storage for a batch of size $B$.
We refer to this procedure as semi-discrete policy
iteration (SDPI).
The paper also treats learned dynamics: componentwise model error
controls monotonicity, whereas Euclidean model error enters the residual
and greedy-gap estimates.
\subsection{Contributions}

The shifted residual was introduced in \cite{esteve2025finite}.  The
present work develops a fixed-$h$ pointwise error theory for neural policy
iteration based on this residual.  The sharp monotonicity condition
$\max_i|f_i|\le2N$ yields
uniformization by a policy-independent rate-$\Lambda_h$ Poisson process,
whole-space Bellman verification, an
explicit numerical domain-of-dependence bound, attained by a constant
extremal drift, and coupling-based localization without
boundary traces or exterior gradients.  
A stopped-chain argument bounds the neural policy-evaluation error by
the residual, the initial-condition error, the learned-model error, and
a term that accounts for the jump chain leaving the evaluation box.
Greedy gaps accommodate nonunique minimizers and measurable policy improvements and lead to
a recursive policy-error bound with factorially weighted history for inexact policy iteration.

Numerical experiments evaluate rates consistent with $O(\sqrt h)$,
Poisson-tail sharpness and localization, estimator effectivity, nonsmooth
viscosity-solution selection, learned-model error, and structured neural
approximation through $d=1024$ with active compact-control constraints.  A
sparse-actuator Allen--Cahn problem provides a separate nonmanufactured
feasibility test.  Throughout, we distinguish analytic bounds from quantities
computed on finite validation sets.

\subsection{Scope of the Analysis}

The analysis is at fixed $h$; convergence to the continuous HJB solution
additionally requires consistency and control of error amplification as
$h\downarrow0$. Learned-model estimates use uniform error envelopes, and
localization requires compatible boxes and horizons. Approximation and
uniform validation remain dimension dependent; the high-dimensional tests
use architectures adapted to local interactions.

\section{Relation to Existing Numerical and Learning-Based HJB Methods}
\label{sec:related}

Viscosity solutions frame HJB well-posedness
\cite{crandall1983viscosity,bardi1997optimal}. For schemes, monotonicity,
stability, and consistency imply convergence under comparison
\cite{crandall1984two,barles1991convergence,oberman2006convergent};
semi-Lagrangian and Markov-chain methods use nonnegative interpolation weights
or transition probabilities
\cite{falcone2014semilagrangian,kushner2001numerical}. Sparse-grid
semi-Lagrangian schemes were demonstrated through $d=8$, although their
interpolation need not be monotone \cite{bokanowski2013adaptive}. At fixed
$h$, our reference uses the artificial-viscosity operator analyzed by Tang
et al.\ \cite{tang2025policy}. Under bounded Lipschitz data and a unique
uniformly Lipschitz minimizing selector, they prove $O(\sqrt h)$ consistency
and geometric fixed-$h$ convergence of exact policy iteration in
$L^2_{\rm loc}$. We complement this with bounded-box neural evaluation and
whole-space localization, measurable updates with nonunique minimizers,
learned dynamics, and inexact sup-norm propagation.

High-dimensional solvers exploit representation or max-plus structure
\cite{darbon2016algorithms,chow2019algorithm,darbon2020overcoming,
mceneaney2007curse}, low-rank tensors \cite{dolgov2021tensor}, or
characteristic data \cite{nakamurazimmerer2021adaptive}. Physics-informed
neural networks (PINNs) and neural
policy iteration often use continuous PDE residuals
\cite{raissi2019physics,meng2024pinnpi,kim2025stochastic,yang2025hji},
whereas deep-BSDE and operator-learning methods use stochastic trajectories
or learned maps \cite{han2018solving,lee2024deeponet}. Such residual training
does not itself define a monotone numerical scheme. The analyses in
\cite{kim2025stochastic,yang2025hji} instead concern continuous second-order
equations.

Finite-difference neural residual minimization was introduced in
\cite{esteve2025finite}. The finite-node analysis of
\cite{bokanowski2026residual} gives a posteriori bounds and conditional
optimization guarantees for residual losses over grid values. Our reference
is continuous-time and whole-space. Uniformization gives a Poisson-tail bound
on exterior influence; coupling and stopping localize without artificial
boundary data and bound evaluation error a posteriori. Howard-type policy
iteration, including continuous-time PDE variants, provides the classical
context
\cite{howard1960dynamic,puterman1979convergence,tran2025exploratory,
guo2026nonconvex}. Our Hamiltonian-gap estimates remain valid for measurable
selectors and nonunique minimizers.

\section{Problem Formulation and Lattice-Aware Norms}
\label{sec:problem}

\subsection{Finite-Horizon Control in Time-to-Go Coordinates}

Let $U$ be a nonempty compact metric space and $T>0$.  The physical
control problem is
\[
 \dot X_t=f_{\rm phys}(t,X_t,u_t),\quad u_t\in U,
 \qquad
 J(t_0,x;u)=\int_{t_0}^T L_{\rm phys}(t,X_t,u_t)\dd t+g(X_T),
\]
with running cost $L_{\rm phys}$, terminal cost $g$, and value function
$v(t_0,x)=\inf_u J(t_0,x;u)$ over measurable controls.  We work in the
time-to-go variable $\tau=T-t\in[0,T]$ and set
$V(\tau,x):=v(T-\tau,x)$,
$f(\tau,x,u):=f_{\rm phys}(T-\tau,x,u)$, and
$L(\tau,x,u):=L_{\rm phys}(T-\tau,x,u)$,
so that $f:[0,T]\times\R^d\times U\to\R^d$,
$L:[0,T]\times\R^d\times U\to\R$, and $g:\R^d\to\R$ are the data used
throughout.  Under standard assumptions, $V$ is the unique bounded
uniformly continuous viscosity solution of the initial-value problem
\begin{equation}
 \partial_\tau V(\tau,x)
 -\min_{u\in U}\left\{
 f(\tau,x,u)\cdot\nabla V(\tau,x)+L(\tau,x,u)
 \right\}=0,
 \qquad V(0,x)=g(x)
 \label{eq:continuous-hjb}
\end{equation}
\cite{bardi1997optimal}.  Time dependence of $f$ and $L$ is retained to
accommodate tracking costs and time-varying learned models.  The
fixed-$h$ theory of \cref{sec:chain,sec:localization,sec:pi}
uses \eqref{eq:continuous-hjb} only as the target of the consistency
estimate in \cref{sec:endtoend}; it requires neither a classical
solution nor the assumptions behind the viscosity characterization.

A feedback is a Borel map $\pi:[0,T]\times\R^d\to U$ in time-to-go
coordinates.  We abbreviate
$a_\pi(\tau,x)=f(\tau,x,\pi(\tau,x))$ and
$L_\pi(\tau,x)=L(\tau,x,\pi(\tau,x))$.

\subsection{Shift Operators}

For $h>0$ and a globally defined bounded function $v:\R^d\to\R$, let
\begin{align}
 D_i^{+,h}v(x)&=\frac{v(x+he_i)-v(x)}{h},
 &
 D_i^{-,h}v(x)&=\frac{v(x)-v(x-he_i)}{h},
 \nonumber\\
 D_i^{0,h}v(x)&=\frac{v(x+he_i)-v(x-he_i)}{2h},
 &
 \nabla_0^hv&=(D_1^{0,h}v,\ldots,D_d^{0,h}v),
 \nonumber\\
 \Delta_hv(x)&=\sum_{i=1}^d
 \frac{v(x+he_i)-2v(x)+v(x-he_i)}{h^2}.
 \nonumber
\end{align}
Here $v$ is a function on $\R^d$, not a vector of nodal values; $h$ is a
translation scale, not a mesh width.

Fix $N>0$ and set $\nu_h=Nh$.
For a policy $\pi$, the exact semi-discrete policy value solves
\begin{equation}
 \partial_\tau V^{h,\pi}
 -a_\pi\cdot\nabla_0^hV^{h,\pi}
 -\nu_h\Delta_hV^{h,\pi}
 =L_\pi,
 \qquad V^{h,\pi}(0,\cdot)=g.
 \label{eq:policy-evaluation}
\end{equation}
The semi-discrete Bellman equation is
\begin{equation}
 \partial_\tau V^{h,*}
 -\nu_h\Delta_hV^{h,*}
 -\min_{u\in U}
 \left\{f(\tau,x,u)\cdot\nabla_0^hV^{h,*}+L(\tau,x,u)\right\}=0,
 \qquad V^{h,*}(0,\cdot)=g.
 \label{eq:semidiscrete-bellman}
\end{equation}

\subsection{Neural Realization and Computational Footprint}
\label{sec:computational-footprint}

For a neural approximation $V_\theta:[0,T]\times\R^d\to\R$, evaluating
$\nabla_0^hV_\theta$ and $\Delta_hV_\theta$ on a batch of $B$ collocation
points uses $B(2d+1)$ scalar value queries.  Streaming the two shifts for
each coordinate uses $O(Bd)$ shifted-input storage.  Reverse-mode training
also stores network activations unless these are recomputed or checkpointed;
the high-dimensional tests use an exact local-stencil implementation for
finite-range architectures.  A tensor grid, by comparison, stores $M^d$
value unknowns for $M$ nodes per coordinate.  The query count excludes
minimization over $U$.

\subsection{Standing Data and Monotonicity Conditions}

\begin{assumption}[Bounded data and monotone envelope]
\label[assumption]{ass:bounded-data}
The data $f$ and $L$ are bounded and jointly Borel in $(\tau,x,u)$,
and for every $(\tau,x)$ the maps $u\mapsto f(\tau,x,u)$ and
$u\mapsto L(\tau,x,u)$ are continuous.  The terminal cost $g$ is bounded
and Borel.  Set
\begin{align*}
 M_{\comp}&:=\sup_{\tau,x,u}\max_{1\le i\le d}|f_i(\tau,x,u)|,
 &M_2&:=\sup_{\tau,x,u}|f(\tau,x,u)|,\\
 M_L&:=\sup_{\tau,x,u}|L(\tau,x,u)|,
 &M_g&:=\sup_x|g(x)|,
\end{align*}
and, with $h>0$ and $N>0$ fixed throughout the fixed-$h$ analysis,
\begin{equation}
 M_V:=M_g+TM_L,
 \qquad
 M_{\comp}\le 2N.
 \label{eq:monotonicity-envelope}
\end{equation}
\end{assumption}

The componentwise bound $M_{\comp}$, not the Euclidean bound $M_2$, is the sharp quantity for nonnegative neighbor rates.  The Euclidean bound is used when a Hamiltonian is perturbed in its gradient argument.

For $\phi:E\to\R^k$, write
\[
 \norm{\phi}_{\infty,E}:=\sup_{z\in E}|\phi(z)|.
\]
For a Borel map $r:(0,T)\times D\to\R^k$, where $D\subset\R^d$, the mixed
time--space bounds below use the iterated quantity
\[
 \ptesssup{r}{D}.
\]
Thus the essential supremum is taken only in time; the spatial norm is
pointwise. The spatial supremum cannot be replaced by
an essential supremum. Indeed, in $d=1$, take $U=\{0\}$, $f=g=0$, and
$L(x)=\one_{h\mathbb Z}(x)$. Although $\esssup_x|L(x)|=0$, one has
$\Delta_h\one_{h\mathbb Z}=0$ and hence
$V^h(\tau,x)=\tau\one_{h\mathbb Z}(x)$. Thus changes on spatial null sets
can change the pointwise semi-discrete value. For Polish $D$, the map
$\tau\mapsto\sup_{x\in D}|r(\tau,x)|$ is universally measurable.

\subsection{Boxes and Strips}

For $L>0$, write $Q_L=(-L,L)^d$.  For any open box $Q$, define
\begin{equation*}
 Q^h=\{x\in Q:x\pm he_i\in Q\text{ for all }i\},
 \qquad
 \partial_hQ=Q\setminus Q^h.
\end{equation*}
Fix boxes and a target set such that
\[
 K\Subset Q_L^h,
 \qquad Q_L\Subset Q_{L'}^h,
\]
and define the positive coordinate margins
\begin{equation*}
\begin{aligned}
 D_L&:=\inf_{x\in K}\min_{1\le i\le d}
 \dist\bigl(x_i,\R\setminus \operatorname{proj}_i(Q_L^h)\bigr),\\
 D_{L,L'}&:=\inf_{x\in Q_L}\min_{1\le i\le d}
 \dist\bigl(x_i,\R\setminus \operatorname{proj}_i(Q_{L'}^h)\bigr).
\end{aligned}
\end{equation*}
For axis-aligned boxes these are the infimal coordinatewise distances to the
indicated complements.  This is the natural geometry for the coordinatewise
numerical domain-of-dependence bound.

\section{Uniformized Jump-Chain and Bellman Theory}
\label{sec:chain}

\subsection{Generator Decomposition}

For a bounded Borel drift $a:[0,T]\times\R^d\to\R^d$ satisfying
$|a_i|\le2N$ and a bounded Borel function $v:\R^d\to\R$, set
\[
 \cB_{a,\tau}^hv(x)=a(\tau,x)\cdot\nabla_0^hv(x)+Nh\Delta_hv(x).
\]
A direct expansion yields
\begin{equation}
 \cB_{a,\tau}^hv(x)
 =\sum_{i=1}^d\lambda_i^+(\tau,x)[v(x+he_i)-v(x)]
 +\sum_{i=1}^d\lambda_i^-(\tau,x)[v(x-he_i)-v(x)],
 \label{eq:generator-rates}
\end{equation}
where
\begin{equation*}
 \lambda_i^\pm(\tau,x)=\frac{N\pm a_i(\tau,x)/2}{h}\ge0,
 \qquad
 \lambda_i^+(\tau,x)+\lambda_i^-(\tau,x)=\frac{2N}{h}.
\end{equation*}
The total rate is the constant
\begin{equation}
 \Lambda_h=\sum_{i=1}^d(\lambda_i^++\lambda_i^-)=\frac{2dN}{h}.
 \label{eq:total-rate}
\end{equation}
Define
\begin{equation*}
 P_{a,\tau}(x,\dd y)
 :=\sum_{i=1}^d\frac{\lambda_i^+(\tau,x)}{\Lambda_h}
 \delta_{x+he_i}(\dd y)
 +\sum_{i=1}^d\frac{\lambda_i^-(\tau,x)}{\Lambda_h}
 \delta_{x-he_i}(\dd y).
\end{equation*}
Then $P_{a,\tau}$ is a Borel Markov kernel and
\begin{equation*}
 \cB_{a,\tau}^hv=\Lambda_h(P_{a,\tau}v-v),
 \qquad
 (P_{a,\tau}v)(x):=\int_{\R^d}v(y)P_{a,\tau}(x,\dd y).
\end{equation*}

The two neighbor coefficients in direction $i$ are
$(N+a_i/2)/h$ and $(N-a_i/2)/h$. Hence they are nonnegative for every
$(\tau,x)$ if and only if $|a_i(\tau,x)|\le2N$, showing that the
componentwise envelope is sharp for this stencil.

\subsection{Uniformized Chain and Bellman Representation}

Fix $(\tau,x)\in[0,T]\times\R^d$ and a Borel policy $\pi$. Set $Y_0=x$
and construct the chain in elapsed time $s\in[0,\tau]$, evaluating its
transition probabilities at the remaining time $\tau-s$. Let
$\{S_k\}_{k\ge1}$ be the jump times of a Poisson process of rate
$\Lambda_h$. At $S_k=s$, choose a coordinate $I_k$ uniformly from
$\{1,\ldots,d\}$ and, conditionally on the current state $y$, choose a sign
$\Sigma_k\in\{-1,+1\}$ with
\begin{equation*}
 \Pp(\Sigma_k=+1\mid I_k=i,Y_{s-}=y)
 =\frac{N+a_{\pi,i}(\tau-s,y)/2}{2N}.
\end{equation*}
Set $Y_s=Y_{s-}+h\Sigma_ke_{I_k}$ at the jump and keep $Y$ constant between
jumps. Evaluating the coefficients at $\tau-s$ matches the initial-value
formulation in the time-to-go variable.

The construction is pathwise well defined because the number of jumps is
almost surely finite and the mark probabilities are Borel. We denote its law
and expectation by $\Pp_x^{\tau,\pi}$ and $\E_x^{\tau,\pi}$.

Let $B_b(E)$ denote the bounded Borel functions on $E$. We consider operator
families $A_\tau:B_b(\R^d)\to B_b(\R^d)$ that preserve bounded joint
measurability: $(\tau,x)\mapsto A_\tau z(\tau,\cdot)(x)$ has this property
whenever $z$ does. Given $z_0\in B_b(\R^d)$ and bounded jointly measurable
$\varphi,z$ on $[0,T]\times\R^d$, we call $z$ a pointwise integral solution of
$\partial_\tau z-A_\tau z=\varphi$, $z(0)=z_0$, if
\begin{equation*}
 z(\tau,x)=z_0(x)+\int_0^\tau
 [A_s z(s,\cdot)(x)+\varphi(s,x)]\dd s
\end{equation*}
at every $(\tau,x)$. For $A_s=\cB_{a,s}^h$, let
$\cU^a_{\tau,r}:B_b(\R^d)\to B_b(\R^d)$ be given by
$(\cU^a_{\tau,r}\psi)(x):=\E_x^{\tau,a}[\psi(Y_{\tau-r})]$,
$0\le r\le\tau$, using the same chain construction with drift $a$.
A bounded integral subsolution satisfies the pointwise inequality
\[
 z(\tau,x)\le (\cU^a_{\tau,0}z_0)(x)
 +\int_0^\tau(\cU^a_{\tau,s}\varphi(s,\cdot))(x)\dd s;
\]
the reverse inequality defines a supersolution.

We collect the representation, comparison, and Bellman verification results
used below.
\begin{theorem}
\label{thm:policy-bellman}
Under \cref{ass:bounded-data} and \eqref{eq:monotonicity-envelope}, the following hold.
\begin{enumerate}[label=(\roman*),leftmargin=2.2em]
\item Every Borel feedback $\pi$ has a unique bounded pointwise integral
solution $V^{h,\pi}$ of \eqref{eq:policy-evaluation}. For each $(\tau,x)$,
let $Y$ be the chain constructed above. Then
\begin{equation}
 V^{h,\pi}(\tau,x)
 =\E_x^{\tau,\pi}\left[
 g(Y_\tau)+\int_0^\tau
 L_\pi(\tau-s,Y_s)\dd s
 \right].
 \label{eq:policy-representation}
\end{equation}
Moreover $\sup_{(\tau,x)\in[0,T]\times\R^d}|V^{h,\pi}(\tau,x)|\le M_V$.

\item For any bounded Borel drift $a$ satisfying $|a_i|\le2N$ and bounded
Borel functions $z_0$ and $\varphi$, if $z$ is a bounded pointwise integral
subsolution of
\[
 \partial_\tau z-\cB_{a,\tau}^hz\le \varphi,
 \qquad z(0,\cdot)\le z_0,
\]
then
\begin{equation*}
 \sup_x z(\tau,x)
 \le \sup_x z_0(x)+\int_0^\tau\sup_x\varphi(s,x)\dd s.
\end{equation*}
The analogous lower comparison holds for supersolutions.

\item The semi-discrete Bellman equation \eqref{eq:semidiscrete-bellman} has
a unique bounded pointwise integral solution, and
\begin{equation}
 V^{h,*}(\tau,x)=\inf_{\pi}V^{h,\pi}(\tau,x),
 \label{eq:bellman-verification}
\end{equation}
where the infimum is over Borel feedbacks. There exists a Borel selector
\[
 \pi^*(\tau,x)\in\argmin_{u\in U}
 \{f(\tau,x,u)\cdot\nabla_0^hV^{h,*}(\tau,x)+L(\tau,x,u)\}
\]
such that $V^{h,\pi^*}=V^{h,*}$.
\end{enumerate}
\end{theorem}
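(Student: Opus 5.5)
The plan is to work throughout with the uniformized form $\cB_{a,\tau}^h=\Lambda_h(P_{a,\tau}-I)$ from \eqref{eq:generator-rates}--\eqref{eq:total-rate}. This turns every integral equation into a Volterra fixed-point problem in the Banach space of bounded jointly Borel functions with the sup norm, and all operators involved are bounded there.

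\emph{Step 1: well-posedness for a fixed policy, part (i).} Write the equation for $V^{h,\pi}$ as
\[
z(\tau,x)=g(x)+\int_0^\tau\bigl[\Lambda_h(P_{a_\pi,s}z(s)-z(s))(x)+L_\pi(s,x)\bigr]\dd s .
\]
The map on the right preserves bounded joint measurability, because $P_{a,s}$ is a Borel kernel built from finitely many shifts with Borel weights. Its difference operator has Lipschitz constant $2\Lambda_h$ per unit time. A weighted norm $\sup_\tau e^{-4\Lambda_h\tau}\sup_x|z|$, or equivalently Picard iteration with the $(2\Lambda_h\tau)^k/k!$ bound, gives existence and uniqueness on all of $[0,T]$.

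For the representation I would use the integrating factor $e^{\Lambda_h\tau}$. This rewrites the equation as
\[
z(\tau)=e^{-\Lambda_h\tau}g+\int_0^\tau e^{-\Lambda_h(\tau-s)}\bigl[\Lambda_hP_{a,s}z(s)+L_\pi(s)\bigr]\dd s .
\]
Iterating this expansion is exactly a conditioning on the first jump time of the Poisson clock. The first jump occurs at elapsed time $S_1$, the remaining time is $\tau-S_1$, and the jump law is $P_{a,\tau-S_1}$. The strong Markov property of the constructed chain therefore shows that the right side of \eqref{eq:policy-representation} satisfies the same first-jump renewal identity. Both functions are bounded, so uniqueness identifies them. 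The bound $|V^{h,\pi}|\le M_g+TM_L$ is then immediate from \eqref{eq:policy-representation}.

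\emph{Step 2: comparison, part (ii).} The subsolution inequality, in the form given via $\cU^a$, is already a pointwise bound. Since $\cU^a_{\tau,r}$ is a Markov operator, $\cU^a_{\tau,r}\psi\le\sup\psi$. Taking suprema gives the claim. If one instead starts from the differential or integral inequality, the same Volterra argument applies: $P_{a,s}$ is positive, so the iteration preserves order, and comparing with the solution whose data are $\sup z_0$ and $\sup_x\varphi(s,\cdot)$ (constant in $x$) gives the bound. The supersolution case follows by symmetry.

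\emph{Step 3: the Bellman equation, part (iii).} The Hamiltonian
\[
H(\tau,x,p)=\min_u\{f\cdot p+L\}
\]
can be written using the jump kernel as
\[
\min_u\Bigl\{\sum_{i=1}^d\bigl(\lambda_i^{+,u}[v(x+he_i)-v(x)]+\lambda_i^{-,u}[v(x-he_i)-v(x)]\bigr)+L\Bigr\}-Nh\Delta_hv .
\]
The minimum of affine functions with nonnegative coefficients is monotone in the neighbor values and Lipschitz in $\sup$ norm with constant $2\Lambda_h$. Picard iteration in the same weighted space therefore gives a unique bounded solution $V^{h,*}$. Joint measurability of $(\tau,x)\mapsto H$ follows from the compactness of $U$, the continuity in $u$, and the Borel dependence in $(\tau,x)$: take the infimum over a countable dense subset of $U$.

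A Borel minimizing selector $\pi^*$ exists by the measurable selection theorem for Carathéodory integrands (Kuratowski--Ryll-Nardzewski). With this selector, $V^{h,*}$ solves \eqref{eq:policy-evaluation} for $\pi^*$, so uniqueness in Step 1 gives $V^{h,\pi^*}=V^{h,*}$.

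For an arbitrary Borel $\pi$, the minimum gives the pointwise inequality
\[
\partial_\tau V^{h,*}-\cB^h_{a_\pi}V^{h,*}\le L_\pi .
\]
Apply the linear, order-preserving Volterra argument of Step 2 to the difference $V^{h,*}-V^{h,\pi}$. This is a subsolution with zero data, so its supremum is at most zero and $V^{h,*}\le V^{h,\pi}$. This yields \eqref{eq:bellman-verification}.

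\emph{Main obstacle.} I expect the delicate point to be the rigorous identification of the probabilistic representation with the integral solution. This needs the renewal (first-jump) decomposition of the time-inhomogeneous chain, with the coefficients evaluated at $\tau-s$, and joint Borel measurability of $(\tau,x)\mapsto\E_x^{\tau,\pi}[\cdot]$. I would handle it by writing the expectation explicitly as a series over the number of jumps $k$, with Poisson weights $e^{-\Lambda_h\tau}(\Lambda_h\tau)^k/k!$ and iterated kernel integrals over the ordered jump times. That series coincides term by term with the Picard expansion, which makes the measurability evident.
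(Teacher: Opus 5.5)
Your proof is correct. For (ii) and (iii) it is the paper's argument: comparison from positivity and constant preservation of $\cU^a_{\tau,r}$, Picard iteration with the $2\Lambda_h$ Lipschitz bound, the measurable minimum theorem for $\pi^*$, linear uniqueness for $V^{h,\pi^*}=V^{h,*}$, and freezing at $\pi$ for verification.

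The real difference is in (i). The paper builds the evolution family directly as the ordered Poisson series and gets $\cU^a_{\tau,r}\psi=\psi+\int_r^\tau\cB^h_{a,s}\cU^a_{s,r}\psi\,\dd s$ by termwise integration. It then defines $V^{h,\pi}$ by the Duhamel formula and reads off \eqref{eq:policy-representation} by conditioning on the number and times of jumps. You instead produce the solution by Picard iteration and identify the expectation through the first-jump renewal identity in integrating-factor form plus uniqueness. This is lighter analytically, since no two-parameter evolution identity is needed. But, as you anticipate, you need measurability of $(\tau,x)\mapsto\E^{\tau,\pi}_x[\cdot]$ even to write the renewal identity, and that has to come from the explicit jump-count series, which is the paper's object. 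The paper's route also delivers the positive operators $\cU^a$ and the Duhamel formula for arbitrary bounded sources at once. That is what the definition of subsolution in (ii) and the later freezing arguments rely on.

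One caution. Your aside in Step 2 is false: the same Volterra argument does not handle a function satisfying only the integral inequality $z(\tau,x)\le z_0(x)+\int_0^\tau\bigl[\cB^h_{a,s}z(s,\cdot)+\varphi(s,\cdot)\bigr](x)\,\dd s$. The map $z\mapsto z_0+\int_0^\tau[\Lambda_h(P_{a,s}z-z)+\varphi]\,\dd s$ is not order preserving. Passing to the integrating-factor form needs time regularity of $z$ (e.g.\ absolute continuity in $\tau$), which an inequality from $\tau=0$ does not give.

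Here is a counterexample. Take $d=1$, $a=0$, $z_0=\varphi=0$, and $D(\tau,x)=\one_{[0,\tau_0)}(\tau)\one_{\{0\}}(x)$. Let $R$ solve $R(\tau)=\int_0^\tau\cB^h_{0,s}\bigl[(R-D)(s,\cdot)\bigr]\,\dd s$, and put $z=R-D$. Then $z(\tau,x)\le\int_0^\tau\cB^h_{0,s}z(s,\cdot)(x)\,\dd s$ everywhere. Yet $z(\tau_0,0)=1-p>0$, where $p<1$ is the probability that the symmetric chain started at $0$ is at $0$ at elapsed time $\tau_0$. This is why the paper defines subsolutions through the $\cU^a$ inequality.

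In Step 3 no harm results, because $V^{h,*}-V^{h,\pi}$ is Lipschitz in $\tau$. It is still cleaner to do as the paper does and keep an equation. The difference $V^{h,*}-V^{h,\pi}$ solves the linear equation frozen at $\pi$ exactly, with zero initial value and source $-\Gamma_f(V^{h,*},\pi)\le0$. Linear uniqueness then gives $V^{h,*}-V^{h,\pi}=-\int_0^\tau\cU^{a_\pi}_{\tau,s}\Gamma_f(V^{h,*},\pi)(s,\cdot)\,\dd s\le0$.
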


\begin{proof}
Fix a drift $a$ with $|a_i|\le2N$, and write $P_s=P_{a,s}$ and
$\Lambda=\Lambda_h$. Conditioning on the number and ordered times of
Poisson jumps gives, for $\psi\in B_b(\R^d)$,
\[
 \cU^a_{\tau,r}\psi
 =e^{-\Lambda(\tau-r)}\sum_{k=0}^{\infty}\Lambda^k
 \int_{r<s_1<\cdots<s_k<\tau}
 P_{s_k}\cdots P_{s_1}\psi\dd s_1\cdots\dd s_k,
\]
The $k=0$ integral is $\psi$. The $k$th summand has norm at most
$(\Lambda T)^k\norm{\psi}_{\infty,\R^d}/k!$.
Thus the series is uniformly convergent
and jointly Borel in $(\tau,r,x)$; its operators are positive and preserve
constants. Termwise integration gives
\[
 \cU^a_{\tau,r}\psi=\psi+\int_r^\tau
 \cB_{a,s}^h\cU^a_{s,r}\psi\dd s.
\]
Consequently, Fubini's theorem shows that
$z(\tau)=\cU^a_{\tau,0}z_0+
\int_0^\tau\cU^a_{\tau,s}\varphi(s,\cdot)\dd s$
solves the linear integral equation. Since
$\norm{\cB_{a,s}^hv}_{\infty,\R^d}
\le2\Lambda\norm{v}_{\infty,\R^d}$, Gronwall's inequality gives
uniqueness. Taking $a=a_\pi$, $z_0=g$, and $\varphi=L_\pi$ yields
\eqref{eq:policy-representation} and the bound $M_V$.
Positivity and preservation of constants give (ii).

For (iii), set
$F_s[v](x)=Nh\Delta_hv(x)+
\min_{u\in U}\{f(s,x,u)\cdot\nabla_0^hv(x)+L(s,x,u)\}$.
Compactness of $U$ and continuity in $u$ make the minimum Borel, and
\[
 \norm{F_s[v]-F_s[w]}_{\infty,\R^d}
 \le2\Lambda\norm{v-w}_{\infty,\R^d}.
\]
Starting with $v_0(\tau,x)=g(x)$, form the pointwise Picard iterates
\[
 v_{m+1}(\tau,x)=g(x)+\int_0^\tau F_s[v_m(s,\cdot)](x)\dd s.
\]
They are jointly Borel, and their successive differences are bounded by
$C(2\Lambda\tau)^m/m!$, where
$C=\sup_{\tau\le T}\norm{v_1(\tau)-v_0(\tau)}_{\infty,\R^d}<\infty$.
Uniform convergence gives a bounded solution $V^{h,*}$; the same Lipschitz
bound gives uniqueness. The objective in the Bellman minimum is jointly
Borel and continuous in $u$, so the measurable minimum theorem provides
$\pi^*$. Linear uniqueness then gives $V^{h,\pi^*}=V^{h,*}$.
For any $\pi$, subtracting the Bellman equation from the policy equation
and freezing at $\pi$ gives zero initial value and a nonnegative source for
$V^{h,\pi}-V^{h,*}$, since the minimum is at most the Hamiltonian at $\pi$.
Linear comparison proves $V^{h,\pi}\ge V^{h,*}$ and hence
\eqref{eq:bellman-verification}.
\end{proof}

\Cref{app:chain-proof} expands the ordered-series construction and its evolution identities.

\subsection{Explicit Poisson-Tail Bound on the Numerical Domain of Dependence}

Let $N_T^{(i)}$ be the number of jumps in coordinate $i$ before elapsed time $T$.  By independent thinning of the rate-$\Lambda_h$ Poisson process,
\begin{equation}
 N_T^{(i)}\sim\Pois\left(\frac{2NT}{h}\right),
 \label{eq:coordinate-poisson}
\end{equation}
independently across $i$, although the signs are state and policy dependent.
Since each coordinate jump has magnitude $h$,
$\sup_{0\le s\le T}|Y_s^{(i)}-x_i|\le hN_T^{(i)}$.
Set
\begin{equation*}
 p_h(R,T):=\Pp\left(
 \Pois\left(\frac{2NT}{h}\right)
 \ge \left\lceil\frac{R}{h}\right\rceil
 \right),
 \qquad
 \Psi_h(R,T):=1-[1-p_h(R,T)]^d.
\end{equation*}

The following estimate controls the displacement of the representation chain
uniformly over policies.
\begin{lemma}
\label{lem:numerical-dependence}
For every Borel feedback, every $x\in\R^d$, and every $R>0$,
\begin{equation*}
 \Pp_x^{\tau,\pi}\left(
 \sup_{0\le s\le\tau}\max_{1\le i\le d}|Y_s^{(i)}-x_i|\ge R
 \right)
 \le \Psi_h(R,T),
 \qquad 0\le\tau\le T.
\end{equation*}
If $R\ge4eNT$, then
\begin{equation*}
 \Psi_h(R,T)\le d\,2^{-R/h}.
\end{equation*}
\end{lemma}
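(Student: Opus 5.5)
The plan has two parts: a domination argument for the probability bound, and a Chernoff estimate for the explicit tail.

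For the first part I will use the uniformized construction of the chain. The jump times form a rate-$\Lambda_h$ Poisson process, and the coordinate marks $I_k$ are i.i.d.\ uniform on $\{1,\ldots,d\}$, independent of the jump times and of the state. Only the signs $\Sigma_k$ depend on the state and the policy. By the thinning property, the counting processes $N^{(i)}_s$ of jumps in coordinate $i$ are therefore independent Poisson processes of rate $\Lambda_h/d=2N/h$, whatever $\pi$ and $x$ are. This gives \eqref{eq:coordinate-poisson}. Since every jump in coordinate $i$ moves $Y^{(i)}$ by exactly $\pm h$, we get $\sup_{s\le\tau}|Y^{(i)}_s-x_i|\le hN^{(i)}_\tau\le hN^{(i)}_T$ pathwise for $\tau\le T$.

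On the event in the statement, some coordinate satisfies $hN^{(i)}_T\ge R$. Because $N^{(i)}_T$ is an integer, this is the same as $N^{(i)}_T\ge\lceil R/h\rceil$. The probability of the union is at most $1-\Pp(\text{all }N^{(i)}_T<\lceil R/h\rceil)$. By independence across $i$, this equals $1-(1-p_h(R,T))^d=\Psi_h(R,T)$. This proves the first inequality uniformly over Borel feedbacks.

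For the explicit bound, I first use $1-(1-p)^d\le dp$ (Bernoulli's inequality), so it suffices to show $p_h(R,T)\le 2^{-R/h}$ when $R\ge 4eNT$. Write $\mu=2NT/h$ and $k=\lceil R/h\rceil$. The Chernoff bound with parameter $\theta=\log(k/\mu)$ gives $\Pp(\Pois(\mu)\ge k)\le e^{-\mu}(e\mu/k)^k\le (e\mu/k)^k$. The hypothesis gives $k\ge R/h\ge 4eNT/h=2e\mu$, so $e\mu/k\le 1/2$. Then $p_h\le 2^{-k}\le 2^{-R/h}$.

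The one delicate step is the independence across coordinates, because the marks are state dependent. I will handle it by observing that the pair (jump time, coordinate) is generated independently of the sign. The sign depends on the past state, but it never affects which coordinate is chosen or when. So $(N^{(1)},\ldots,N^{(d)})$ is a marked Poisson process with i.i.d.\ marks and splits into independent Poisson processes. Two degenerate cases also need a remark. If $\mu=0$ (that is, $T=0$), the probability is trivially zero. If $k/\mu\le1$, the Chernoff choice $\theta=\log(k/\mu)$ would be invalid, but this case is excluded by $k\ge2e\mu$.
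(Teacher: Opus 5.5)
Your proposal is correct and follows the paper's proof essentially step for step. The paper also uses independent thinning into coordinate counts $N^{(i)}_T\sim\Pois(2NT/h)$, the pathwise bound $|Y^{(i)}_s-x_i|\le hN^{(i)}_T$, independence across coordinates to get $\Psi_h$, and then $\Psi_h\le dp_h$ with the Poisson Chernoff bound $(e\theta/m)^m\le 2^{-m}$ for $m\ge 2e\theta$. You are more explicit than the paper about why state-dependent signs leave the coordinate counts independent and about deriving the Chernoff bound, while the paper adds a sharpness remark outside the statement: the bound is attained at $\tau=T$ by the constant drift $a_i\equiv 2N$.
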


\begin{proof}
The first claim follows from \eqref{eq:coordinate-poisson} and independence
of the coordinate counts.  At $\tau=T$, equality holds for the constant
extremal drift $a_i\equiv2N$, for which every jump is positive.  Let
$m=\lceil R/h\rceil$ and $\theta=2NT/h$.  Since
$\Psi_h\le dp_h$, the Poisson Chernoff inequality gives, when
$R\ge4eNT$,
\[
 \Pp(\Pois(\theta)\ge m)
 \le\left(\frac{e\theta}{m}\right)^m
 \le2^{-m}\le2^{-R/h}.
\]
\end{proof}

The chain can make arbitrarily many jumps with positive probability, so its
strict numerical domain of dependence is unbounded.  The preceding estimate
instead gives an explicit exponentially small influence tail beyond an
$O(NT)$ coordinate margin.

\section{Boundary-Free Localization and Bounded Extensions}
\label{sec:localization}

\subsection{Sensitivity to Bounded Coefficient Extensions}

Here, the whole-space chain requires globally bounded coefficients, so unbounded
physical data must be replaced by bounded extensions.  We first quantify how
the value on the inner box depends on this choice.  If two extensions agree on
the outer region specified below, chains driven by a common policy can be
coupled to coincide until they leave that region; only the exit event
contributes to the difference.  For the present boxes,
$Q_L\Subset Q_{L'}^h$ is equivalent to $L+h<L'$, and
$D_{L,L'}=L'-L-h>0$.  Compatibility with the physical problem and the horizon
is addressed in \cref{rem:extension-compatibility}.

Let $(f^{(r)},L^{(r)},g^{(r)})$, $r\in\{1,2\}$, satisfy \cref{ass:bounded-data} with common bounds $M_2,M_L,M_g$ and the same $N$ satisfying \eqref{eq:monotonicity-envelope}.  For a common feedback $\pi$, let $V^{h,\pi,(r)}$ denote the corresponding value.

\begin{theorem}
\label{thm:extension-invariance}
Assume
\begin{equation}
 f^{(1)}=f^{(2)},\quad L^{(1)}=L^{(2)}
 \quad\text{on }[0,T]\times Q_{L'}^h\times U,
 \qquad
 g^{(1)}=g^{(2)}\quad\text{on }Q_{L'}.
 \label{eq:extension-agreement}
\end{equation}
Then every common Borel feedback satisfies
\begin{equation}
 \norm{V^{h,\pi,(1)}-V^{h,\pi,(2)}}_{\infty,[0,T]\times Q_L}
 \le 2M_V\,\Psi_h(D_{L,L'},T).
 \label{eq:extension-invariance}
\end{equation}
The same bound holds for the two optimal value functions.
\end{theorem}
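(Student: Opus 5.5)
We will prove \eqref{eq:extension-invariance} by coupling, using the representation \eqref{eq:policy-representation} from \cref{thm:policy-bellman}(i) together with the displacement estimate of \cref{lem:numerical-dependence}. Fix $(\tau,x)\in[0,T]\times Q_L$ and a common Borel feedback $\pi$. On one probability space we take a single Poisson process of rate $\Lambda_h$, with its coordinate marks $I_k$. For the signs we use i.i.d.\ uniform variables $\xi_k\in(0,1)$ and set
\[
 \Sigma_k^{(r)}=+1 \quad\text{iff}\quad \xi_k\le\frac{N+a^{(r)}_{\pi,I_k}(\tau-S_k,Y^{(r)}_{S_k-})}{2N},
 \qquad a^{(r)}_\pi=f^{(r)}(\cdot,\cdot,\pi).
\]
The total rate $\Lambda_h$ does not depend on the data, so both chains $Y^{(1)},Y^{(2)}$ have the correct laws $\Pp_x^{\tau,\pi}$ under their own coefficients, and this coupling is exact.

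Let $\sigma=\inf\{s\ge0:Y^{(1)}_s\notin Q_{L'}^h\}$. By induction over jump times, the two chains coincide on $[0,\sigma]$, including the state reached at time $\sigma$. Indeed, before $\sigma$ the current state lies in $Q_{L'}^h$, where $f^{(1)}=f^{(2)}$, so the sign probabilities agree and the same $\xi_k$ produces the same jump. A jump from a point of $Q_{L'}^h$ moves by $\pm he_i$ and therefore lands in $Q_{L'}$. Define the event
\[
 E=\Bigl\{\sup_{0\le s\le\tau}\max_i|Y^{(1),i}_s-x_i|\ge D_{L,L'}\Bigr\}.
\]
On $E^c$ the path stays within coordinate distance less than $D_{L,L'}$ of $x\in Q_L$. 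By the definition of $D_{L,L'}$, every coordinate then lies in $\operatorname{proj}_i(Q_{L'}^h)$, so the path remains in $Q_{L'}^h$ and $\sigma>\tau$.

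On $E^c$ the two paths are identical on $[0,\tau]$. Since $L^{(1)}=L^{(2)}$ on $Q_{L'}^h$ and $g^{(1)}=g^{(2)}$ on $Q_{L'}\supset Q_{L'}^h$, the two pathwise costs $g(Y_\tau)+\int_0^\tau L_\pi(\tau-s,Y_s)\,ds$ coincide there. On $E$ each pathwise cost is bounded by $M_g+\tau M_L\le M_V$, so their difference is at most $2M_V$. Taking expectations and applying \cref{lem:numerical-dependence} with $R=D_{L,L'}>0$ gives
\[
 |V^{h,\pi,(1)}(\tau,x)-V^{h,\pi,(2)}(\tau,x)|\le 2M_V\,\Pp(E)\le 2M_V\,\Psi_h(D_{L,L'},T),
\]
uniformly in $\pi$, $\tau$, and $x\in Q_L$.

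For the optimal values we invoke \eqref{eq:bellman-verification}: $V^{h,*,(r)}=\inf_\pi V^{h,\pi,(r)}$, with the infimum over the same class of Borel feedbacks for both $r$. The policy bound is uniform in $\pi$, and an infimum of functions that pairwise differ by at most $\eta$ yields values differing by at most $\eta$. Hence the same bound holds for the optimal value functions.

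The main point of care is the coupling step. We must check that the joint construction is Borel, which holds because the thresholds are Borel in the state. We must also check that agreement holds up to and including the first state outside $Q_{L'}^h$, and that the terminal cost is compared only at states in $Q_{L'}$. The conditions \eqref{eq:extension-agreement} are exactly calibrated for this, since the first jump out of $Q_{L'}^h$ lands in $Q_{L'}$. This refinement is not actually needed, however, because on $E^c$ the path never leaves $Q_{L'}^h$ at all.
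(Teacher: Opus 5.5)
Your proof is correct and is essentially the paper's argument. You couple the two chains through a common Poisson clock and common sign randomness so that they agree until exit from $Q_{L'}^h$, bound the payoff discrepancy by $2M_V$ on the exit event, and invoke \cref{lem:numerical-dependence}; for the optimal values, your infimum over a common policy class is equivalent to the paper's optimal selectors with two one-sided inequalities. One small slip: for the marginal laws to be correct, the sign threshold must be $\bigl(N+a^{(r)}_{\pi,I_k}(\tau-S_k,Y^{(r)}_{S_k-})/2\bigr)/(2N)$, matching $\lambda_i^\pm=(N\pm a_i/2)/h$, rather than $(N+a^{(r)}_{\pi,I_k})/(2N)$.
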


\begin{proof}
Both chains have total rate $\Lambda_h$, and their transition kernels agree on
$Q_{L'}^h$.  Hence they can be coupled to have identical paths until their
common exit time
$\sigma=\inf\{s\ge0:Y_s\notin Q_{L'}^h\}$.
On $\{\sigma>\tau\}$, both running and terminal costs agree.  On $\{\sigma\le\tau\}$, the absolute difference between the two total remaining payoffs is at most $2M_V$.  Hence
\[
 |V^{h,\pi,(1)}(\tau,x)-V^{h,\pi,(2)}(\tau,x)|
 \le2M_V\Pp_x(\sigma\le\tau).
\]
For $x\in Q_L$, exit from $Q_{L'}^h$ requires a coordinate displacement of at least $D_{L,L'}$, so \cref{lem:numerical-dependence} gives \eqref{eq:extension-invariance}.  For the optimal value functions, apply the policy estimate to an optimal selector of each problem and use the two resulting one-sided inequalities.
\end{proof}

To apply the theorem, one needs a bounded extension that remains unchanged on
the outer box, with parameters compatible with the box and horizon.
\begin{remark}
\label[remark]{rem:extension-compatibility}
For $C>0$, let $\operatorname{sat}_C$ denote projection onto $[-C,C]$.
Choose $N$ and $C$ so that $|f_i|\le2N$, $|L|\le C$, and $|g|\le C$ on the
respective agreement sets in \eqref{eq:extension-agreement}.  Then
$f_i^{\ext}=\operatorname{sat}_{2N}(f_i)$,
$L^{\ext}=\operatorname{sat}_C(L)$, and
$g^{\ext}=\operatorname{sat}_C(g)$ are bounded and Borel, preserve the required
$u$-continuity, and satisfy \cref{ass:bounded-data} and
\eqref{eq:monotonicity-envelope}.  These are fixed-$h$ extensions; they need
not preserve the selector regularity required by a separate consistency
theorem.  In particular, saturating the entire running cost can flatten its
control dependence.  For control-affine dynamics with a quadratic control
penalty, one may instead extend only the state-dependent parts, writing
$f^{\ext}=b^{\ext}+Bu$ and
$L^{\ext}=q^{\ext}+u^\top Ru/2$ with $R\succ0$.  This retains a unique
globally Lipschitz minimizing selector on a compact convex control set.

These choices are not independent.  For the one-dimensional drift
$f_{\rm phys}(x)=ax$, $a>0$, agreement on $Q_{L'}^h$ requires
$2N\ge a(L'-h)$.  Combining this with the sufficient tail condition
$D_{L,L'}\ge4eNT$ gives $L\le(1-2eaT)(L'-h)$; hence no admissible pair with
$L>0$ exists when $2eaT\ge1$.  Drift-aware exit estimates, Lyapunov
confinement, or horizon splitting can weaken this restriction for structured
systems.
\end{remark}

\subsection{Localized Policy Class}

Fix a Borel default feedback $\bar u:[0,T]\times\R^d\to U$.  Define
\begin{equation*}
 \Pi_L=
 \left\{
 \pi:[0,T]\times\R^d\to U\text{ Borel}:
 \pi=\bar u\text{ on }[0,T]\times(\R^d\setminus Q_L^h)
 \right\}.
\end{equation*}
The localized optimal value function is
\begin{equation*}
 V_L^{h,*}(\tau,x)=\inf_{\pi\in\Pi_L}V^{h,\pi}(\tau,x).
\end{equation*}
It is the unique bounded pointwise integral solution of the Bellman equation inside $Q_L^h$ and the frozen default-policy equation outside:
\begin{equation}
\begin{aligned}
 \partial_\tau V_L^{h,*}-Nh\Delta_hV_L^{h,*}
 &-\min_{u\in U}\{f(\tau,x,u)\cdot\nabla_0^hV_L^{h,*}+L(\tau,x,u)\}=0,
 &&x\in Q_L^h,\\
 \partial_\tau V_L^{h,*}-Nh\Delta_hV_L^{h,*}
 &-f(\tau,x,\bar u)\cdot\nabla_0^hV_L^{h,*}-L(\tau,x,\bar u)=0,
 &&x\notin Q_L^h,
\end{aligned}
\label{eq:localized-bellman-equation}
\end{equation}
with initial value $g$.  Its right-hand side is jointly Borel and
$2\Lambda_h$-Lipschitz in the supremum norm. The proof of
\cref{thm:policy-bellman} applies with a minimizing selector on $Q_L^h$
and the fixed feedback $\bar u$ outside. In particular,
\begin{equation}
 V_L^{h,*}\le V^{h,\pi}
 \qquad(\pi\in\Pi_L).
 \label{eq:localized-order}
\end{equation}

Since any two policies in $\Pi_L$ coincide with $\bar u$ outside $Q_L^h$, the source of their frozen difference equation vanishes there; comparison then bounds policy differences entirely through interior Hamiltonian data.

\subsection{Coupling Localization of the Optimal Value Function}

Recall that $K\Subset Q_L^h$ is the target region and $D_L>0$ is its
coordinatewise margin to the complement of $Q_L^h$.  A direct comparison of
the two equations would require an exterior source involving the default
policy and the crude bound
$\norm{\nabla_0^hV^{h,*}}_{\infty,[0,T]\times\R^d}=O(h^{-1})$.  The coupling below avoids this
factor and depends only on the payoff range and the exit probability.
\begin{theorem}
\label{thm:policy-localization}
For every $(\tau,x)\in[0,T]\times K$,
\begin{equation}
 0\le V_L^{h,*}(\tau,x)-V^{h,*}(\tau,x)
 \le 2M_V\,\Psi_h(D_L,T).
 \label{eq:policy-localization}
\end{equation}
In particular, if $D_L\ge4eNT$, then
\begin{equation*}
 0\le V_L^{h,*}-V^{h,*}
 \le2dM_V2^{-D_L/h}
 \quad\text{on }[0,T]\times K.
\end{equation*}
\end{theorem}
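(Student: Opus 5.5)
The lower bound is immediate. Every $\pi\in\Pi_L$ is a Borel feedback, so \eqref{eq:bellman-verification} gives $V^{h,*}\le V^{h,\pi}$. Taking the infimum over $\Pi_L$ yields $V^{h,*}\le V_L^{h,*}$ everywhere, and in particular on $[0,T]\times K$.

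For the upper bound I will not compare the two Bellman equations directly; that would bring in the exterior source with $\nabla_0^hV^{h,*}=O(h^{-1})$. Instead I use a coupling in the spirit of \cref{thm:extension-invariance}. Let $\pi^*$ be the Borel optimal selector from \cref{thm:policy-bellman}(iii), so that $V^{h,\pi^*}=V^{h,*}$. Define the spliced feedback
\[
\tilde\pi:=\pi^*\one_{Q_L^h}+\bar u\,\one_{\R^d\setminus Q_L^h}.
\]
It is Borel and belongs to $\Pi_L$, so \eqref{eq:localized-order} gives $V_L^{h,*}\le V^{h,\tilde\pi}$. It therefore suffices to show
\[
V^{h,\tilde\pi}(\tau,x)-V^{h,\pi^*}(\tau,x)\le 2M_V\Psi_h(D_L,T)
\]
for $(\tau,x)\in[0,T]\times K$.

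Fix such $(\tau,x)$. Both chains start at $x$, use the same rate-$\Lambda_h$ Poisson clock and the same uniform coordinate marks $I_k$, and draw their signs from a common uniform variable $W_k$. Concretely, set $\Sigma_k=+1$ iff
\[
W_k\le \frac{N+a_{\cdot,I_k}(\tau-S_k,Y_{S_k-})/2}{2N}.
\]
Each chain then has the law $\Pp_x^{\tau,\cdot}$ of the construction in \cref{sec:chain}. The two drifts, and hence the sign probabilities, agree whenever the current state lies in $Q_L^h$, and the running costs agree there too. Consequently the paths coincide up to and including the first jump that lands outside $Q_L^h$. Let
\[
\sigma=\inf\{s\ge 0: Y_s\notin Q_L^h\}
\]
be the common exit time. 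On $\{\sigma>\tau\}$ the two payoffs $g(Y_\tau)+\int_0^\tau L(\tau-s,Y_s)\,\dd s$ coincide. On $\{\sigma\le\tau\}$ each payoff has absolute value at most $M_g+\tau M_L\le M_V$, so their difference is at most $2M_V$. Taking expectations in \eqref{eq:policy-representation} gives
\[
V^{h,\tilde\pi}(\tau,x)-V^{h,*}(\tau,x)\le 2M_V\,\Pp(\sigma\le\tau).
\]

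The main step needing care is justifying that this pathwise coupling realizes both laws. The sign choice depends on the current state and on the remaining time $\tau-s$, so a common-uniform construction is needed rather than independent draws. One must also check that agreement of the kernels $P_{a,\cdot}(y,\cdot)$ for $y\in Q_L^h$ is all that is used; this holds because the chains are identical up to the exit jump. I expect this to be routine given the explicit construction.

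Finally, since $x\in K$, leaving $Q_L^h$ forces some coordinate to move at least the coordinatewise distance from $x_i$ to $\R\setminus\operatorname{proj}_i(Q_L^h)$, which is $\ge D_L$. Hence
\[
\{\sigma\le\tau\}\subset\Bigl\{\sup_{s\le\tau}\max_i|Y_s^{(i)}-x_i|\ge D_L\Bigr\}.
\]
\Cref{lem:numerical-dependence}, which holds uniformly over Borel feedbacks (in particular for $\pi^*$), bounds this probability by $\Psi_h(D_L,T)$, giving \eqref{eq:policy-localization}. The explicit estimate under $D_L\ge4eNT$ then follows from the second part of \cref{lem:numerical-dependence}, namely $\Psi_h\le d\,2^{-D_L/h}$.
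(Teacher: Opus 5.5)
Your proposal is correct and follows essentially the paper's proof. The lower bound comes from $\Pi_L$ being a subclass of all Borel feedbacks. For the upper bound you compare the global optimal selector $\pi^*$ with its truncation $\tilde\pi$ (the paper's $\pi^L$), couple the two chains until their common exit from $Q_L^h$, bound the payoff difference by $2M_V$ on the exit event, and apply \cref{lem:numerical-dependence}. Your explicit construction, with a common Poisson clock, common coordinate marks and a common uniform variable for the signs, spells out the coupling that the paper only asserts.
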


\begin{proof}
The lower bound follows from $\Pi_L$ being a subclass of all Borel feedbacks.  Let $\pi^*$ be a global optimal selector and define its localized truncation by
\[
 \pi^L(\tau,x)=
 \begin{cases}
   \pi^*(\tau,x), & x\in Q_L^h,\\
   \bar u(\tau,x), & x\notin Q_L^h.
\end{cases}
\]
The chains under $\pi^*$ and $\pi^L$ have the same total rate and transition
kernels while they remain in $Q_L^h$.  Hence they can be coupled to have
identical paths until their common first exit from $Q_L^h$.  Thus
\[
 V_L^{h,*}-V^{h,*}
 \le V^{h,\pi^L}-V^{h,\pi^*}
 \le2M_V\Pp_x\{\text{exit }Q_L^h\text{ before }\tau\}.
\]
For $x\in K$, this exit requires a coordinate displacement of at least $D_L$.
Apply \cref{lem:numerical-dependence}.
\end{proof}

\section{Learned-Dynamics Semi-Discrete Policy Iteration}
\label{sec:algorithm}

\subsection{Learned Model and Robust Monotonicity}

When the dynamics are known, we set $\wt f_n=f$, omit the identification
step, and set all model-error envelopes to zero. Otherwise, at iteration
$n$, we form identification data
$\mathcal D_n^{\rm id}=\{(\tau_j,x_j,u_j,y_j)\}_{j=1}^{M_n}$ with
$y_j\approx f(\tau_j,x_j,u_j)$.  Labels may be direct vector-field queries
or difference quotients from short controlled transitions.  A parametric model
$\wt f_{\theta}$ can be fitted by supervised regression, for example by
minimizing $M_n^{-1}\sum_j|\wt f_{\theta}(\tau_j,x_j,u_j)-y_j|^2$.
Denote the resulting model by
$\wt f_n:[0,T]\times\R^d\times U\to\R^d$ and assume it is jointly Borel
and continuous in $u$ for each $(\tau,x)$.
The model-error bounds below include observation noise and
time-discretization bias.
Define the pointwise componentwise all-control model-error envelope on a
domain $D$ by
\begin{equation*}
 \eta_{n,\comp}^{\rm all}(D)
 :=\sup_{\tau\in[0,T]}\sup_{x\in D}\sup_{u\in U}
 \max_{1\le i\le d}|\wt f_{n,i}(\tau,x,u)-f_i(\tau,x,u)|.
\end{equation*}
A sufficient condition for pointwise monotonicity of both the learned and
true operators on $[0,T]\times D$ is
\begin{equation}
 \sup_{\tau\in[0,T]}\sup_{x\in D}\sup_{u\in U}
 \max_{1\le i\le d}|\wt f_{n,i}(\tau,x,u)|
 +\eta_{n,\comp}^{\rm all}(D)
 \le2N.
 \label{eq:robust-monotonicity}
\end{equation}
Indeed, $|f_i|\le|\wt f_{n,i}|+\eta_{n,\comp}^{\rm all}(D)\le2N$,
so both sets of neighbor rates are nonnegative. With known dynamics this
reduces to \eqref{eq:monotonicity-envelope}.

For Hamiltonian perturbations, define the Euclidean envelopes
\begin{align}
 \eta_n^{\pi}
 &:={\esssup}_{\tau\in(0,T)}\sup_{x\in Q_{L'}^h}
 \abs{\wt f_n(\tau,x,\pi_n(\tau,x))-f(\tau,x,\pi_n(\tau,x))},
 \nonumber\\
 \bar\eta_n
 &:={\esssup}_{\tau\in(0,T)}\sup_{x\in Q_L^h}\sup_{u\in U}
 \abs{\wt f_n(\tau,x,u)-f(\tau,x,u)}.
 \nonumber
\end{align}
The bound $\eta_n^\pi$ enters the residual estimate
\eqref{eq:residual-transfer}, while $\bar\eta_n$ enters the
greedy-gap estimate in \cref{cor:true-gap}.

\subsection{Approximate Policy Evaluation and Improvement}

For the current policy $\pi_n$ and model $\wt f_n$, let $\wt V_n$
denote the neural candidate produced by approximate policy evaluation on
$(0,T)\times Q_{L'}^h$.  It is trained with the learned-model frozen-policy
residual and is intended to approximate the true-model value $V^{h,\pi_n}$;
the quantities below measure the resulting evaluation error.

\begin{assumption}[Regularity of $\wt V_n$]
\label[assumption]{ass:candidate}
For each $n$, $\wt V_n$ is continuous on
$[0,T]\times\overline Q_{L'}$ and absolutely continuous in $\tau$ for every fixed $x$.  The quantities $\partial_\tau\wt V_n$, $\nabla_0^h\wt V_n$, and $\Delta_h\wt V_n$ are bounded and jointly Borel on $(0,T)\times Q_{L'}^h$.
\end{assumption}

The learned-model residual for policy $\pi_n\in\Pi_L$ is
\begin{equation}
 \wt\cR_{n,h}[\wt V_n]
 :=\partial_\tau\wt V_n
 -\wt f_n(\tau,x,\pi_n)\cdot\nabla_0^h\wt V_n
 -Nh\Delta_h\wt V_n
 -L(\tau,x,\pi_n).
 \label{eq:learned-residual}
\end{equation}
The residual with respect to the true frozen-policy operator is
\begin{equation*}
 q_n
 :=\partial_\tau\wt V_n
 -f(\tau,x,\pi_n)\cdot\nabla_0^h\wt V_n
 -Nh\Delta_h\wt V_n
 -L(\tau,x,\pi_n).
\end{equation*}
Also set
\begin{equation*}
 r_n(x)=\wt V_n(0,x)-g(x),
 \qquad
 \cA_n=\norm{\wt V_n}_{\infty,[0,T]\times\partial_hQ_{L'}}.
\end{equation*}
Subtracting the learned- and true-model residuals yields
\begin{align}
 \ptesssup{q_n}{Q_{L'}^h}
 \le{}&
 \ptesssup{\wt\cR_{n,h}[\wt V_n]}{Q_{L'}^h}
 \nonumber\\
 &+\eta_n^\pi\ptesssup{\nabla_0^h\wt V_n}{Q_{L'}^h}.
 \label{eq:residual-transfer}
\end{align}

Policy improvement is measured by the Hamiltonian greedy gap.  For a model
$F$, a function $v$, and a Borel policy $\pi$, define
\begin{align}
 \Gamma_F(v,\pi)(\tau,x)
 &:=[F(\tau,x,\pi(\tau,x))\cdot\nabla_0^hv(\tau,x)
      +L(\tau,x,\pi(\tau,x))]
 \nonumber\\
 &\quad-
 \min_{u\in U}\{F(\tau,x,u)\cdot\nabla_0^hv(\tau,x)+L(\tau,x,u)\}.
 \nonumber
\end{align}
It is nonnegative and remains well defined when the minimizer is nonunique.
For a scalar tolerance $\cG_n\ge0$, the implemented improvement satisfies
\begin{equation}
 0\le\Gamma_{\wt f_n}(\wt V_n,\pi_{n+1})\le\cG_n
 \quad\text{on }(0,T)\times Q_L^h,
 \qquad
 \pi_{n+1}=\bar u\quad\text{outside }Q_L^h.
 \label{eq:inexact-improvement}
\end{equation}
Since the learned objective is Borel in $(\tau,x)$ and continuous in $u$, a Borel exact selector exists; measurable $\cG_n$-inexact selectors can likewise be chosen.

\subsection{Algorithm}

Algorithm~\ref{alg:csdpi} summarizes the iteration. Evaluation uses
$Q_{L'}^h$, improvement uses $Q_L^h$, and the final error is measured on $K$.
Validation requires uniform bounds; \cref{sec:validation} gives conditions
for obtaining them from finite samples and a verified modulus of continuity.

\begin{algorithm}[!htbp]
\caption{Semi-discrete policy iteration with known or learned dynamics}
\label{alg:csdpi}
\begin{algorithmic}[1]
\Require shift size $h>0$; parameter $N>0$; target set and boxes
$K\Subset Q_L^h$ and $Q_L\Subset Q_{L'}^h$; default policy $\bar u$;
outer iterations $N_{\PI}$
\Require known dynamics $f$ or identification data
$\{\cD_n^{\rm id}\}_{n=0}^{N_{\PI}-1}$; training data and independent
validation data
\State Set $\pi_0=\bar u$.
\For{$n=0,\ldots,N_{\PI}-1$}
    \State \textbf{Model update:}
    set $\wt f_n=f$ if the dynamics are known; otherwise fit
    $\wt f_n$ from $\cD_n^{\rm id}$ and verify
    \eqref{eq:robust-monotonicity}.
    \State \textbf{Policy evaluation:}
    train $\wt V_n$ on $(0,T)\times Q_{L'}^h$ using
    \eqref{eq:learned-residual} and the initial condition at $\tau=0$.
    \State \textbf{Evaluation validation:}
    bound the mixed norms of the residual \eqref{eq:learned-residual}
    and $\nabla_0^h\wt V_n$, the initial mismatch
    $\norm{r_n}_{\infty,Q_{L'}}$, the strip amplitude $\cA_n$,
    and the model error $\eta_n^\pi$.
    \State \textbf{Policy improvement:}
    compute $\pi_{n+1}$ satisfying
    \eqref{eq:inexact-improvement} on $Q_L^h$ and set
    $\pi_{n+1}=\bar u$ outside $Q_L^h$.
    \State \textbf{Improvement validation:}
    bound the implemented greedy gap $\cG_n$ and the all-control
    model error $\bar\eta_n$.
\EndFor
\end{algorithmic}
\end{algorithm}

\FloatBarrier

\section{A Posteriori Error Analysis of Approximate Policy Iteration}
\label{sec:pi}

The validated quantities from the preceding section control two stages of the
iteration.  We first bound the neural approximation error for the current
policy using the residual, initial mismatch, model error, and outer-strip
amplitude.  We then transfer the learned greedy gap to the true dynamics and
propagate the resulting defect through policy iteration.

\subsection{Policy-Evaluation Certificate}

For the remainder of the fixed-$h$ analysis, assume \cref{ass:bounded-data,ass:candidate}, \eqref{eq:monotonicity-envelope}, and $\pi_n\in\Pi_L$.  Define
\begin{equation*}
 V_n^h:=V^{h,\pi_n},
 \qquad
 \xi_n:=\norm{\wt V_n-V_n^h}_{\infty,[0,T]\times Q_L}.
\end{equation*}

The next estimate uses only the candidate on $Q_{L'}$ and its boundary strip.
\begin{theorem}
\label{thm:evaluation-certificate}
The neural policy-evaluation error satisfies
\begin{align}
 \xi_n
 \le{}&
 \norm{r_n}_{\infty,Q_{L'}}
 +T\ptesssup{q_n}{Q_{L'}^h}
 +(\cA_n+M_V)\Psi_h(D_{L,L'},T)
 \nonumber\\
 \le{}&
 \norm{r_n}_{\infty,Q_{L'}}
 +T\Bigl[
 \ptesssup{\wt\cR_{n,h}[\wt V_n]}{Q_{L'}^h}
 +\eta_n^\pi\ptesssup{\nabla_0^h\wt V_n}{Q_{L'}^h}
 \Bigr]
 \nonumber\\
 &+(\cA_n+M_V)\Psi_h(D_{L,L'},T).
 \label{eq:evaluation-certificate-learned}
\end{align}
\end{theorem}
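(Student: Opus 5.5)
We plan to compare $\wt V_n$ with $V_n^h$ through the uniformized chain under the true drift $a_{\pi_n}$, stopped on exit from the evaluation region $Q_{L'}^h$. Fix $(\tau,x)\in[0,T]\times Q_L$ and let $Y$ be the chain of \cref{thm:policy-bellman} started at $x$ with policy $\pi_n$. Set
\[
 \sigma=\inf\{s\ge0:Y_s\notin Q_{L'}^h\},
\]
and note $Y_\sigma\in Q_{L'}\setminus Q_{L'}^h=\partial_hQ_{L'}$. This holds because one jump from a point of $Q_{L'}^h$ lands in $Q_{L'}$ by definition of $Q_{L'}^h$, so on $\{\sigma\le\tau\}$ the candidate is evaluated only where it is defined.

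\textbf{Step 1 (Dynkin identity).} We will establish the stopped representation
\[
 \wt V_n(\tau,x)=\E_x^{\tau,\pi_n}\Bigl[\wt V_n\bigl(\tau-\sigma\wedge\tau,Y_{\sigma\wedge\tau}\bigr)-\int_0^{\sigma\wedge\tau}\bigl(q_n+L_{\pi_n}\bigr)(\tau-s,Y_s)\dd s\Bigr].
\]
The route is to apply the compensator formula for the jump process to $s\mapsto\wt V_n(\tau-s,Y_s)$ on $[0,\sigma\wedge\tau]$. Between jumps we use absolute continuity in $\tau$ from \cref{ass:candidate}, and at jumps the compensator is $\cB^h_{a_{\pi_n},\tau-s}\wt V_n$ via \eqref{eq:generator-rates}. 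Before $\sigma$ the generator queries only points of $Q_{L'}$, so the definition of $q_n$ converts $-\partial_\tau\wt V_n+\cB^h\wt V_n$ into $-q_n-L_{\pi_n}$. Boundedness of $\partial_\tau\wt V_n$ and of the shifted quantities, together with the $\Lambda_h$-bounded jump count, makes the martingale a true martingale.

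\textbf{Main obstacle.} This step is where we expect the real difficulty. Since $q_n$ is controlled only by an essential supremum in time, we must argue that $\int_0^{\sigma\wedge\tau}q_n(\tau-s,Y_s)\dd s$ is unaffected by a time-null set. This works because $Y$ is piecewise constant: on each inter-jump interval the integrand is $q_n(\tau-s,y)$ with $y$ fixed, and the pointwise spatial supremum bound holds for a.e.\ $s$. This is exactly why the spatial norm is pointwise. We would first try conditioning on the jump skeleton and applying the deterministic fundamental theorem of calculus on each interval.

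\textbf{Step 2 (subtracting the value).} From \eqref{eq:policy-representation} and the strong Markov property at $\sigma\wedge\tau$, we will write
\[
 V_n^h(\tau,x)=\E\Bigl[V_n^h(\tau-\sigma\wedge\tau,Y_{\sigma\wedge\tau})+\int_0^{\sigma\wedge\tau}L_{\pi_n}\dd s\Bigr].
\]
Subtracting the identity of Step 1 splits the difference according to whether the chain has exited.
\begin{itemize}
\item On $\{\sigma>\tau\}$ the terminal term is $r_n(Y_\tau)$ with $Y_\tau\in Q_{L'}$, giving $|r_n|\le\norm{r_n}_{\infty,Q_{L'}}$.
\item The residual integral is at most $T\ptesssup{q_n}{Q_{L'}^h}$.
\item On $\{\sigma\le\tau\}$ the boundary term is bounded by $|\wt V_n|+|V_n^h|\le\cA_n+M_V$, using \cref{thm:policy-bellman}(i).
\end{itemize}

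\textbf{Step 3 (exit probability).} For $x\in Q_L$, exit from $Q_{L'}^h$ before $\tau$ requires a coordinate displacement of at least $D_{L,L'}$. \Cref{lem:numerical-dependence} then bounds $\Pp(\sigma\le\tau)$ by $\Psi_h(D_{L,L'},T)$, uniformly in the policy. Taking the supremum over $(\tau,x)$ gives the first inequality. The second inequality is then immediate from the residual transfer bound \eqref{eq:residual-transfer}.
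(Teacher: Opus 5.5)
Your proposal is correct and follows the paper's proof essentially step by step: you use the stopped Dynkin identity for $\wt V_n$ along the true-drift chain up to exit from $Q_{L'}^h$, the strong-Markov exit identity for $V_n^h$, the three-term split with $Y_\sigma\in\partial_hQ_{L'}$, the policy-uniform Poisson-tail bound of \cref{lem:numerical-dependence}, and \eqref{eq:residual-transfer} for the second inequality. One slip needs fixing: since $\partial_\tau\wt V_n-\cB^h_{a_{\pi_n},\tau-s}\wt V_n=q_n+L_{\pi_n}$, your Step 1 display should read $+\int_0^{\sigma\wedge\tau}(q_n+L_{\pi_n})(\tau-s,Y_s)\dd s$, and only with this sign do the $L_{\pi_n}$ terms cancel when you subtract the Step 2 identity.
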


\begin{proof}
Fix $(\tau,x)\in[0,T]\times Q_L$ and let $Y$ be the reverse-time
representation chain of the true frozen policy $\pi_n$.  Set
$\sigma=\inf\{s\ge0:Y_s\notin Q_{L'}^h\}$.
Between jumps, $\wt V_n(\tau-s,Y_s)$ is absolutely continuous with
derivative $-\partial_\tau\wt V_n$, while the jump increments have
compensator $\cB_{a_{\pi_n},\tau-s}^h\wt V_n\dd s$. Subtracting the
resulting finite-variation term gives a martingale up to $\sigma$.
The chain stays on the countable set
$x+h\mathbb Z^d$, so the time derivatives may be used outside a common
null set. All candidate values and drift terms up to the exit jump are
bounded. Stop first at the $m$th jump; the finite jump rate and boundedness
justify passing to the limit and taking expectations at $\sigma\wedge\tau$.
By the strong Markov property, \eqref{eq:policy-representation} gives the
corresponding exit-time identity for $V_n^h$. Subtracting it and writing
$z=\wt V_n-V_n^h$ yields
\begin{align}
 z(\tau,x)=\E_x^{\tau,\pi_n}\Bigl[
 &r_n(Y_\tau)\one_{\{\sigma>\tau\}}
 +z(\tau-\sigma,Y_\sigma)\one_{\{\sigma\le\tau\}}
 \nonumber\\
 &+\int_0^{\sigma\wedge\tau}q_n(\tau-s,Y_s)\dd s
 \Bigr].
 \label{eq:stopped-evaluation-identity}
\end{align}
On $\{\sigma>\tau\}$, $Y_\tau\in Q_{L'}$.  For $s<\sigma$,
$Y_s\in Q_{L'}^h$.  Since every neighbor of a point in $Q_{L'}^h$ belongs
to $Q_{L'}$, the first exit location satisfies
$Y_\sigma\in\partial_hQ_{L'}$, and therefore
$|z(\tau-\sigma,Y_\sigma)|\le\cA_n+M_V$.
The exit event requires a coordinate displacement of at least $D_{L,L'}$.  Apply \cref{lem:numerical-dependence}, take absolute values, and then take the supremum over $(\tau,x)$.  The second inequality follows from \eqref{eq:residual-transfer}.
\end{proof}

A final candidate can also be checked directly, without propagating the
preceding policy-iteration defects.  For a function $W$ with the regularity
specified in \cref{ass:candidate}, define its true Bellman residual
\[
 \cR_{B,h}[W]
 :=\partial_\tau W-Nh\Delta_hW
 -\min_{u\in U}\{f(\tau,x,u)\cdot\nabla_0^hW+L(\tau,x,u)\}.
\]
Set
\begin{align*}
 \mathfrak C_{L'}(W):={}&\norm{W(0,\cdot)-g}_{\infty,Q_{L'}}
 +T\ptesssup{\cR_{B,h}[W]}{Q_{L'}^h}\\
 &+\bigl(\norm{W}_{\infty,[0,T]\times\partial_hQ_{L'}}+M_V\bigr)
 \Psi_h(D_{L,L'},T).
\end{align*}
The same stopped comparison gives a direct bound for the final candidate.
\begin{corollary}
\label{cor:direct-bellman}
\[
 \norm{W-V^{h,*}}_{\infty,[0,T]\times Q_L}\le \mathfrak C_{L'}(W).
\]
\end{corollary}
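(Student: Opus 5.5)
Since the Bellman equation is nonlinear, I will not compare $W$ with $V^{h,*}$ through a single chain. Instead I will use two frozen policies, one for each one-sided bound, and run the stopped argument of \cref{thm:evaluation-certificate} for each, with $W$ in place of $\wt V_n$ and the true dynamics in place of $\wt f_n$. Write $\rho:=\cR_{B,h}[W]$.

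For the lower bound on $W-V^{h,*}$, let $\pi^*$ be the Borel optimal selector from \cref{thm:policy-bellman}(iii), so that $V^{h,\pi^*}=V^{h,*}$. The frozen residual of $W$ under $\pi^*$ is
\[
q^*:=\partial_\tau W-a_{\pi^*}\cdot\nabla_0^hW-Nh\Delta_hW-L_{\pi^*}.
\]
Because the minimum is at most the Hamiltonian evaluated at $\pi^*$, we have $q^*\le\rho$. The stopped identity \eqref{eq:stopped-evaluation-identity} for the chain of $\pi^*$, with exit time $\sigma$ from $Q_{L'}^h$, then gives
\[
W-V^{h,*}\le \norm{W(0,\cdot)-g}_{\infty,Q_{L'}}+T\ptesssup{\rho}{Q_{L'}^h}+(\norm{W}_{\infty,[0,T]\times\partial_hQ_{L'}}+M_V)\Psi_h(D_{L,L'},T).
\]
It is only a one-sided bound, because we control $q^*$ only from above. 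The justification of the identity carries over verbatim: $W$ has the regularity of \cref{ass:candidate}, the compensator is computed as before, and we localize at the $m$th jump and let $m\to\infty$. The exit term is bounded by $\norm{W}_{\infty,[0,T]\times\partial_hQ_{L'}}+M_V$, using $|V^{h,*}|\le M_V$, and \cref{lem:numerical-dependence} bounds the exit probability.

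For the upper bound on $W-V^{h,*}$, I will take a Borel selector $\pi_W$ of $\argmin_u\{f\cdot\nabla_0^hW+L\}$ on $(0,T)\times Q_{L'}^h$. It exists by the measurable minimum theorem, since $\nabla_0^hW$ is jointly Borel there. Outside that set I extend it by any Borel feedback. The frozen residual of $W$ under $\pi_W$ is then exactly $\rho$ on $Q_{L'}^h$. The stopped identity for the $\pi_W$-chain bounds $|W-V^{h,\pi_W}|$ by $\mathfrak C_{L'}(W)$, using $|V^{h,\pi_W}|\le M_V$ from \cref{thm:policy-bellman}(i). Combining this with $V^{h,*}\le V^{h,\pi_W}$ from \eqref{eq:bellman-verification} gives
\[
V^{h,*}-W\le V^{h,\pi_W}-W\le\mathfrak C_{L'}(W).
\]

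Taking the two bounds together and the supremum over $[0,T]\times Q_L$ proves the claim.

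The main obstacle is making sure the stopped identity only needs $W$ on $[0,T]\times\overline Q_{L'}$. Before the exit jump the chain sees only points of $Q_{L'}^h$ and their neighbors, which lie in $Q_{L'}$, so no policy values outside $Q_{L'}^h$ enter the residual integral. The policy outside $Q_{L'}^h$ affects only the law after $\sigma$, and that part is absorbed by the crude exit bound. A second point to check is the measurability of the selector's domain. Both $a_{\pi_W}$ and the time-only essential supremum are fine, since $\rho$ is defined pointwise in $x$.
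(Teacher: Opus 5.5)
Your proof is correct and is essentially the paper's argument. Inserting the optimal selector $\pi^*$ into the stopped identity and using $\Gamma_f(W,\pi^*)\ge0$ gives $W-V^{h,*}\le\mathfrak C_{L'}(W)$; a $W$-greedy Borel selector $\pi_W$ (exact frozen residual) together with $V^{h,\pi_W}\ge V^{h,*}$ gives the reverse inequality. The differences are cosmetic: your labels ``lower''/``upper'' are swapped relative to the inequalities you actually derive, and the paper extends $\pi_W$ by $\bar u$ outside $Q_{L'}^h$, which, as you note, makes no difference.
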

\begin{proof}
Write $r=\cR_{B,h}[W]$. For any policy $\pi$, the frozen residual of $W$
is $r-\Gamma_f(W,\pi)$. Using an optimal selector $\pi^*$ from
\cref{thm:policy-bellman}, the stopped argument above and
$\Gamma_f(W,\pi^*)\ge0$ give $W-V^{h,*}\le\mathfrak C_{L'}(W)$.
Next choose a Borel minimizing selector $\pi_W$ for $W$ on $Q_{L'}^h$
and set it equal to $\bar u$ outside. Its frozen residual is $r$, so
the same argument gives $|W-V^{h,\pi_W}|\le\mathfrak C_{L'}(W)$.
Since $V^{h,\pi_W}\ge V^{h,*}$, the reverse bound follows.
\end{proof}
If the residual is formed with a learned model $\wt f$, its true-residual
term is bounded by the learned-residual term plus
$\eta_Q\ptesssup{\nabla_0^hW}{Q_{L'}^h}$, where
$\eta_Q:=\esssup_\tau\sup_{x\in Q_{L'}^h,u\in U}|\wt f-f|$,
by the Lipschitz bound for the minimum in its drift argument.
\Cref{app:stopped-proof} also bounds the cost of an approximately greedy policy extracted
from $W$.

\subsection{Transfer of the Greedy Gap}

Two estimates transfer Hamiltonian gaps between value functions and dynamics
models.
\begin{lemma}
\label{lem:gap-stability}
Let $F$ satisfy $\sup_{\tau,x,u}|F(\tau,x,u)|\le M_F$.  For bounded functions $v,w$ and a policy $\pi$,
\begin{equation}
 \Gamma_F(w,\pi)
 \le\Gamma_F(v,\pi)+2M_F|\nabla_0^h(w-v)|.
 \label{eq:gap-value-stability}
\end{equation}
For two models $F_1,F_2$ and fixed $v$,
\begin{equation}
 \Gamma_{F_1}(v,\pi)
 \le\Gamma_{F_2}(v,\pi)
 +2\sup_{u\in U}|F_1(\tau,x,u)-F_2(\tau,x,u)|\,|\nabla_0^hv|.
 \label{eq:gap-model-stability}
\end{equation}
\end{lemma}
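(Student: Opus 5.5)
The plan is a pointwise argument at fixed $(\tau,x)$, treating the greedy gap as a difference of two terms. The first term is a policy-evaluated affine function of the gradient; the second is a minimum over $U$ of affine functions of the gradient. Both estimates follow from elementary facts about minima, so no earlier result is needed beyond the definition of $\Gamma_F$. Throughout, write $p=\nabla_0^hv(\tau,x)$, $q=\nabla_0^hw(\tau,x)$, and $H_F(p;u)=F(\tau,x,u)\cdot p+L(\tau,x,u)$. Then $\Gamma_F(v,\pi)=H_F(p;\pi)-\min_u H_F(p;u)$. The minimum exists by compactness of $U$ and continuity in $u$, but only infima are needed below.

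For \eqref{eq:gap-value-stability}, I would split
\[
\Gamma_F(w,\pi)-\Gamma_F(v,\pi)=[H_F(q;\pi)-H_F(p;\pi)]-[\min_uH_F(q;u)-\min_uH_F(p;u)].
\]
The first bracket equals $F(\tau,x,\pi)\cdot(q-p)$, so by Cauchy--Schwarz its absolute value is at most $M_F|q-p|$. For the second bracket I would use the standard inequality
\[
\bigl|\inf_u\phi(u)-\inf_u\psi(u)\bigr|\le\sup_u|\phi(u)-\psi(u)|.
\]
Applied to $\phi=H_F(q;\cdot)$ and $\psi=H_F(p;\cdot)$, where $|\phi(u)-\psi(u)|=|F(\tau,x,u)\cdot(q-p)|\le M_F|q-p|$, this bounds the second bracket by $M_F|q-p|$ as well. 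Since $q-p=\nabla_0^h(w-v)$ by linearity of the centered difference, summing the two bounds gives the factor $2M_F$.

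For \eqref{eq:gap-model-stability}, I would keep $p$ fixed and change the model. The first bracket becomes $[F_1(\tau,x,\pi)-F_2(\tau,x,\pi)]\cdot p$, which is bounded by $\delta|p|$, where $\delta:=\sup_u|F_1-F_2|$ at $(\tau,x)$. The two minima differ by at most $\sup_u|(F_1-F_2)\cdot p|\le\delta|p|$, again by the inf inequality, and the total gives $2\delta|p|$. The running cost $L$ cancels in both differences because it does not depend on $F$ or on the gradient.

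No step is a genuine obstacle. The only points needing care are that the inf inequality must be applied to infima over all of $U$, not over a selected minimizer, which keeps the argument valid when the minimizer is nonunique; that $\delta$ is a supremum of finite quantities because both models are assumed bounded; and that the bounds are pointwise, so measurability of $\pi$ plays no role.
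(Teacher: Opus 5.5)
Your proof is correct and matches the paper's argument. At fixed $(\tau,x)$, both the policy term $H_F(p;\pi)$ and the minimum $\min_u H_F(p;u)$ are $M_F$-Lipschitz in $p$, which gives the factor $2M_F$, and changing the model perturbs each of the two terms by at most $\sup_u|F_1-F_2|\,|p|$; your explicit inequality $|\inf\phi-\inf\psi|\le\sup|\phi-\psi|$ is the justification the paper leaves implicit.
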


\begin{proof}
At fixed $(\tau,x)$, write $H_F(u,p)=F(\tau,x,u)\cdot p+L(\tau,x,u)$.  Both $H_F(\pi,p)$ and $\min_uH_F(u,p)$ are $M_F$-Lipschitz in $p$, proving \eqref{eq:gap-value-stability}.  Replacing $F_2$ by $F_1$ changes each of the two terms defining the gap by at most $\sup_u|F_1-F_2||p|$, which gives \eqref{eq:gap-model-stability}.
\end{proof}

Define the true gap of the new policy relative to the exact current policy
value by
\[
 \bar g_n:=\ptesssup{\Gamma_f(V_n^h,\pi_{n+1})}{Q_L^h}.
\]
The preceding lemma transfers the validated learned-model gap to this true gap.

\begin{corollary}
\label{cor:true-gap}
Under \eqref{eq:inexact-improvement},
\begin{equation*}
 \bar g_n
 \le\cG_n
 +2\bar\eta_n\ptesssup{\nabla_0^h\wt V_n}{Q_L^h}
 +\frac{2\sqrt d M_2}{h}\xi_n.
\end{equation*}
\end{corollary}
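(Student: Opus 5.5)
The plan is to chain the two estimates of \cref{lem:gap-stability} pointwise on $(0,T)\times Q_L^h$, passing from the true gap at $V_n^h$ to the learned gap at $\wt V_n$, and then to take the essential supremum in $\tau$ of the spatial supremum. Fix $(\tau,x)$ with $x\in Q_L^h$.

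First apply the value-stability estimate \eqref{eq:gap-value-stability} with $F=f$, $M_F=M_2$, $w=V_n^h$ and $v=\wt V_n$:
\[
 \Gamma_f(V_n^h,\pi_{n+1})\le\Gamma_f(\wt V_n,\pi_{n+1})+2M_2\,|\nabla_0^h(V_n^h-\wt V_n)|.
\]
Since $x\in Q_L^h$, both neighbors $x\pm he_i$ lie in $Q_L$, where $|V_n^h-\wt V_n|\le\xi_n$. Hence each centered difference is at most $\xi_n/h$, and the Euclidean norm of $\nabla_0^h(V_n^h-\wt V_n)$ is at most $\sqrt d\,\xi_n/h$. This produces the term $2\sqrt d M_2\xi_n/h$.

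Next apply the model-stability estimate \eqref{eq:gap-model-stability} with $F_1=f$, $F_2=\wt f_n$ and $v=\wt V_n$:
\[
 \Gamma_f(\wt V_n,\pi_{n+1})\le\Gamma_{\wt f_n}(\wt V_n,\pi_{n+1})+2\sup_{u}|f-\wt f_n|\,|\nabla_0^h\wt V_n|.
\]
By \eqref{eq:inexact-improvement}, the first term on the right is at most $\cG_n$ on $(0,T)\times Q_L^h$. The supremum over $u$ is bounded by $\bar\eta_n$ for a.e.\ $\tau$ and every $x\in Q_L^h$.

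Finally, combine the two displays and take $\sup_{x\in Q_L^h}$ followed by $\esssup_\tau$. The $\esssup$ of a sum of nonnegative terms is at most the sum of the $\esssup$'s, and the factor $\bar\eta_n$ can be pulled out because it bounds the model error for a.e.\ $\tau$ uniformly in $x$. This gives the stated bound.

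The only delicate point is measure-theoretic: \eqref{eq:inexact-improvement} and $\bar\eta_n$ hold for a.e.\ $\tau$ rather than every $\tau$. I would handle this by discarding a common null set of times, a union of two null sets, before taking suprema. A second point to check is that \cref{lem:gap-stability} requires the Euclidean bound $|f|\le M_2$ globally; \cref{ass:bounded-data} provides this.
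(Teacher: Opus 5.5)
Your proposal is correct and is essentially the paper's own proof: you chain \eqref{eq:gap-value-stability} under the true model (from $V_n^h$ to $\wt V_n$) with \eqref{eq:gap-model-stability} (from $f$ to $\wt f_n$), bound $|\nabla_0^h(\wt V_n-V_n^h)|\le\sqrt d\,\xi_n/h$ on $Q_L^h$ because the shifted points lie in $Q_L$, and then take the spatial supremum followed by the essential supremum in $\tau$. Your null-set bookkeeping is harmless, though it is needed only for $\bar\eta_n$, since \eqref{eq:inexact-improvement} is imposed at every point of $(0,T)\times Q_L^h$.
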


\begin{proof}
Apply \cref{lem:gap-stability} first to transfer from $V_n^h$ to $\wt V_n$ under the true model and then to transfer from $f$ to $\wt f_n$.  For $x\in Q_L^h$ and $e=\wt V_n-V_n^h$,
\[
 |\nabla_0^he(\tau,x)|
 \le\frac{\sqrt d}{h}\norm{e(\tau,\cdot)}_{\infty,Q_L}
 \le\frac{\sqrt d}{h}\xi_n.
\]
\end{proof}

The gap estimate covers measurable, discontinuous selectors and nonunique
minimizers. Freezing $V_{n+1}^h-V_n^h$ at $\pi_{n+1}$ gives the source
$\Gamma_f(V_n^h,\pi_{n+1})-\Gamma_f(V_n^h,\pi_n)\le\bar g_n$
on $Q_L^h$ and zero outside. Comparison therefore yields
$V_{n+1}^h\le V_n^h+\tau\bar g_n$; exact greedy improvement is monotone.

\subsection{Recursive Error Bounds for Policy Iteration}

Define the exact localized policy error
\begin{equation*}
 e_n(\tau):=\sup_{x\in\R^d}\bigl(V_n^h-V_L^{h,*}\bigr)(\tau,x),
 \qquad
 \cE_n:=\sup_{0\le\tau\le T}e_n(\tau).
\end{equation*}
By \eqref{eq:localized-order}, $0\le V_n^h-V_L^{h,*}\le e_n(\tau)$.  Positivity improves the central-difference estimate: for $w(\tau,\cdot)=V_n^h(\tau,\cdot)-V_L^{h,*}(\tau,\cdot)$,
\begin{equation}
 |\nabla_0^hw(\tau,x)|
 \le\frac{\sqrt d}{2h}e_n(\tau).
 \label{eq:positive-central-estimate}
\end{equation}
Indeed, each pair $w(x+he_i),w(x-he_i)$ lies in the interval $[0,e_n(\tau)]$.

Set $\kappa_h:=\sqrt d M_2/h$ and
\begin{equation*}
 \delta_n:=
 \cG_n
 +2\bar\eta_n\ptesssup{\nabla_0^h\wt V_n}{Q_L^h}
 +\frac{2\sqrt dM_2}{h}\xi_n.
\end{equation*}
By \cref{cor:true-gap}, $\bar g_n\le\delta_n$.

The preceding estimates yield an integral recursion for the exact localized
policy error.
\begin{theorem}
\label{thm:recursive-pi-error}
For every $n\ge0$ and $\tau\in[0,T]$,
\begin{equation}
 e_{n+1}(\tau)
 \le\kappa_h\int_0^\tau e_n(s)\dd s+\tau\delta_n.
 \label{eq:recursive-pi-error}
\end{equation}
More generally, if the exact scheme satisfies the time-slice estimate
\begin{equation}
 \sup_{x\in\R^d}
 |\nabla_0^h(V_n^h-V_L^{h,*})(\tau,x)|
 \le C_{\PI}(h)e_n(\tau)
 \quad\text{for a.e. }\tau,
 \label{eq:slice-pi-gradient}
\end{equation}
then $\kappa_h$ in \eqref{eq:recursive-pi-error} can be replaced by $2M_2C_{\PI}(h)$.
\end{theorem}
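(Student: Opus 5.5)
The plan is to write a frozen-policy equation for $w_{n+1}:=V_{n+1}^h-V_L^{h,*}$ under the new policy $\pi_{n+1}$. Then I apply the linear comparison of \cref{thm:policy-bellman}(ii) with a source bounded by $\delta_n$ plus a gradient term controlled through \eqref{eq:positive-central-estimate}.

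\emph{Step 1: the equation for $w_{n+1}$.} Write $H(u,p)=f(\tau,x,u)\cdot p+L(\tau,x,u)$. The function $V_{n+1}^h$ solves \eqref{eq:policy-evaluation} with $\pi_{n+1}$. The function $V_L^{h,*}$ solves \eqref{eq:localized-bellman-equation}. Subtracting, and freezing the drift at $a_{\pi_{n+1}}$, gives
\[
 \partial_\tau w_{n+1}-\cB^h_{a_{\pi_{n+1}},\tau}w_{n+1}
 =H(\pi_{n+1},\nabla_0^hV_L^{h,*})-\min_u H(u,\nabla_0^hV_L^{h,*})
 =\Gamma_f(V_L^{h,*},\pi_{n+1})
\]
on $Q_L^h$, with zero initial value. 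Outside $Q_L^h$ both equations use $\bar u$ (since $\pi_{n+1}\in\Pi_L$), so the source vanishes there. All identities are taken in the pointwise integral sense, and the sources are bounded and jointly Borel, so comparison applies directly.

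\emph{Step 2: bounding the source.} On $Q_L^h$, apply \eqref{eq:gap-value-stability} with $F=f$ and $M_F=M_2$, transferring from $V_L^{h,*}$ to $V_n^h$:
\[
 \Gamma_f(V_L^{h,*},\pi_{n+1})\le\Gamma_f(V_n^h,\pi_{n+1})+2M_2|\nabla_0^h(V_n^h-V_L^{h,*})|.
\]
The first term is at most $\bar g_n\le\delta_n$ for a.e.\ $\tau$, by \cref{cor:true-gap}. Pointwise in $\tau$, the second term is at most $2M_2\cdot\frac{\sqrt d}{2h}e_n(\tau)=\kappa_h e_n(\tau)$ by \eqref{eq:positive-central-estimate}; positivity of $V_n^h-V_L^{h,*}$ comes from \eqref{eq:localized-order} because $\pi_n\in\Pi_L$. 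Under the alternative hypothesis \eqref{eq:slice-pi-gradient}, the same term is bounded instead by $2M_2C_{\PI}(h)e_n(\tau)$ for a.e.\ $\tau$. In either case the source satisfies $\sup_x\varphi(s,x)\le\kappa_h e_n(s)+\delta_n$ for a.e.\ $s$.

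\emph{Step 3: comparison.} The upper comparison in \cref{thm:policy-bellman}(ii) now gives
\[
 \sup_x w_{n+1}(\tau,x)\le\int_0^\tau(\kappa_h e_n(s)+\delta_n)\dd s,
\]
which is \eqref{eq:recursive-pi-error}. Null sets in time are harmless because only the time integral of the source enters. The generalization follows verbatim, using the bound from \eqref{eq:slice-pi-gradient} in Step 2.

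The main obstacle is measurability. The source bound holds only for a.e.\ $\tau$, and $e_n$ must be integrable. I expect $e_n$ to be Borel, or at least universally measurable as a supremum; this is the same issue already noted after the definition of the mixed norms. If needed, one can bound $\sup_x\varphi$ by an upper envelope of $e_n$ and use the ess-sup structure of $\bar g_n$.
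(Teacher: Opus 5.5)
Your proposal is correct and follows the paper's proof step for step. You freeze $V_{n+1}^h-V_L^{h,*}$ at $\pi_{n+1}$ to get the source $\Gamma_f(V_L^{h,*},\pi_{n+1})$ on $Q_L^h$ and zero outside, transfer it to $V_n^h$ via \eqref{eq:gap-value-stability}, and bound it with $\bar g_n\le\delta_n$ and \eqref{eq:positive-central-estimate} (or \eqref{eq:slice-pi-gradient}) inside the comparison of \cref{thm:policy-bellman}(ii). Your measurability concern is milder than you suggest: both $V_n^h$ and $V_L^{h,*}$ solve integral equations with bounded integrands, so they are Lipschitz in $\tau$ uniformly in $x$, which makes $e_n$ continuous.
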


\begin{proof}
Let $z=V_{n+1}^h-V_L^{h,*}\ge0$.  Freeze the difference equation at $\pi_{n+1}$.  Its source is zero outside $Q_L^h$ and equals $\Gamma_f(V_L^{h,*},\pi_{n+1})$ inside.  Comparison gives
\[
 e_{n+1}(\tau)
 \le\int_0^\tau
 \sup_{x\in Q_L^h}\Gamma_f(V_L^{h,*},\pi_{n+1})(s,x)\dd s.
\]
By \eqref{eq:gap-value-stability},
\[
 \Gamma_f(V_L^{h,*},\pi_{n+1})
 \le\Gamma_f(V_n^h,\pi_{n+1})
 +2M_2|\nabla_0^h(V_n^h-V_L^{h,*})|.
\]
Use $\bar g_n\le\delta_n$ and \eqref{eq:positive-central-estimate}.  The alternative follows directly from \eqref{eq:slice-pi-gradient}.
\end{proof}

The time-slice form in \eqref{eq:slice-pi-gradient} is essential.  A weaker estimate involving only the global space-time norm would produce a recursion in $\cE_n$, rather than the time-local integral recursion in $e_n(s)$.

Iterating the recursion gives a factorially weighted history bound.
\begin{corollary}
\label{cor:factorial-history}
For every $n\ge1$,
\begin{align}
 \cE_n
 \le{}&
 \frac{(\kappa_hT)^n}{n!}\cE_0
 +\sum_{j=0}^{n-1}
 \frac{\kappa_h^{n-1-j}T^{n-j}}{(n-j)!}\delta_j
 \nonumber\\
 \le{}&
 \frac{(\kappa_hT)^n}{n!}\cE_0
 +A_h\sup_{0\le j<n}\delta_j,
 \qquad
 A_h=
 \begin{cases}
 \dfrac{e^{\kappa_hT}-1}{\kappa_h},&\kappa_h>0,\\[6pt]
 T,&\kappa_h=0.
 \end{cases}
 \label{eq:factorial-uniform}
\end{align}
Consequently:
\begin{enumerate}[label=(\roman*),leftmargin=2em]
\item exact policy iteration satisfies $\cE_n\to0$ for every fixed $h>0$ and every $T>0$;
\item if $(\delta_n)$ is bounded and $\delta_n\to0$, then $\cE_n\to0$;
\item if $\sup_n\delta_n\le\delta$, then $\limsup_n\cE_n\le A_h\delta$.
\end{enumerate}
\end{corollary}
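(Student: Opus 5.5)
We iterate the integral recursion \eqref{eq:recursive-pi-error} of \cref{thm:recursive-pi-error} by induction on $n$. The claim to prove is the time-local bound
\begin{equation*}
 e_n(\tau)\le\frac{(\kappa_h\tau)^n}{n!}\cE_0
 +\sum_{j=0}^{n-1}\frac{\kappa_h^{n-1-j}\tau^{n-j}}{(n-j)!}\delta_j,
 \qquad 0\le\tau\le T.
\end{equation*}

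\textbf{Base case.} For $n=1$, use $e_0(s)\le\cE_0$ in \eqref{eq:recursive-pi-error}. This gives $e_1(\tau)\le\kappa_h\tau\cE_0+\tau\delta_0$, which is the claim.

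\textbf{Inductive step.} Insert the hypothesis for $e_n(s)$ into the integral and use $\int_0^\tau s^k/k!\,\dd s=\tau^{k+1}/(k+1)!$ termwise. The $\cE_0$ term becomes $(\kappa_h\tau)^{n+1}/(n+1)!$. Each $\delta_j$ term picks up one more factor of $\kappa_h$ and one more power of $\tau$ with the next factorial. The new source $\tau\delta_n$ is exactly the $j=n$ term, since $\kappa_h^0\tau^1/1!=\tau$.

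\textbf{Measurability.} The integral requires $e_n$ to be measurable in $\tau$. I would avoid this issue either with upper integrals, or by running the induction on the deterministic majorant functions on the right-hand side instead of on $e_n$ itself.

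\textbf{First inequality.} Set $\tau=T$; the majorant is increasing in $\tau$ and $\delta_j\ge0$. Taking the supremum over $\tau\le T$ then yields the first line of \eqref{eq:factorial-uniform}.

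\textbf{Second inequality.} Bound each $\delta_j$ by $\sup_{j<n}\delta_j$. With $k=n-j$ ranging over $1,\ldots,n$,
\[
\sum_{k=1}^n\frac{\kappa_h^{k-1}T^k}{k!}\le\sum_{k\ge1}\frac{\kappa_h^{k-1}T^k}{k!}=\frac{e^{\kappa_hT}-1}{\kappa_h}
\]
when $\kappa_h>0$. When $\kappa_h=0$, only $k=1$ survives (with the convention $0^0=1$), which gives $T$.

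\textbf{Consequence (i).} Exact policy iteration has $\cG_n=0$, $\bar\eta_n=0$ and $\xi_n=0$, so $\delta_n=0$. Also $\cE_0\le2M_V<\infty$ by \cref{thm:policy-bellman}(i). Since $(\kappa_hT)^n/n!\to0$ for any fixed $h$ and $T$, we get $\cE_n\to0$.

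\textbf{Consequence (iii).} This follows immediately from the uniform bound, because the first term vanishes as $n\to\infty$.

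\textbf{Consequence (ii).} This is the only step needing care. Write $c_k=\kappa_h^{k-1}T^k/k!$, which is summable. The sum $\sum_{j<n}c_{n-j}\delta_j$ is then a convolution of a summable sequence with a null sequence. Fix $J$ and split at $j<J$ versus $j\ge J$:
\begin{itemize}
\item The tail $j\ge J$ contributes at most $A_h\sup_{j\ge J}\delta_j$, which is small for large $J$.
\item The head $j<J$ contributes at most $\sup_j\delta_j\sum_{k>n-J}c_k$, which tends to zero as $n\to\infty$ because it is the tail of a convergent series.
\end{itemize}
Choosing $J$ first and then $n$ gives $\cE_n\to0$.

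\textbf{Main obstacle.} The only delicate points are keeping the recursion time-local, so that the factorials appear, and the null-convolution argument in (ii). The rest is bookkeeping.
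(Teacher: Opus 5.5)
Your proposal is correct and follows the paper's proof. Like the paper, you iterate the time-local recursion \eqref{eq:recursive-pi-error} by integrating monomials, take the supremum in $\tau$, sum the exponential series to get $A_h$, and obtain (ii) from the Toeplitz convolution argument with the summable kernel $\kappa_h^{k-1}T^k/k!$; you simply write out the head/tail split that the paper cites, and your measurability caveat is harmless because each value function is Lipschitz in $\tau$ uniformly in $x$, so $e_n$ is Lipschitz.
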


\begin{proof}
Iterating \eqref{eq:recursive-pi-error} and integrating the monomials gives, for $\tau\le T$,
\[
 e_n(\tau)
 \le\frac{(\kappa_h\tau)^n}{n!}\cE_0
 +\sum_{j=0}^{n-1}
 \frac{\kappa_h^{n-1-j}\tau^{n-j}}{(n-j)!}\delta_j.
\]
Take the supremum in $\tau$.  Summing the exponential series gives \eqref{eq:factorial-uniform}.  The convergence for a bounded null sequence follows from the Toeplitz convolution lemma, since the kernel $\{\kappa_h^{m-1}T^m/m!\}_{m\ge1}$ is summable.
\end{proof}

\begin{remark}[Dependence on $h$]
For $M_2>0$, the coefficient multiplying
$\sup_{j<n}\delta_j$ in \eqref{eq:factorial-uniform} is unbounded
as $h\downarrow0$. A joint limit as $n\to\infty$ and $h\downarrow0$
requires control of this amplification through the per-step errors or an
estimate such as \eqref{eq:slice-pi-gradient} with
$C_{\PI}(h)=O(1)$.
\end{remark}

\section{Total Error Decomposition and the Continuous Limit}
\label{sec:endtoend}

\subsection{Fixed-\texorpdfstring{$h$}{h} Error Decomposition}

The evaluation, policy-iteration, and localization errors combine as follows.
\begin{theorem}
\label{thm:fixed-h-total}
For every $n$,
\begin{equation}
 \norm{\wt V_n-V^{h,*}}_{\infty,[0,T]\times K}
 \le\xi_n+\cE_n+2M_V\Psi_h(D_L,T).
 \label{eq:fixed-h-total}
\end{equation}
\end{theorem}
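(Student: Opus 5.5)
The plan is a three-term triangle inequality through the exact policy value $V_n^h=V^{h,\pi_n}$ and the localized optimum $V_L^{h,*}$. For $(\tau,x)\in[0,T]\times K$ we write
\[
 \wt V_n-V^{h,*}
 =(\wt V_n-V_n^h)+(V_n^h-V_L^{h,*})+(V_L^{h,*}-V^{h,*}),
\]
and bound each difference in absolute value by one of the terms in \eqref{eq:fixed-h-total}.

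\textbf{First term.} Since $K\Subset Q_L^h\subset Q_L$, we have $[0,T]\times K\subset[0,T]\times Q_L$. The definition of $\xi_n$ as a supremum over $[0,T]\times Q_L$ then gives $|\wt V_n-V_n^h|\le\xi_n$ directly. \Cref{thm:evaluation-certificate} supplies a computable upper bound for $\xi_n$, but it is not needed for the statement itself.

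\textbf{Second term.} The standing assumption $\pi_n\in\Pi_L$ and \eqref{eq:localized-order} give $V_n^h-V_L^{h,*}\ge0$ everywhere. By the definition of $e_n(\tau)$ as a spatial supremum over all of $\R^d$, we also have $V_n^h-V_L^{h,*}\le e_n(\tau)\le\cE_n$. Hence $|V_n^h-V_L^{h,*}|\le\cE_n$ on $[0,T]\times K$. The one point to check is that $\pi_n\in\Pi_L$ for every $n$. This holds because $\pi_0=\bar u$ and each update \eqref{eq:inexact-improvement} sets $\pi_{n+1}=\bar u$ outside $Q_L^h$.

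\textbf{Third term.} \Cref{thm:policy-localization} gives $0\le V_L^{h,*}-V^{h,*}\le2M_V\Psi_h(D_L,T)$ on $[0,T]\times K$.

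Summing the three absolute bounds and taking the supremum over $[0,T]\times K$ yields \eqref{eq:fixed-h-total}. I do not expect a real obstacle. The only care needed is the sign and domain bookkeeping: the localized-order inequality requires the policies to lie in $\Pi_L$, and the sets used by $\xi_n$, $e_n$, and the localization bound must all contain $K$. Both conditions are immediate from the definitions above.
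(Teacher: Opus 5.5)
Your proposal is correct and matches the paper's proof: both use the three-term decomposition through $V_n^h$ and $V_L^{h,*}$, bounding the pieces by $\xi_n$, by $\cE_n$ (via $\pi_n\in\Pi_L$ and \eqref{eq:localized-order}), and by \cref{thm:policy-localization}. Your checks that $K\subset Q_L$ and that every $\pi_n$ stays in $\Pi_L$ make explicit the bookkeeping the paper leaves implicit.
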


\begin{proof}
On $K$,
\[
 \wt V_n-V^{h,*}
 =(\wt V_n-V_n^h)
 +(V_n^h-V_L^{h,*})
 +(V_L^{h,*}-V^{h,*}).
\]
Bound the three terms by $\xi_n$, $\cE_n$, and \cref{thm:policy-localization}.
\end{proof}
Combining
\cref{thm:evaluation-certificate,cor:factorial-history}
with \eqref{eq:fixed-h-total} gives the complete fixed-$h$ error estimate.

\subsection{Continuous Limit}

The following assumption relates the semi-discrete solution
$V^{h,*}$ to the viscosity solution of
\eqref{eq:continuous-hjb}.

\begin{assumption}[Consistency estimate]
\label[assumption]{ass:consistency-module}
The continuous HJB equation \eqref{eq:continuous-hjb} has a unique bounded uniformly continuous viscosity solution $V$, and there is a known function $\eps_{\cons}(h)\downarrow0$ such that
\begin{equation*}
 \norm{V^{h,*}-V}_{\infty,[0,T]\times\R^d}
 \le\eps_{\cons}(h).
\end{equation*}
\end{assumption}

Adding the consistency error gives the continuous-HJB estimate.
\begin{corollary}
\label{cor:continuous-total}
Under \cref{ass:consistency-module},
\begin{equation}
 \norm{\wt V_n-V}_{\infty,[0,T]\times K}
 \le\xi_n+\cE_n+2M_V\Psi_h(D_L,T)+\eps_{\cons}(h).
 \label{eq:continuous-total}
\end{equation}
\end{corollary}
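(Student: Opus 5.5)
The plan is a triangle inequality on $[0,T]\times K$ that inserts the semi-discrete optimal value $V^{h,*}$ between the candidate and the viscosity solution. For $(\tau,x)\in[0,T]\times K$ we write
\[
 \wt V_n(\tau,x)-V(\tau,x)
 =\bigl(\wt V_n-V^{h,*}\bigr)(\tau,x)+\bigl(V^{h,*}-V\bigr)(\tau,x).
\]
Then we bound the two pieces separately.

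For the first piece we invoke \cref{thm:fixed-h-total} directly. It already gives
\[
 \norm{\wt V_n-V^{h,*}}_{\infty,[0,T]\times K}\le\xi_n+\cE_n+2M_V\Psi_h(D_L,T).
\]
This rests on three earlier results: the evaluation error $\xi_n$ on $Q_L\supset K$, the localized policy error $\cE_n$ (a supremum over all of $\R^d$, so it also covers $K$), and the coupling bound of \cref{thm:policy-localization}, which is valid on $K$.

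For the second piece we use \cref{ass:consistency-module}. It bounds $|V^{h,*}-V|$ by $\eps_{\cons}(h)$ on all of $[0,T]\times\R^d$, and in particular on $[0,T]\times K$.

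Adding the two bounds pointwise and taking the supremum over $[0,T]\times K$ gives \eqref{eq:continuous-total}. There is no real obstacle here. The only point to check is that both norms are genuine pointwise suprema on compatible sets: $K\subset Q_L$ and $K\subset\R^d$, so the estimates restrict correctly. No measurability subtleties arise, since no essential supremum in time is involved.
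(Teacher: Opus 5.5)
Your proposal is correct and is the paper's argument: the corollary follows by inserting $V^{h,*}$, bounding $\wt V_n-V^{h,*}$ on $[0,T]\times K$ by \cref{thm:fixed-h-total}, and adding the uniform consistency error $\eps_{\cons}(h)$ from \cref{ass:consistency-module}. Your checks that $K\subset Q_L$ and that the consistency bound holds on all of $[0,T]\times\R^d$ are the only points needed.
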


The monotone semi-discrete theory of Tang, Tran, and
Zhang~\cite{tang2025policy} gives sufficient conditions for
\cref{ass:consistency-module}. After
the time reversal $\tau=T-t$, their centered-difference Bellman scheme has
the form \eqref{eq:semidiscrete-bellman}.  Their theorem assumes bounded
Lipschitz data, a single-valued Lipschitz minimizing selector, and
$N\ge\max\{1,M_2/2\}$, whereas $M_{\comp}\le2N$ is exactly the condition
for nonnegative neighbor rates in \eqref{eq:generator-rates}.
Under these assumptions, their consistency
theorem gives
\begin{equation}
 \eps_{\cons}(h)\le C(1+T)
 \bigl(1+\norm{\nabla V}_{L^\infty((0,T)\times\R^d)}\bigr)\sqrt h,
\label{eq:sqrt-h-consistency}
\end{equation}
where $C$ is independent of $h$.
The $O(\sqrt h)$ rate is sharp under the assumptions of
\cite{tang2025policy}.
The additional requirement $N\ge1$ and the selector assumptions enter only
their convergence and consistency proofs; the fixed-$h$ theory of
\cref{sec:chain,sec:localization,sec:pi} uses neither.  Thus
\cref{ass:consistency-module} is separate from the bounded-extension argument
of \cref{sec:localization}; an application of the cited rate must verify the
additional regularity and selector hypotheses for the chosen extensions.

For the physical interpretation of this estimate, suppose the bounded
extensions agree with the physical coefficients on
a space-time region containing every controlled trajectory starting
from $K$.  Under standard ODE well-posedness, the extended and physical
continuous values then coincide on $[0,T]\times K$, so $V$ in
\eqref{eq:continuous-total} is the physical value there.  If
$\sup_{t,u}|f_{\rm phys}(t,x,u)|\le C_0+C_1|x|$, then
$|X_s|\le (|x|+C_0s)e^{C_1s}$.
Compatibility of this trajectory bound with the outer box, the choice
of $N$, and the numerical domain-of-dependence margin is discussed in
\cref{rem:extension-compatibility}.

Finally, combining \eqref{eq:factorial-uniform} and
\eqref{eq:continuous-total} gives a sufficient condition for
convergence.  Let $h_k\downarrow0$ and $n_k\to\infty$.  Attach a superscript
$(k)$ to the candidates, boxes, and error quantities associated with $h_k$.
If
\[
\begin{gathered}
 \dfrac{(\kappa_{h_k}T)^{n_k}}{n_k!}\cE_0^{(k)}\to0,
 \qquad A_{h_k}\sup_{j<n_k}\delta_j^{(k)}\to0,
 \qquad \xi_{n_k}^{(k)}\to0,\\
 M_V^{(k)}\Psi_{h_k}(D_L^{(k)},T)\to0,
 \qquad \eps_{\cons}(h_k)\to0,
\end{gathered}
\]
then $\norm{\wt V_{n_k}^{(k)}-V}_{\infty,[0,T]\times K}\to0$.
The first two conditions control the dependence of the
policy-iteration error bound on $h$.

\section{Validation and Numerical Experiments}
\label{sec:experiments}

\subsection{From Finite Samples to Uniform Bounds}
\label{sec:validation}

The estimates in
\cref{thm:evaluation-certificate,cor:true-gap}
involve spatial suprema.  Values at finitely many validation points
give such bounds when the variation of the function between those
points is controlled.

For a compact set $D\subset\R^m$ and points
$Z_M=\{z_j\}_{j=1}^M\subset D$, define the fill distance
\begin{equation*}
 \rho(Z_M,D)
 :=
 \sup_{z\in D}\min_{1\le j\le M}|z-z_j|.
\end{equation*}

\begin{lemma}
\label{lem:deterministic-cover}
If $r:D\to\R^k$ is $\Lambda_r$-Lipschitz, then
\begin{equation*}
 \norm{r}_{\infty,D}
 \le
 \max_{1\le j\le M}|r(z_j)|
 +\Lambda_r\rho(Z_M,D).
\end{equation*}
\end{lemma}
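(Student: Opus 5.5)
The plan is a direct pointwise argument. Fix an arbitrary $z\in D$; nothing beyond compactness of $D$ and the Lipschitz property of $r$ is needed. By compactness of $D$ and finiteness of $Z_M$, the minimum in the definition of $\rho(Z_M,D)$ is attained at some index $j(z)$. By definition of the fill distance, this gives $|z-z_{j(z)}|=\min_j|z-z_j|\le\rho(Z_M,D)$.

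Next, apply the triangle inequality in $\R^k$ together with the Lipschitz bound:
\[
 |r(z)|\le|r(z_{j(z)})|+|r(z)-r(z_{j(z)})|
 \le\max_{1\le j\le M}|r(z_j)|+\Lambda_r|z-z_{j(z)}|
 \le\max_{1\le j\le M}|r(z_j)|+\Lambda_r\rho(Z_M,D).
\]
Taking the supremum over $z\in D$ then yields the claimed bound on $\norm{r}_{\infty,D}$.

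I expect no real obstacle. The only point worth stating is that the norms on $\R^m$ and $\R^k$ used here must be those in which $r$ is $\Lambda_r$-Lipschitz, namely Euclidean in both the fill distance and $|\cdot|$. If $\rho(Z_M,D)=\infty$ cannot occur (it cannot, since $D$ is compact), the inequality is finite and meaningful.

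Attainment of the minimum is automatic because the index set is finite, so even compactness of $D$ is not strictly needed for this step. Compactness serves only to make $\rho$ finite.
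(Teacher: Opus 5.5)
Your proof is correct and is essentially the paper's own argument: for each $z\in D$ choose a sample $z_j$ with $|z-z_j|\le\rho(Z_M,D)$, apply the triangle inequality together with the Lipschitz bound, and take the supremum over $z\in D$. Your side remarks are accurate but go beyond what the argument needs, namely that the minimum over the finite index set is attained and that compactness of $D$ makes $\rho(Z_M,D)$ finite.
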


\begin{proof}
For any $z\in D$, choose $z_j$ such that
$|z-z_j|\le\rho(Z_M,D)$ and use
\[
 |r(z)|
 \le
 |r(z_j)|+\Lambda_r|z-z_j|.
\]
\end{proof}

The same bound applies to residuals, greedy gaps, and model errors.
For model errors, take $z=(\tau,x,u)$.  \Cref{smsec:validation} gives corresponding
bounds for random validation samples
and population $L^2$ estimates, and discusses discontinuities near
policy-switching surfaces.  These bounds remain dimension dependent.

The experiments test the fixed-$h$ theory, viscosity-solution selection,
learned dynamics, and structured high-dimensional computation.  Quadratic
control penalties are retained when state-dependent costs and polynomial
drifts are bounded as in \cref{rem:extension-compatibility}; these extensions
are inactive on the stated boxes.  Unless noted otherwise, neural results use
three independent seeds.
All computations use one NVIDIA GeForce RTX~5070 GPU (12\,GB).  Reference
solves and numerical audits use double precision; neural optimization uses
single-precision tensors.  \Cref{app:reproducibility} records the complete configurations, and
\cref{smsec:additional-experiments} gives the tables and diagnostics omitted here.

\subsection{Theoretical Diagnostics}
\label{sec:exp-consistency}

The distance-function control problem $\dot x=u$, $|u|\le1$, $L=0$,
and $g(x)=|x|$, where $|\cdot|$ denotes the Euclidean norm, has the
exact nonsmooth viscosity solution
$V(\tau,x)=\max(|x|-\tau,0)$.
At the sharp monotonicity envelope
$N=M_{\comp}/2=1/2$, the final-time errors exhibit rates $0.52$--$0.55$ in
$d=1$ over the last three refinements and $0.60$--$0.62$ in $d=2$; the
plot and full mesh sequence are reported in \cref{smsec:additional-experiments}. These empirical
rates are close to $\sqrt h$. The benchmark uses $N=1/2$ and a nonunique
minimizer at $p=0$; the cited estimate \eqref{eq:sqrt-h-consistency}
assumes $N\ge1$ and a unique Lipschitz selector.

\paragraph{Exact policy iteration and the slice-gradient constant}
\label{sec:exp-pi}

The saturating problem $\dot x=u$, $|u|\le1$,
$L=\tfrac12x^2+\tfrac12u^2$, $g=\tfrac12x^2$, $T=1$ has a bang region
$|\partial_xV|>1$ and selector
$u^*=\operatorname{sat}_1(-\partial_xV)$.  From
$\pi_0\equiv0$, the error curves for
$h=0.02,0.01,0.005,0.0025$ nearly coincide and decay superlinearly to the
roundoff floor, substantially faster than the worst-case
$\kappa_h=O(h^{-1})$ bound.  The empirical unrestricted ratio on $K$
corresponding to \eqref{eq:slice-pi-gradient} scales as $h^{-1}$, whereas
the $1\%$-restricted ratio lies in $[2,15]$ with less than $6\%$ variation and the full-horizon
ratio of time integrals lies in $[1.99,9.10]$ for $n\le3$.  
These are restricted-slice and time-averaged diagnostics on $K$;
\eqref{eq:slice-pi-gradient} requires a global time-slice bound.
\Cref{smsec:additional-experiments} gives the curves and definitions.

\paragraph{Sharpness of the tail, localization, and monotonicity bounds}
\label{sec:exp-dependence}

With $2\times10^6$ paths ($d=1$, $h=0.01$, $N=0.5$, $T=1$),
\cref{fig:e3-dependence}(left) agrees with $\Psi_h$ within
$4\times10^{-4}$ for the
extremal drift $a\equiv2N$, where displacement is exactly
$h\Pois(2NT/h)$; the symmetric walk is many orders below the same bound.
The tail in \cref{lem:numerical-dependence} is therefore sharp without additional
drift information.

For a localized version of the preceding saturating problem, the gap is zero at
$N=M_{\comp}/2$ because the saturated feedback eliminates the outward
jump rate.  At $N=1$ it decays from $10^{-2}$ at margin $D_L=0.01$ to
roundoff near $D_L=0.3$, whereas the policy-uniform bound remains
$O(10)$; the bound is sharp for extremal drifts but conservative for
inward feedbacks.

\begin{figure}[tbhp]
\centering
\includegraphics[width=0.95\textwidth]{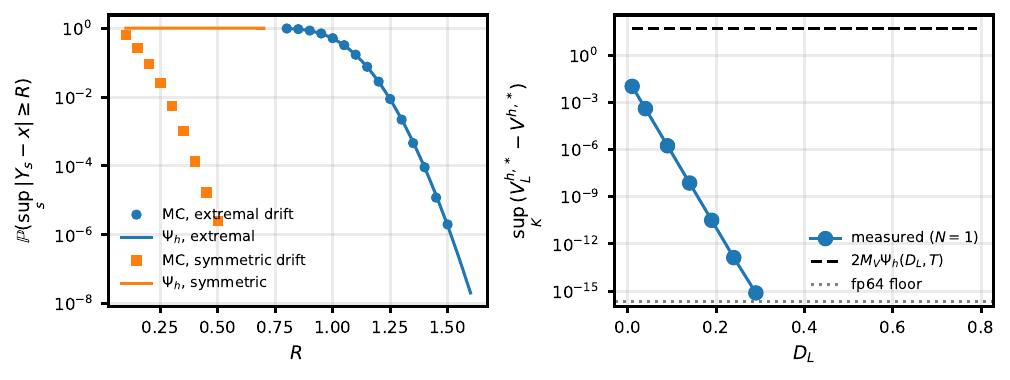}
\caption{Numerical domain of dependence.  Left: Monte Carlo displacement tails
versus $\Psi_h$; the extremal drift attains the bound, the symmetric
drift is far below it.  Right: measured localization gap
$\sup_K(V_L^{h,*}-V^{h,*})$ versus margin $D_L$ at $N=1$; at the sharp
envelope $N=M_{\comp}/2$ the measured gap is identically zero (not
plottable on the log scale).}
\label{fig:e3-dependence}
\end{figure}

For the kinked one-dimensional transport test specified in \cref{app:reproducibility},
undershoot disappears at $N=M_{\comp}/2$.
Within the monotone regime, error grows from $0.042$ at $N=0.5$ to
$0.159$ at $N=4$, so the threshold is also the most accurate tested
monotone choice for this problem.

\paragraph{Empirical estimator effectivity}
\label{sec:exp-certificate}

For the frozen feedback $\pi(x)=-\tanh(2x)$ in the same saturating problem,
we evaluate the residual-based estimator on a dense
validation grid.  Across three seeds and four training checkpoints, all
$12$ indicators exceed the corresponding errors measured on ten time
slices; at the final checkpoint the effectivity has mean $1.26$ and range
$1.10$--$1.53$. The Lipschitz correction is sample based, and the ten-slice
maximum lower-bounds the full space--time error. \Cref{smsec:additional-experiments} gives the
complete table.

We separately calibrate \cref{cor:direct-bellman} on a one-dimensional
compact-control problem with an analytic semi-discrete value.  A
$25$-parameter candidate trained only through its Bellman residual has a
verified residual bound $6.90\times10^{-5}$; Arb ball arithmetic and the
Poisson exit term give the same value for the direct certificate, compared
with a verified true-error upper bound $9.00\times10^{-6}$. This structured
calibration uses a prescribed spatial factor; details are in \cref{smsec:additional-experiments}.

\subsection{Low-Dimensional Neural Tests}
\label{sec:exp-duffing}

The Duffing problem $\dot x_1=x_2$,
$\dot x_2=-x_1-\tfrac12x_1^3+u$, $|u|\le1$,
$L=\tfrac12|x|^2+\tfrac14u^2$, $g=\tfrac12|x|^2$, $T=1$, $h=0.005$ is
solved by \cref{alg:csdpi} with the saturated bounded extension of
\cref{sec:localization} (drift core clipped to $[-(2N{-}1),2N{-}1]$,
$N=2.1=M_{\comp}/2$ on the training box, so the extension is inactive
there), and compared with a monotone grid solve of the same semi-discrete
Bellman equation on $1001^2$ nodes.  \Cref{fig:e6-duffing} shows the
result: the neural value error on $K=(-1,1)^2$ falls below
$1.5\times10^{-2}$ by the third evaluation and then plateaus near
$1.2\times10^{-2}$ (about $1\%$ of the value scale), while the greedy gap
decays through five orders of magnitude. This is consistent with persistent
evaluation error in \cref{cor:factorial-history}(iii): the final
root-mean-square residual and value error are both of order $10^{-2}$.
The comparison is qualitative, since the full defect $\delta_n$ is not
measured. Training costs
13\,s per iteration.  Across three independent seeds the final error on
$K$ is $0.89$--$1.19\times10^{-2}$ and the final greedy gap
$2\times10^{-6}$--$3\times10^{-5}$; the reported ranges include all three seeds.

To examine neural refinement, we repeated the same six-iteration SDPI run
with three seeds at $h=0.02$, $0.01$, and $0.005$, using the monotone
$h_{\mathrm{ref}}=0.0025$ solution as a fine-grid proxy.  The neural errors
against their own semi-discrete references are, respectively,
$0.0047$--$0.0071$, $0.0067$--$0.0094$, and $0.0089$--$0.0119$; the
corresponding grid-bias proxies are $0.124$, $0.0539$, and $0.0179$.
Thus the neural error remains below the grid bias, and the total error
against the fine-grid proxy decreases from $0.128$ to $0.059$--$0.061$
and then $0.0255$--$0.0266$. Thus the neural solution tracks the decreasing
grid bias over the three tested meshes; full results are in \cref{smsec:additional-experiments}.

\begin{figure}[tbhp]
\centering
\includegraphics[width=0.98\textwidth]{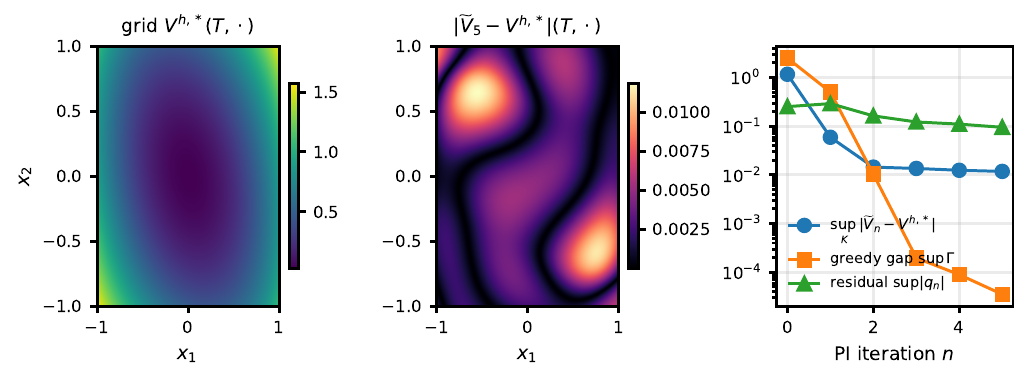}
\caption{Neural policy iteration on the Duffing problem versus the grid reference.
Left: grid-based optimal value function at $\tau=T$.  Center: absolute error of the
fifth neural iterate on $K$.  Right: value error, greedy gap, and residual at each policy iteration;
the value error plateaus while the greedy gap keeps decaying,
qualitatively consistent with the error decomposition.}
\label{fig:e6-duffing}
\end{figure}

\paragraph{Residual selection and solver comparison}
\label{sec:exp-pinn}

The eikonal problem of \cref{sec:exp-consistency} ($d=1$, $T=0.5$) admits,
besides the viscosity solution $V=\max(|x|-\tau,0)$, the spurious
almost-everywhere solution $V^{\rm sp}=|x|-\tau$, whose defect is confined
to $x=0$.  With identical networks, budgets, three sampling schemes, and
three seeds, every continuous-residual run converges to $V^{\rm sp}$
(loss $3\times10^{-8}$, value error $0.5$), while its sampled residual is
$10^{-3}$.  The shifted residual exposes an $O(1)$ defect on $|x|\le h$;
placing $30\%$ of samples in $|x|\le3h$ reduces the value error to
$0.070$--$0.096$, versus $0.41$--$0.49$ without stencil-band sampling.
Thus the shifted residual spreads the point defect at $x=0$ over the
band $|x|\le h$.  At the tested sample budget, uniform collocation
underrepresents this narrow band, so we oversample the region
$|x|\le3h$. The hard initial-condition ansatz favors the simple
$V^{\rm sp}$, so this test isolates residual detectability.
On smooth Duffing, a matched comparison uses the same $h$, $N$, network,
box, batch size, and $18000$ optimization steps.  Direct shifted-residual
training attains error $0.00688\pm0.00158$ in $70.3\pm0.6$ seconds, whereas
SDPI attains $0.01029\pm0.00150$ in $73.9\pm0.4$ seconds. Direct training
is more accurate and slightly faster here. We then remove only the quadratic
control penalty, giving the bang--bang Hamiltonian term $-|D_2^{0,h}V|$.
A design seed fixes the terminal-greedy initialization and $6000$-step
budget.  On five subsequent seeds, SDPI and direct training give errors
$0.02209\pm0.00445$ and $0.03029\pm0.00504$, respectively; SDPI is lower
in all five cases, with mean paired ratio $0.728$ (bootstrap $95\%$ interval
$[0.687,0.772]$).  Direct training with $9000$ steps still has mean error
$0.02440$, and is higher in four of five cases. Freezing the control switch
therefore improves early-budget accuracy in this bang--bang test.
Full comparisons are in \cref{smsec:additional-experiments}.
For the nonsmooth eikonal problem, residual-adaptive refinement makes the
same distinction without prescribing the kink location: the first shifted
residual refresh places $77\%$ of the new points in $|x|\le3h$, compared
with $0.4\%$ for the continuous residual.  With equal collocation budgets,
the resulting errors are $0.044\pm0.003$ and $0.500$, respectively; uniform
shifted-residual training gives $0.493$.  The full comparison and the
coarse-to-fine continuation study are in \cref{smsec:additional-experiments}.

\subsection{Learned-Dynamics Sensitivity}
\label{sec:exp-modelerr}

We next identify the Duffing drift from noisy observations using the correctly
specified polynomial library of total degree at most three, and train SDPI
with the fitted model.  Over five identification seeds, the maximum error on
a fixed state--control audit grid decays approximately as $M^{-0.605}$
with the data budget $M$.  At $M=10^2,10^3,10^4$, this finite-grid envelope
is respectively $0.0992$, $0.0241$, and $0.00538$; the corresponding
three-seed value errors are $0.0342$, $0.0144$, and $0.0131$, versus
$0.00992$ with known dynamics.  The model-error term in
\cref{thm:evaluation-certificate} exceeds the measured excess error in
all nine runs, and its robust monotonicity test passes at $M=10^4$ but not
at the two smaller budgets. All model-error terms and margin tests here use
finite-grid maxima. \Cref{smsec:additional-experiments} reports the noise model, grids, and all budgets.

\subsection{High-Dimensional Nonlinear Control with Active Constraints}
\label{sec:exp-scaling}

To obtain an exact high-dimensional test in which the compact control set is
active, consider $U=[-0.5,0.5]^d$, $T=0.75$, $h=0.01$, $N=0.75$, and
$\dot x_i=b_i(x)+u_i$, where
\[
 b_i(x)=-0.4\tanh x_i
 +0.15\bigl(\tanh x_{i-1}+\tanh x_{i+1}\bigr),
\]
with $x_0=x_1$ and $x_{d+1}=x_d$.  With $g=E$, define the state cost
$q_h$ by substituting the following value into the semi-discrete Bellman
equation, and set $L(\tau,x,u)=q_h(\tau,x)+|u|^2/2$:
\[
 V^{h,*}(\tau,x)=(1+0.3\tau)E(x)+1.5d\tau,
 \quad
 E(x)=\sum_i(1-\cos x_i)
 +\tfrac14\sum_{i=1}^{d-1}(1-\cos(x_i-x_{i+1})).
\]
Equivalently, with $p=\nabla_0^hV^{h,*}$,
$q_h=\partial_\tau V^{h,*}-Nh\Delta_hV^{h,*}-b\cdot p
-\sum_i\min_{|u_i|\le0.5}(u_ip_i+u_i^2/2)$.
The running cost is nonnegative, and the global bound
$|b_i(x)+u_i|\le1.2<2N=1.5$ leaves monotonicity margin $0.3$.  The exact
construction is used to define this cost and to evaluate the result; value
and policy targets are not used in training or checkpoint selection.

The network uses shared on-site and nearest-neighbor terms, with $3783$
parameters independent of $d$.  As shown in \cref{fig:e11-active}, the
three-seed mean relative errors of the learned correction $V-g$ at
$d=64,256,1024$ are $(6.38,4.84,6.15)\times10^{-4}$; removing the known
$1.5d\tau$ offset gives $(3.69,2.80,3.60)\times10^{-4}$.  The corresponding
policy errors are $(1.05,1.46,2.82)\times10^{-2}$.  The exact policy is saturated on
about $68.5\%$ of sampled coordinates, and the learned active-set accuracy
is $99.55\%$, $99.44\%$, and $98.89\%$.  Removing the pair term at $d=256$
raises the correction and policy errors to $0.290$ and $0.312$, respectively,
showing that the favorable scaling relies on representing the local
coupling.  At $d=1024$, a matched direct Bellman-residual run gives correction
and policy errors $5.55\times10^{-4}$ and $2.72\times10^{-2}$ in $26.8$~s,
versus $6.15\times10^{-4}$ and $2.82\times10^{-2}$ in $30.2$~s for SDPI.
The two solvers have similar accuracy, with a shorter runtime for direct
training. Standard deviations and stencil checks are in \cref{smsec:additional-experiments}.

\begin{figure}[tbhp]
\centering
\includegraphics[width=0.98\textwidth]{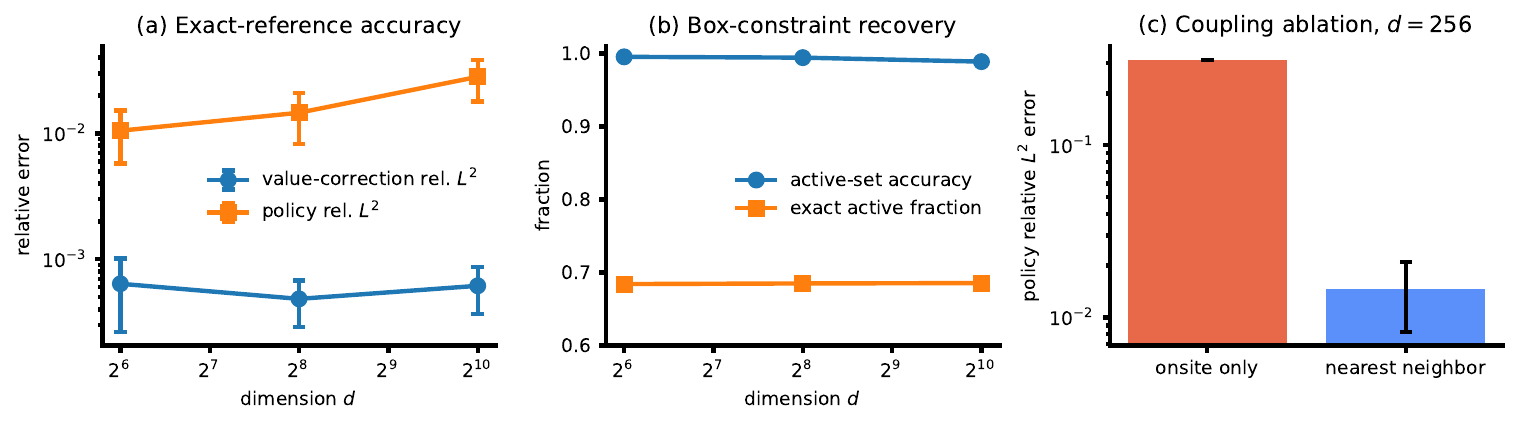}
\caption{Nonlinear active-box benchmark.  Left: correction and policy errors.
Center: exact active fraction and learned active-set accuracy.  Right:
$d=256$ coupling ablation.  The optimal semi-discrete policy is on the box
boundary for about $68.5\%$ of sampled coordinates.}
\label{fig:e11-active}
\end{figure}

\paragraph{Computational scaling and a nonmanufactured test}
\label{sec:exp-allencahn}

For a batch of $64$ and a finite-range architecture, measured peak CUDA
memory grows from $124$\,MiB at $d=64$ to $3.45$\,GiB at $d=2048$, with
log--log exponent $0.968$.  A vectorized full $(2d+1)$-query evaluation is
faster at $d=64$, essentially tied at $d=128$, and the local stencil is
$4.0$, $15.8$, and $61.7$ times faster at $d=256,512,1024$.

As a nonmanufactured feasibility test, we also consider a sparse-actuator
Allen--Cahn system with $d=1024$ and one controlled coordinate in every
eight.  Across three training seeds and ten independent test ensembles, the
SDPI cost relative to the best proportional baseline is
$0.9500\pm0.0020$.  A directly trained nonlinear local policy attains
$0.9437\pm0.0011$, about $0.65\%$ below SDPI. Both methods improve on the
proportional baseline by using local state information. The model,
baselines, radius ablation, and rollout audit are reported in \cref{smsec:additional-experiments}.

Scripts regenerate the numerical tables and figures from archived result
files at
\url{https://github.com/yeoneung/semi_discrete_pinn}.

\section{Discussion and Conclusion}
\label{sec:discussion}
The monotone shifted operator gives neural policy iteration a fixed-$h$
reference with whole-space well-posedness, Bellman verification, explicit
localization, and a posteriori error bounds. The greedy-gap recursion
controls inexact updates, and the total-error estimate states sufficient
conditions for convergence to the continuous HJB solution.
Numerically, the shifted residual detects the nonsmooth defect, the
structured direct certificate is nonvacuous, and local-interaction networks
resolve active compact controls through $d=1024$. Matched comparisons favor
direct training on smooth Duffing and policy freezing at short budgets for
bang--bang control.

The main analytical question is an $h$-uniform time-slice gradient estimate
that would sharpen simultaneous refinement and policy iteration.
Drift-aware exit estimates could improve localization for stabilizing
feedbacks. Further computational work should combine structured
architectures with uniform validation and statistical model-error bounds.

\section*{Declarations}
Generative AI tools were used for language and structural editing, to assist
with mathematical and bibliographic checks, and for code development and
preparation of computational experiments.
The authors assume responsibility for all content.

\clearpage
\appendix
\crefalias{section}{appendix}

\section{Pointwise Uniformization and Bellman Verification}
\label{app:chain-proof}

This section expands the ordered-series construction used in the proof of
\cref{thm:policy-bellman}. Integrals are evaluated pointwise in $x$;
the uniform estimates give convergence in the supremum norm
$\norm{\phi}_{\infty,E}:=\sup_{x\in E}|\phi(x)|$.

\subsection{The ordered Poisson series}

Fix a Borel feedback $\pi$, write $\cU^\pi=\cU^{a_\pi}$ in the notation
of the main text, and abbreviate
\[
 A_s:=\cB_{a_\pi,s}^h,
 \qquad P_s:=I+\Lambda_h^{-1}A_s.
\]
The same construction applies to any bounded Borel drift $a$ satisfying
$|a_i|\le2N$.
By \eqref{eq:generator-rates}--\eqref{eq:total-rate}, $P_s$ is a positive
Markov contraction on bounded Borel functions.  For $0\le r\le\tau\le T$,
define
\begin{align}
 \cU^\pi_{\tau,r}\varphi
 &:={e^{-\Lambda_h(\tau-r)}}
 \sum_{k=0}^{\infty}\Lambda_h^k
 \int_{r<s_1<\cdots<s_k<\tau}
 P_{s_k}\cdots P_{s_1}\varphi
 \,\dd s_1\cdots\dd s_k .
 \label{eq:appendix-uniformization}
\end{align}
For $k=0$, the integral is interpreted as $\varphi$.  The $k$th summand is
bounded in absolute value by
\[
 e^{-\Lambda_h(\tau-r)}
 \frac{[\Lambda_h(\tau-r)]^k}{k!}\norm{\varphi}_{\infty,\R^d}.
\]
The integrands are jointly Borel in the time variables and $x$.  Hence the
series converges absolutely and uniformly in $(\tau,r,x)$ on the closed time
simplex.  It defines a positive contraction, preserves constants, and maps
bounded Borel functions to bounded Borel functions.  Decomposition
of an ordered simplex at an intermediate time gives
\begin{equation}
 \cU^\pi_{\tau,s}\cU^\pi_{s,r}=\cU^\pi_{\tau,r},
 \qquad 0\le r\le s\le\tau\le T.
 \label{eq:evolution-property}
\end{equation}
Termwise integration, justified by the preceding Poisson majorant, yields the
pointwise integral equations
\begin{align}
 \cU^\pi_{\tau,r}\varphi
 &=\varphi+\int_r^\tau A_s\cU^\pi_{s,r}\varphi\,\dd s,
 \label{eq:left-evolution}\\
 \cU^\pi_{\tau,r}\varphi
 &=\varphi+\int_r^\tau \cU^\pi_{\tau,s}A_s\varphi\,\dd s
 \nonumber
\end{align}
first for bounded Borel $\varphi$, with each identity understood pointwise.
The operator order in \eqref{eq:appendix-uniformization} is the reason the
chain in \cref{sec:chain} uses coefficients at the remaining time.

To see the probabilistic interpretation directly, condition on $k$ Poisson
jumps in an interval of length $\tau-r$.  Their ordered elapsed times are
uniform on the corresponding simplex.  Reading the PDE times in reverse,
$s_j=\tau-q_{k+1-j}$, and applying the transition kernels in chronological
path order produces the product $P_{s_k}\cdots P_{s_1}$ in
\eqref{eq:appendix-uniformization}.  Consequently,
\begin{equation}
 (\cU^\pi_{\tau,r}\varphi)(x)
 =\E_x^{\tau,\pi}\bigl[\varphi(Y_{\tau-r})\bigr],
 \label{eq:uniformization-expectation}
\end{equation}
where the reverse-time chain uses $a_\pi(\tau-q,\cdot)$ after elapsed time
$q$.

\subsection{Frozen-policy evaluation and comparison}

Define
\begin{equation}
 V^{h,\pi}(\tau)
 :=\cU^\pi_{\tau,0}g
 +\int_0^\tau\cU^\pi_{\tau,s}L_\pi(s)\,\dd s.
 \label{eq:appendix-duhamel}
\end{equation}
The integral is pointwise well defined, the result is bounded Borel, and
\[
 \norm{V^{h,\pi}(\tau)}_{\infty,\R^d}\le M_g+\tau M_L\le M_V.
\]
Using \eqref{eq:left-evolution}, Fubini's theorem under the uniform bound, and
\eqref{eq:evolution-property}, one obtains
\begin{equation*}
 V^{h,\pi}(\tau,x)
 =g(x)+\int_0^\tau
 \bigl[A_sV^{h,\pi}(s,\cdot)(x)+L_\pi(s,x)\bigr]\dd s.
\end{equation*}
Thus $V^{h,\pi}$ is a pointwise integral solution.  If $V$ and $W$ are two such
solutions, then
\[
 \norm{V(\tau)-W(\tau)}_{\infty,\R^d}
 \le 2\Lambda_h\int_0^\tau\norm{V(s)-W(s)}_{\infty,\R^d}\dd s,
\]
because $\norm{A_s\phi}_{\infty,\R^d}
\le2\Lambda_h\norm{\phi}_{\infty,\R^d}$.
Gronwall's lemma gives uniqueness.

Equation \eqref{eq:policy-representation} follows from
\eqref{eq:uniformization-expectation} and the Markov property.  More
explicitly, the source contribution in \eqref{eq:appendix-duhamel} can be
written as
\[
 \int_0^\tau
 \E_x^{\tau,\pi}[L_\pi(\tau-q,Y_q)]\dd q
\]
after the change of variables $q=\tau-s$.

For comparison, a bounded pointwise integral subsolution with source
$\varphi$ and initial upper bound $z_0$ is defined through the
variation-of-constants inequality
\[
 z(\tau,x)
 \le (\cU^\pi_{\tau,0}z_0)(x)
 +\int_0^\tau(\cU^\pi_{\tau,s}\varphi(s))(x)\dd s
\]
at every $(\tau,x)$.  Positivity and contraction of the evolution family then
give
\[
 z(\tau,x)\le \sup_xz_0(x)+\int_0^\tau\sup_x\varphi(s,x)\dd s.
\]
Reversing the inequalities gives the supersolution comparison.

\subsection{The nonlinear Bellman equation}

For a bounded Borel function $v$, define
\begin{align}
 \mathfrak F_s[v](x)
 &:={Nh}\Delta_hv(x)
 +\min_{u\in U}
 \{f(s,x,u)\cdot\nabla_0^hv(x)+L(s,x,u)\}.
 \nonumber
\end{align}
For bounded $v,w$,
\begin{align}
 |\mathfrak F_s[v](x)-\mathfrak F_s[w](x)|
 &\le\sup_{u\in U}|\cB_{f(s,\cdot,u),s}^h(v-w)(x)|,\nonumber\\
 \norm{\mathfrak F_s[v]-\mathfrak F_s[w]}_{\infty,\R^d}
 &\le2\Lambda_h\norm{v-w}_{\infty,\R^d}.
 \label{eq:bellman-lipschitz}
\end{align}
The minimum is Borel because the objective is a Carath\'eodory integrand and
$U$ is compact.  Starting with $V^{(0)}(\tau,x)=g(x)$, define
\begin{equation*}
 V^{(m+1)}(\tau,x)
 =g(x)+\int_0^\tau\mathfrak F_s[V^{(m)}(s,\cdot)](x)\dd s.
\end{equation*}
Successive differences satisfy the standard factorial estimate
\[
 \sup_{s\le\tau}\norm{V^{(m+1)}(s)-V^{(m)}(s)}_{\infty,\R^d}
 \le \frac{(2\Lambda_h\tau)^m}{m!}C_0
\]
for a finite $C_0$ depending only on the first Picard difference.  Therefore
$V^{(m)}$ converges uniformly to a bounded Borel solution $V^{h,*}$ of the
Bellman integral equation.  Estimate \eqref{eq:bellman-lipschitz} and
Gronwall's lemma give uniqueness.

For each $(\tau,x)$, let
\begin{equation*}
 G(\tau,x,u)
 :=f(\tau,x,u)\cdot\nabla_0^hV^{h,*}(\tau,x)+L(\tau,x,u).
\end{equation*}
It is jointly Borel in $(\tau,x,u)$ and continuous in $u$ for each
$(\tau,x)$.  The measurable minimum theorem gives a Borel selector
\begin{equation*}
 \pi^*(\tau,x)\in\argmin_{u\in U}G(\tau,x,u).
\end{equation*}
No uniqueness is asserted or needed.  The Bellman integral equation is then
exactly the frozen-policy integral equation for $\pi^*$, so uniqueness yields
$V^{h,\pi^*}=V^{h,*}$.

For an arbitrary Borel policy $\pi$, set $W=V^{h,\pi}-V^{h,*}$.  Subtracting
the Bellman equation from the policy equation and freezing at $\pi$ gives
$\partial_\tau W-\cB_{a_\pi,\tau}^hW=\Gamma_f(V^{h,*},\pi)\ge0$, with
$W(0)=0$.  Positivity and comparison imply $V^{h,\pi}\ge V^{h,*}$.
Together with the equality for
$\pi^*$, this proves \eqref{eq:bellman-verification} and completes the proof
of \cref{thm:policy-bellman}.

The localized Bellman assertion in \eqref{eq:localized-bellman-equation}
follows from the same construction.  Replace $\mathfrak F_s$ by
\[
 \mathfrak F_s^L[v](x)=
 \begin{cases}
  \mathfrak F_s[v](x),&x\in Q_L^h,\\
  \cB_{a_{\bar u},s}^hv(x)+L_{\bar u}(s,x),&x\notin Q_L^h.
 \end{cases}
\]
This vector field is Borel and $2\Lambda_h$-Lipschitz in the pointwise
supremum norm.  Picard iteration gives a unique bounded pointwise integral
solution.
Choose a Borel minimizing selector on $Q_L^h$ and set it equal to $\bar u$
outside; the preceding freezing argument identifies the solution with
$\inf_{\pi\in\Pi_L}V^{h,\pi}$ and proves \eqref{eq:localized-order}.

\subsection{Two freezing consequences used in the main text}

For $\pi,\pi'\in\Pi_L$, freezing the difference equation at $\pi$ gives
the interior-source bound
\begin{align*}
 \norm{V^{h,\pi}(\tau)-V^{h,\pi'}(\tau)}_{\infty,\R^d}
 \le \int_0^\tau
 \sup_{x\in Q_L^h}
 \big|[a_\pi-a_{\pi'}]\cdot\nabla_0^hV^{h,\pi'}
       +L_\pi-L_{\pi'}\big|(s,x)\dd s,
\end{align*}
because both policies equal $\bar u$ outside $Q_L^h$ and the source of
the difference vanishes there.

Similarly, freezing $V_{n+1}^h-V_n^h$ at $\pi_{n+1}$ shows almost-monotone
improvement: the source on $Q_L^h$ is
$\Gamma_f(V_n^h,\pi_{n+1})-\Gamma_f(V_n^h,\pi_n)\le\bar g_n$ since gaps
are nonnegative, the source vanishes outside, and comparison yields
\[
 V_{n+1}^h(\tau,\cdot)\le V_n^h(\tau,\cdot)+\tau\bar g_n,
\]
so exact true-model greedy improvement is monotone.

\section{A Local Dynkin Formula up to the Exit Time}
\label{app:stopped-proof}

The proof of \cref{thm:evaluation-certificate} derives the exit-time identity
directly. We record its local form here and then give the corresponding
bound for a policy extracted from the final candidate.

\begin{lemma}
\label{lem:local-stopped-dynkin}
Let $Q$ be a bounded open box and let $\phi$ be continuous on
$[0,T]\times\overline Q$, absolutely continuous in time at each spatial
point, with bounded Borel $\partial_\tau\phi$, $\nabla_0^h\phi$, and
$\Delta_h\phi$ on $(0,T)\times Q^h$.  Let $a$ be a bounded Borel drift with
$|a_i|\le2N$, and let $Y$ be the corresponding reverse-time chain started
from $Y_0=x\in Q^h$. Write $\E_x^{\tau,a}$ for its expectation and set
\[
 \sigma_Q:=\inf\{s\ge0:Y_s\notin Q^h\}.
\]
Then for $0\le t\le\tau\le T$,
\begin{align}
 \phi(\tau,x)
 =\E_x^{\tau,a}\Bigl[&\phi\bigl(\tau-(t\wedge\sigma_Q),
 Y_{t\wedge\sigma_Q}\bigr)\nonumber\\
 &+\int_0^{t\wedge\sigma_Q}
 (\partial_\tau\phi-\cB_{a,\tau-s}^h\phi)(\tau-s,Y_s)\dd s\Bigr].
 \label{eq:local-stopped-dynkin}
\end{align}
All values of $\phi$ occurring in \eqref{eq:local-stopped-dynkin} lie in
$\overline Q$.
\end{lemma}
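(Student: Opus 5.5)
The plan is a pathwise argument over the finitely many jumps before $t\wedge\sigma_Q$, followed by a localized martingale (compensator) argument and a limit over the number of jumps.

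\textbf{Range of $Y$.} Work on the Poisson construction of \cref{sec:chain}. Almost surely there are finitely many jumps $S_1<S_2<\cdots$ in $[0,\tau]$, and $Y$ lives on the countable set $x+h\mathbb Z^d$. Before $\sigma_Q$ the chain stays in $Q^h$. At the exit jump it moves to a neighbor of a point of $Q^h$, hence into $Q\subset\overline Q$. So every value $\phi(\tau-s,Y_s)$ with $s\le t\wedge\sigma_Q$ is evaluated in $\overline Q$, which proves the last assertion.

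\textbf{Pathwise decomposition.} Write $\rho=t\wedge\sigma_Q$ and $\Phi_s=\phi(\tau-s,Y_s)$. Between consecutive jumps, $Y$ is constant at a lattice point $y\in Q^h$. Absolute continuity of $\phi(\cdot,y)$ in time gives
\[
\Phi_{S_{k+1}-}-\Phi_{S_k}=-\int_{S_k}^{S_{k+1}}\partial_\tau\phi(\tau-s,y)\dd s.
\]
Because only countably many spatial points are visited, the a.e.-in-time derivative is used off a common null set. Summing,
\[
\Phi_\rho-\Phi_0=-\int_0^\rho\partial_\tau\phi(\tau-s,Y_s)\dd s+\sum_{S_k\le\rho}(\Phi_{S_k}-\Phi_{S_k-}).
\]

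\textbf{Compensator.} At a jump time $s\le\rho$ the pre-jump state is in $Q^h$. By the mark law, the conditional expected increment times the rate $\Lambda_h$ equals $\cB^h_{a,\tau-s}\phi(\tau-s,\cdot)(Y_{s-})$; this is exactly \eqref{eq:generator-rates} with $P_{a,\tau-s}$. Hence
\[
M_s=\sum_{S_k\le s\wedge\rho}(\Phi_{S_k}-\Phi_{S_k-})-\int_0^{s\wedge\rho}\cB^h_{a,\tau-r}\phi(\tau-r,Y_r)\dd r
\]
is a martingale. I would verify this either by the standard marked-point-process compensator formula, or more elementarily by conditioning on the Poisson times and marks one jump at a time. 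Each increment is bounded by $2\sup_{[0,T]\times\overline Q}|\phi|$, which is finite by continuity on a compact set. The number of jumps has all moments, so the integrability needed for the martingale property holds.

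\textbf{Conclusion and main obstacle.} Taking expectations gives $\E[M_\rho]=0$. Rearranging yields \eqref{eq:local-stopped-dynkin}, with the sign convention $\phi(\tau,x)=\E[\Phi_\rho]+\E\int_0^\rho(\partial_\tau\phi-\cB^h\phi)\dd s$. The main technical point is this martingale/optional-stopping step, including measurability of the integrands. I would handle it by first stopping at $\rho\wedge S_m$, where the identity is a finite sum of conditional one-step computations. Then let $m\to\infty$ using dominated convergence: the dominating variable is $C(1+N_\tau)$, where $N_\tau$ is the Poisson count, and it is integrable. Boundedness of $\partial_\tau\phi$ and $\cB^h\phi$ on $Q^h$ controls the integral term.
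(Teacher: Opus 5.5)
Your proposal is correct and follows the paper's proof essentially step for step. Like the paper, you stop first at $t\wedge\sigma_Q\wedge S_m$ and use absolute continuity in time between jumps, off a common null set on the countable lattice $x+h\mathbb Z^d$. You identify the compensator $\cB^h_{a,\tau-s}\phi$ from the mark law, let $m\to\infty$ because there are finitely many jumps, and read off locality from the nearest-neighbor geometry. The only cosmetic difference is in the limit: you dominate the jump sum by $C(1+N_\tau)$, whereas the paper notes that after rearrangement only $\Phi_{\rho\wedge S_m}$ and the drift integral remain, and these are uniformly bounded.
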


\begin{proof}
Stop first at $\sigma_Q\wedge t\wedge S_m$, where $S_m$ is the $m$th jump
time.  On each inter-jump interval, the map
$s\mapsto\phi(\tau-s,Y_s)$ is absolutely continuous and contributes
$-\partial_\tau\phi$. The jump increments have compensator
$(\cB_{a,\tau-s}^h\phi)(\tau-s,Y_{s-})\dd s$.
The visited states belong to the
countable lattice $x+h\mathbb Z^d$, so the time derivatives hold outside
a common null set. Subtracting the compensator therefore gives a
martingale.  Boundedness permits optional stopping.  Letting $m\to\infty$ is
legitimate because a finite-rate Poisson process has finitely many jumps on
compact intervals almost surely and all stopped variables are uniformly
bounded.  Before $\sigma_Q$, the process lies in $Q^h$, so every shifted
value required by the generator lies in $Q$.  At the exit jump,
$Y_{\sigma_Q}\in Q\setminus Q^h$ by nearest-neighbor geometry.  This proves
both the identity and the locality claim.
\end{proof}

Apply \cref{lem:local-stopped-dynkin} to $\wt V_n$ on $Q_{L'}$ and subtract
the corresponding exit-time identity for $V_n^h$.  Taking $t=\tau$ gives
\eqref{eq:stopped-evaluation-identity}; all candidate values lie in
$\overline Q_{L'}$.

The same argument controls a policy extracted from the final candidate $W$.
If a Borel feedback $\pi$ satisfies, for a constant $G\ge0$,
$0\le\Gamma_f(W,\pi)\le G$ on $(0,T)\times Q_{L'}^h$, then
\begin{equation}
 0\le V^{h,\pi}-V^{h,*}
 \le2\mathfrak C_{L'}(W)+TG
 \quad\text{on }[0,T]\times Q_L.
 \label{eq:direct-policy-certificate}
\end{equation}
Indeed, the frozen residual of $W$ differs from $\cR_{B,h}[W]$ by at most
$G$, so the
stopped evaluation estimate gives
$|W-V^{h,\pi}|\le\mathfrak C_{L'}(W)+TG$; combine this with
\cref{cor:direct-bellman} and Bellman verification.

\section{Finite-Validation Results}
\label{smsec:validation}

This section collects the random-sample and $L^2$ validation
routes summarized in the main text.

\subsection{Random Covering Radius}

Let $\mu$ be the validation distribution on a compact $D\subset\R^m$.  Assume there are $\varrho_0,c_0,C_0>0$ such that for every $0<\varrho\le \varrho_0$:
\begin{equation*}
 \mathcal N(D,\varrho/2)\le C_0\varrho^{-m},
 \qquad
 \inf_{z\in D}\mu(B(z,\varrho/2)\cap D)\ge c_0\varrho^m,
\end{equation*}
where $\mathcal N(D,\varrho/2)$ is the covering number.

\begin{theorem}
\label{thm:random-cover}
Let $Z_1,\ldots,Z_M$ be independent samples from $\mu$.  For every $0<\varrho\le \varrho_0$,
\begin{equation*}
 \Pp\bigl(\rho(\{Z_j\},D)>\varrho\bigr)
 \le C_0\varrho^{-m}\exp(-c_0M\varrho^m).
\end{equation*}
Consequently, for $0<\delta<1$, with probability at least $1-\delta$,
every $\Lambda_r$-Lipschitz field satisfies
\begin{equation*}
 \norm{r}_{\infty,D}
 \le\max_{1\le j\le M}|r(Z_j)|+\Lambda_r\varrho
\end{equation*}
for any $\varrho\le \varrho_0$ such that
$C_0\varrho^{-m}e^{-c_0M\varrho^m}\le\delta$.
\end{theorem}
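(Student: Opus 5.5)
The plan is a standard covering-number and union-bound argument, followed by an application of \cref{lem:deterministic-cover}. Fix $0<\varrho\le\varrho_0$ and choose a minimal $\varrho/2$-net $\{c_1,\ldots,c_K\}\subset D$ of $D$, so that $K=\mathcal N(D,\varrho/2)\le C_0\varrho^{-m}$. If the net is taken in the ambient space, first replace it by points of $D$ at the cost of a constant, or assume an internal cover, as the hypothesis implicitly does. The key deterministic observation concerns the balls $B(c_k,\varrho/2)\cap D$. Suppose every one of them contains at least one sample $Z_j$. Take any $z\in D$. It lies within $\varrho/2$ of some center $c_k$, and that ball contains some $Z_j$ with $|c_k-Z_j|<\varrho/2$. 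The triangle inequality then gives $|z-Z_j|<\varrho$, hence $\rho(\{Z_j\},D)\le\varrho$. Consequently the event $\{\rho>\varrho\}$ is contained in the union over $k$ of the events that $B(c_k,\varrho/2)\cap D$ contains no sample.

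Next, I bound each of these empty-ball events using independence and the mass lower bound $\mu(B(c_k,\varrho/2)\cap D)\ge c_0\varrho^m$:
\[
 \Pp\bigl(Z_j\notin B(c_k,\varrho/2)\ \forall j\bigr)
 =\bigl(1-\mu(B(c_k,\varrho/2)\cap D)\bigr)^M
 \le(1-c_0\varrho^m)^M\le e^{-c_0M\varrho^m},
\]
using $1-t\le e^{-t}$. A union bound over the $K\le C_0\varrho^{-m}$ centers gives the stated tail $C_0\varrho^{-m}\exp(-c_0M\varrho^m)$. Measurability of the fill distance is not an issue. Because each ball event is measurable, the argument only needs the inclusion of $\{\rho>\varrho\}$ in a measurable union, which yields an outer-probability bound; alternatively, $\rho$ is a continuous function of the samples on compact $D$.

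For the consequence, fix $\varrho\le\varrho_0$ with $C_0\varrho^{-m}e^{-c_0M\varrho^m}\le\delta$. With probability at least $1-\delta$ we then have $\rho(\{Z_j\},D)\le\varrho$. On this event, \cref{lem:deterministic-cover} applies simultaneously to every $\Lambda_r$-Lipschitz field $r$ and gives $\norm{r}_{\infty,D}\le\max_j|r(Z_j)|+\Lambda_r\rho\le\max_j|r(Z_j)|+\Lambda_r\varrho$. The uniformity over $r$ comes from the fact that the good event depends only on the samples. The only real subtlety I expect is the bookkeeping in the net step: the net centers must lie in $D$, or the mass condition must be available at those centers. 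I would handle this by taking an internal $\varrho/2$-cover, which is what the infimum over $z\in D$ in the hypothesis supports.
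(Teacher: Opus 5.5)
Your proposal is correct and follows the paper's proof essentially step for step: a $\varrho/2$-cover with at most $C_0\varrho^{-m}$ centers, the triangle-inequality inclusion of $\{\rho>\varrho\}$ in the union of empty-ball events, the bound $(1-c_0\varrho^m)^M\le e^{-c_0M\varrho^m}$, a union bound, and then \cref{lem:deterministic-cover} on the good event. Your remarks on taking the cover centers in $D$, so that the mass hypothesis applies, and on the Lipschitz dependence of $\rho$ on the samples are sensible refinements that the paper leaves implicit.
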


\begin{proof}
Take a $\varrho/2$-cover with at most $C_0\varrho^{-m}$ centers.  If every covering ball contains a sample, then every point of $D$ lies within distance $\varrho$ of a sample.  The probability that a fixed ball is empty is at most $(1-c_0\varrho^m)^M\le e^{-c_0M\varrho^m}$.  Apply a union bound and then \cref{lem:deterministic-cover}.
\end{proof}

Up to logarithmic factors, $\varrho$ scales as $M^{-1/m}$. For a
space--time residual, $m=d+1$; for an all-control model error,
$m=d+1+\dim U$ when $U$ is Euclidean. This dimension dependence restricts
generic random covering to low-dimensional validation.

\subsection{\texorpdfstring{$L^2$}{L2} Validation with a Modulus}

A second route first obtains a population $L^2$ estimate by standard independent-sample generalization and then lifts it to the supremum norm.

\begin{lemma}
\label{lem:l2-linfty}
Let $D\subset\R^m$ be a bounded Lipschitz domain.  There exists $C_D>0$ such
that the Lipschitz representative of every $r\in W^{1,\infty}(D)$ with
$\Lip(r;D)\le\Lambda_r$ satisfies
\begin{equation}
 \norm{r}_{\infty,D}
 \le C_D\Lambda_r^{m/(m+2)}\norm{r}_{L^2(D)}^{2/(m+2)}
 +C_D\norm{r}_{L^2(D)}.
 \label{eq:l2-linfty}
\end{equation}
\end{lemma}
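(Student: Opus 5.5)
The plan is a local comparison between the $L^\infty$ size of $r$ and its $L^2$ mass on a small ball around a near-maximizer. Write $\Lambda=\Lambda_r$ and $\|r\|_2=\norm{r}_{L^2(D)}$.

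\emph{Step 1 (interior cone condition).} Since $D$ is a bounded Lipschitz domain, it satisfies a uniform interior cone condition. Concretely, there are $\rho_D>0$ and $c_D>0$ such that
\[
 |B(z,\rho)\cap D|\ge c_D\rho^m
 \qquad\text{for all } z\in\overline D,\ 0<\rho\le\rho_D .
\]
This is the only geometric input, and I would take it as standard for Lipschitz domains (a truncated cone with vertex $z$ fits inside $D$).

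\emph{Step 2 (lower bound near the maximum).} Let $M:=\norm{r}_{\infty,D}$. Because the Lipschitz representative is continuous up to the boundary, I can pick $z$ with $|r(z)|\ge M/2$, or simply $|r(z)|$ arbitrarily close to $M$. For $y\in B(z,\rho)\cap D$ with $\rho\le M/(4\Lambda)$, the Lipschitz bound gives
\[
 |r(y)|\ge |r(z)|-\Lambda\rho\ge M/4 .
\]
One caveat: the Lipschitz constant on a nonconvex domain is taken with respect to the Euclidean distance, as in the definition of $\Lip(r;D)$. If only the intrinsic geodesic constant is known, it is comparable to the Euclidean one for Lipschitz domains, with the factor absorbed into $C_D$.

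\emph{Step 3 (integrate over the ball).} Integrating over the ball in Step 2 gives
\[
 \|r\|_2^2\ge (M/4)^2\,c_D\rho^m
 \qquad\text{for } \rho=\min\{M/(4\Lambda),\rho_D\}.
\]
\emph{Case A: $M/(4\Lambda)\le\rho_D$.} Then $\|r\|_2^2\ge c\,M^{2+m}\Lambda^{-m}$. Hence $M\le C\Lambda^{m/(m+2)}\|r\|_2^{2/(m+2)}$.

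\emph{Case B: $M/(4\Lambda)>\rho_D$.} Then $\|r\|_2^2\ge c\,M^2\rho_D^m$. Hence $M\le C\rho_D^{-m/2}\|r\|_2$, which is the second, linear term.

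The constant $\Lambda=0$ is covered by Case B.

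\emph{Step 4 (combine).} Adding the two case bounds yields the stated inequality with $C_D$ depending on $c_D$, $\rho_D$ and $m$. The main point to watch is Step 1 together with the boundary issue in Step 2. There the near-maximizer may sit at a boundary corner, and the cone condition must be quantified uniformly so that the ball truly retains a fixed fraction of its volume inside $D$. Everything else is a direct computation.
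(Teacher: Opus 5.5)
Your proof is correct and follows the paper's argument step for step: the uniform interior density bound $|B_\rho(z)\cap D|\ge c_D\rho^m$, a point with $|r(z)|\ge M/2$, the radius $\rho=\min\{M/(4\Lambda_r),r_D\}$, and the same two-case rearrangement of $\norm{r}_{L^2(D)}^2\ge (M^2/16)\,c_D\rho^m$. The only cosmetic difference is that you fold $\Lambda_r=0$ into Case B by reading $M/(4\Lambda_r)$ as $+\infty$, whereas the paper handles that case separately.
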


The proof is given in \cref{app:validation-proof}.  The exponent $2/(m+2)$ is again dimension dependent, and neither \cref{thm:random-cover} nor \cref{lem:l2-linfty} applies to a discontinuous residual without additional structure.

\subsection{Switching Policies}

If $\pi_n$ jumps across a switching surface, the frozen residual may fail to be globally Lipschitz even when the value network and learned dynamics are smooth.  Three rigorous options are available: validate separately on policy cells and control the interfaces; use a deterministic net together with a verified local modulus on each cell; or replace the discontinuous selector by a regularized policy and include the induced greedy gap.  Without one of these devices, finite-sample statistics are proxies for, not upper bounds on, the suprema required by the theorems.

\section{Proof of the Validation Lifting Inequality}
\label{app:validation-proof}

\begin{proof}[Proof of \cref{lem:l2-linfty}]
A bounded Lipschitz domain has a uniform interior density property: there are
$r_D,c_D>0$ such that
\begin{equation*}
 |B_\rho(z)\cap D|\ge c_D\rho^m,
 \qquad z\in\overline D,\quad0<\rho\le r_D.
\end{equation*}
Let $M=\norm{r}_{\infty,D}$.  The case $\Lambda_r=0$ follows immediately
from the $L^2$ norm on each connected component, so assume $\Lambda_r>0$.
Choose $z_0\in\overline D$ with $|r(z_0)|\ge M/2$ and set
\[
 \rho=\min\left\{\frac{M}{4\Lambda_r},r_D\right\}.
\]
For $z\in B_\rho(z_0)\cap D$,
$|r(z)|\ge M/2-\Lambda_r\rho\ge M/4$.  Hence
\begin{equation}
 \norm{r}_{L^2(D)}^2
 \ge \frac{M^2}{16}c_D\rho^m.
 \label{eq:lifting-core}
\end{equation}
If $\rho=M/(4\Lambda_r)$, rearranging \eqref{eq:lifting-core} gives
\[
 M\le C_D\Lambda_r^{m/(m+2)}
\norm{r}_{L^2(D)}^{2/(m+2)}.
\]
If $\rho=r_D$, then \eqref{eq:lifting-core} gives
$M\le C_D\norm{r}_{L^2(D)}$.  Combining the two alternatives proves
\eqref{eq:l2-linfty}.
\end{proof}

\section{Experiment Configurations and Solver Details}
\label{app:reproducibility}

All experiments run on one NVIDIA GeForce RTX~5070 (12\,GB) with
PyTorch~2.11.0+cu128.  Grid references, Riccati flows, dynamics identification,
and stated stencil audits use double precision; neural optimization uses
single-precision tensors.  The active-box and sparse-actuator runs enable
TensorFloat-32 (TF32) matrix multiplication.  Random seeds are fixed in the
stand-alone scripts,
and numerical outputs are stored in machine-readable
result files.  All scripts and result files are publicly available at
\url{https://github.com/yeoneung/semi_discrete_pinn}.
Unless stated otherwise, neural summaries use three independent training
seeds.

\paragraph{Grid references}
Except for the deliberately nonmonotone rows of the E4 envelope test,
semi-discrete Bellman and frozen-policy equations are integrated by monotone
explicit Euler in the rate form
\eqref{eq:generator-rates}, with time step
$\Delta\tau=0.45/\Lambda_h=0.45\,h/(2dN)$ except in the consistency study
specified below.  Because the last time step may be shortened, the number of
jumps in one coordinate is a sum of independent Bernoulli variables, with
mean $\theta=2NT/h$, whose moment generating function is bounded by that of
$\Pois(\theta)$.  Hence, for a coordinate margin $D>0$ and
$m=\lceil D/h\rceil\ge\theta$, its exit tail is at most
$d(e\theta/m)^m$.  Replication or exact-solution padding therefore changes an
interior value only by the corresponding payoff range times this tail.
Reference solves use up to $2001^2$ nodes ($d=2$,
$h=0.0025$); the finest solve used in the neural refinement study completes
in $26$ seconds on the GPU.

\begin{table}[tbhp]
\centering
\scriptsize
\setlength{\tabcolsep}{2.5pt}
\caption{Configurations of the grid and chain experiments.  Here $Q$ is
the computational box and $K$ the target box.  A displayed
Courant--Friedrichs--Lewy (CFL) step is
reduced slightly when needed so that the final step lands at $T$.}
\label{smtab:grid-config}
\begin{tabularx}{\textwidth}{@{}lccclYYc@{}}
\toprule
Experiment & $d$ & $T$ & $N$ & $h$ & $Q$ & $K$ & $\Delta\tau$\\
\midrule
E1 consistency & 1 & $0.5$ & $0.5$ & $0.04$--$0.0025$
& $(-3,3)$ & $(-1,1)$ & $0.05625/\Lambda_h$\\
E1 consistency & 2 & $0.5$ & $0.5$ & $0.04$--$0.005$
& $(-2,2)^2$ & $(-0.75,0.75)^2$ & $0.05625/\Lambda_h$\\
E2 exact PI & 1 & $1$ & $0.5$ & $0.02$--$0.0025$
& $(-3,3)$ & $(-1,1)$ & $0.45/\Lambda_h$\\
E3 chain tail & 1 & $1$ & $0.5$ & $0.01$
& --- & --- & continuous time\\
E3 localization & 1 & $1$ & $0.5,1$ & $0.01$
& $(-5,5)$ & $(-1,1)$ & $0.45/\Lambda_h$\\
E4 envelope & 1 & $0.5$ & $0$--$4$ & $0.01$
& $(-3,3)$ & $(-1,1)$ & $0.45h/(2\max\{N,0.5\})$\\
\bottomrule
\end{tabularx}
\end{table}

\paragraph{Riccati reference}
The linear--quadratic regulator (LQR) calibration uses
$\dot x=Ax+u$, $L=(|x|^2+|u|^2)/2$, $g=|x|^2/2$, $T=1$, and
$U=[-1.5,1.5]^d$, where
$(Ax)_i=-x_i+0.2x_{i-1}+0.2x_{i+1}$ with periodic indices.  Here
$h=0.01$ and $N=1.45$.  For the unconstrained reference,
$V^h=x^\top P(\tau)x/2+c(\tau)$, where
$P'=A^\top P+PA+I-P^2$, $P(0)=I$, and
$c'=Nh\operatorname{tr}P$, $c(0)=0$.  The Fourier-mode Riccati flow is
integrated in double precision by the classical fourth-order Runge--Kutta
method with $4000$ steps, and $c$ by the trapezoidal rule.  No exact or
learned feedback is clipped on the evaluation samples at
$\tau=0.25,0.5,0.75,1$; at $\tau=1$, the analytic row-sum box bound is
$0.543<u_{\max}=1.5$.  Thus the Riccati solution is used as a local
reference for the compact-control equation, not as a certified whole-space
compact-control value.

\paragraph{High-dimensional tests}
The active-box benchmark uses $U=[-0.5,0.5]^d$, $h=0.01$, $N=0.75$, and
$T=0.75$.  The shared on-site/pair architecture has $3783$ parameters at
every dimension.  Checkpoints are selected only by a fixed held-out
frozen-policy residual.  The sparse Allen--Cahn test uses $h=0.005$,
$N=5.5$, $T=0.75$, $|u_i|\le3$, and actuators $i=1,9,17,\ldots$.
Proportional gains and separately trained linear and nonlinear local policies
are evaluated on the same ten independently generated state ensembles.

\paragraph{Neural configurations}
All value models impose the initial condition exactly as
$\wt V_\theta(\tau,x)=g(x)+\tau F_\theta(\tau,x)$, so $r_n\equiv0$;
$F_\theta$ is either a multilayer perceptron (MLP) or the stated structured
model.  Activations are sigmoid linear units (SiLU), and optimization uses
Adam or AdamW with cosine learning-rate decay.
\Cref{tab:app-config} lists the per-experiment constants.  Generic shifted
queries are coordinate streamed; the finite-range models instead assemble
their exact local stencil in $O(RBd)$ work for interaction radius $R$.

\begin{table}[tbhp]
\centering
\scriptsize
\setlength{\tabcolsep}{2.5pt}
\caption{Configurations of the neural experiments.  ``PI'' denotes the
number of outer policy iterations; steps are Adam steps per iteration.}
\label{tab:app-config}
\begin{tabularx}{\textwidth}{@{}>{\raggedright\arraybackslash}XYYYYYYY@{}}
\toprule
Experiment & $d$ & \makecell{$h$\\$\nu_h$} & $N$ & widths & batch &
\makecell{PI\\$\times$steps} & \makecell{learning\\rate}\\
\midrule
Estimator & 1 & \makecell{$0.01$\\$0.005$} & $0.5$ &
$(64)^3$ & 4096 & $1\times13500$ & $2\cdot10^{-3}$\\
Structured certificate & 1 & \makecell{$0.1$\\$0.1$} & $1$ &
time net, width 8 & 4096 & $1\times5000$ & $5\cdot10^{-3}$\\
Duffing & 2 & \makecell{$0.005$\\$0.0105$} & $2.1$ &
$(64)^3$ & 8192 & $6\times3000$ & $2\cdot10^{-3}$\\
Duffing $h$-study & 2 & \makecell{$0.02$--$0.005$\\$2.1h$} & $2.1$ &
$(64)^3$ & 8192 & $6\times3000$ & $2\cdot10^{-3}$\\
Spurious study & 1 & \makecell{$0.01$\\$0.005$} & $0.5$ &
$(64)^3$ & 4096 & $1\times6000$ & $2\cdot10^{-3}$\\
Three-way, smooth & 2 & \makecell{$0.005$\\$0.0105$} & $2.1$ &
$(64)^3$ & 8192 & $1\times18000$ & $2\cdot10^{-3}$\\
Switching Duffing & 2 & \makecell{$0.005$\\$0.0105$} & $2.1$ &
$(64)^3$ & 8192 & $2\times3000$ & $2\cdot10^{-3}$\\
Adaptive eikonal & 1 & \makecell{$0.01$\\$0.005$} & $0.5$ &
$(64)^3$ & 4096 & $1\times6000$ & $2\cdot10^{-3}$\\
Learned dynamics & 2 & \makecell{$0.005$\\$0.0105$} & $2.1$ &
cubic library $+(64)^3$ & 8192 & $4\times3000$ & $2\cdot10^{-3}$\\
Spectral LQR & 64--1024 & \makecell{$0.01$\\$0.0145$} & $1.45$ &
deg.-4 spectral quad. & 256 & $4\times3000$ & $2\cdot10^{-3}$\\
Active box & 64--1024 & \makecell{$0.01$\\$0.0075$} & $0.75$ &
local, width 40 & 128--160 & $3\times600$ & $10^{-3}$\\
Sparse Allen--Cahn & 256, 1024 & \makecell{$0.005$\\$0.0275$} & $5.5$ &
range $0$--$4$, width 40 & 128--160 & $(3$--$4)\times(600$--$700)$ & $10^{-3}$\\
\bottomrule
\end{tabularx}
\end{table}

\paragraph{Sampling and selection}
The estimator uses $Q_{L'}=(-3,3)$ and target $Q_L=(-1,1)$ under the fixed
feedback stated in the main text.  The Duffing runs sample
$Q_{L'}=(-1.5,1.5)^2$, use $K=(-1,1)^2$ for errors, start from the zero
policy, and warm-start each subsequent value network; the reported iterate is
the final one.  The switching-control comparison instead starts from the
terminal-greedy policy.  The eikonal runs sample $(-2,2)$ and retain the final iterate.
The spectral LQR samples $[-0.9,0.9]^d$.  The active-box training mixture and
fixed validation set lie in $[-\pi,\pi]^d$; it starts from the zero policy and
selects checkpoints by held-out frozen-policy residual.  Sparse Allen--Cahn
samples its residual on $[-0.9,0.9]^d$ together with structured states, starts
from the validation-selected proportional feedback, and selects value
checkpoints by held-out residual.  Error normalizations are defined with the
corresponding result tables below.

\paragraph{Validation protocol}
Residual maxima are reported over deterministic validation nets
(\cref{sec:exp-certificate}: $201\times1201$ with a Lipschitz correction
estimated from independent samples) or over independent random samples of
size $4\cdot10^4$.  These quantities are labeled empirical because the
network moduli are not independently verified upper bounds.  Learned-model
error maxima use a $201\times201\times21$ state--control grid and are likewise
finite-grid diagnostics.  Monte Carlo tails use $2\times10^6$ paths with the
same marked-Poisson construction as the proofs.

\clearpage
\section{Additional Numerical Diagnostics}
\label{smsec:additional-experiments}

This section records the plots and tables summarized in
\cref{sec:experiments}.  All values are read from the machine-readable
result files distributed with the stand-alone scripts.

\subsection{Consistency Against an Exact Viscosity Solution}

For the eikonal benchmark of \cref{sec:exp-consistency}, the reported error
is the final-time pointwise supremum
\[
 \norm{V^{h,*}(T,\cdot)-V(T,\cdot)}_{\infty,K}.
\]
In \cref{smfig:e1-consistency}, the last three one-dimensional refinements have rates
$0.545$, $0.534$, and $0.525$; the two-dimensional rates are $0.616$,
$0.618$, and $0.603$.  Thus the observed behavior is close to $\sqrt h$.
Here $\Delta\tau=0.05625/\Lambda_h$.  Halving this step on the finest meshes
changes the solution by $2.03\times10^{-4}$ in $d=1$ and
$1.33\times10^{-4}$ in $d=2$, respectively $1.4\%$ and $0.5\%$ of the
reported errors; the audit is stored in the same machine-readable result
file.
The benchmark uses a nonunique minimizer at $p=0$ and the sharp envelope
$N=1/2$; the cited estimate \eqref{eq:sqrt-h-consistency} assumes a unique
Lipschitz selector and $N\ge1$.

\begin{figure}[tbhp]
\centering
\includegraphics[width=0.52\textwidth]{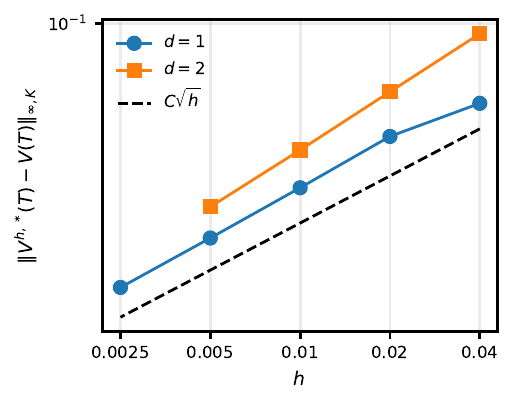}
\caption{Final-time pointwise-supremum error of the semi-discrete Bellman
solution against the exact eikonal viscosity solution: $d=1$
($h=0.04,\ldots,0.0025$) and $d=2$ ($h=0.04,\ldots,0.005$), at
$N=M_{\comp}/2$.}
\label{smfig:e1-consistency}
\end{figure}

\subsection{Exact Policy Iteration}

For the saturating test described in the main text, exact policy iteration
starts from $\pi_0\equiv0$.  We use
$h=0.02,0.01,0.005,0.0025$.  The errors are nearly independent of
$h$ and decay approximately quadratically until roundoff; see
\cref{smfig:e2-pi}.  Write
$e_{n,K}(s):=\sup_{x\in K}(V_n^h-V^{h,*})(s,x)$.  The empirical
unrestricted slice-gradient ratio on $K$ scales as $h^{-1}$.  Restricting to slices with
$e_{n,K}(\tau)\ge0.01\sup_s e_{n,K}(s)$ gives values in $[2,15]$ for $n\le3$, with less
than $6\%$ variation over the tested range.  To retain all time slices, set
\begin{equation*}
 \widehat C_{n,h}^{\rm int}:=
 \frac{\displaystyle\int_0^T\sup_{x\in K}
 |\nabla_0^h(V_n^h-V^{h,*})(s,x)|\dd s}
 {\displaystyle\int_0^T e_{n,K}(s)\dd s}.
\end{equation*}
For $n=0,1,2,3$, respectively, its ranges over the four meshes are
$1.99$--$2.00$, $2.87$--$2.94$, $4.28$--$4.43$, and $7.57$--$9.10$.
The full-horizon average varies moderately over the tested refinements.
These diagnostics average or restrict time on $K$, whereas
\eqref{eq:slice-pi-gradient} requires a global time-slice bound.

\begin{figure}[tbhp]
\centering
\includegraphics[width=0.95\textwidth]{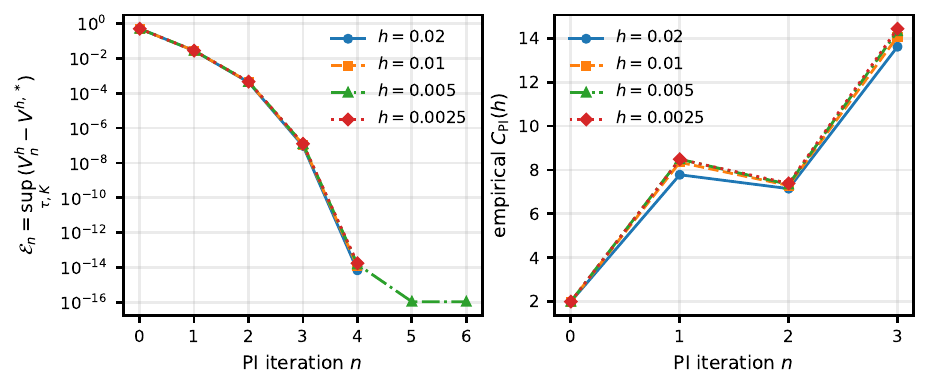}
\caption{Exact fixed-$h$ policy iteration.  Left: $\cE_n$ for four values
of $h$.  Right: empirical slice-gradient ratio on slices carrying at
least $1\%$ of the peak error.}
\label{smfig:e2-pi}
\end{figure}

\subsection{Estimator Effectivity}

\Cref{smtab:e5-certificate} gives the three-seed checkpoints summarized in
\cref{sec:exp-certificate}.  The initial mismatch is zero and the strip
term is below $10^{-16}$, so the indicator is the empirically lifted
residual maximum multiplied by $T=1$.  It exceeds the measured error in all
$12$ seed--checkpoint pairs; the smallest observed slack is $2.34\cdot10^{-3}$.

\begin{table}[tbhp]
\centering
\small
\caption{Empirical residual indicator versus time-sampled reference-grid
error ($d=1$, $h=0.01$).  Entries are mean $\pm$ sample standard deviation
over three seeds; effectivity is their ratio.}
\label{smtab:e5-certificate}
\begin{tabular}{lccc}
\toprule
Adam steps & indicator & measured error & effectivity\\
\midrule
$500$ & $0.122\pm0.020$ & $0.0393\pm0.0083$ & $3.26\pm1.16$\\
$1500$ & $0.0467\pm0.0028$ & $0.0298\pm0.0028$ & $1.57\pm0.06$\\
$4500$ & $0.0352\pm0.0015$ & $0.0291\pm0.0005$ & $1.21\pm0.03$\\
$13500$ & $0.0238\pm0.0016$ & $0.0194\pm0.0041$ & $1.26\pm0.24$\\
\bottomrule
\end{tabular}
\end{table}

The reference maximum is taken over ten time slices and hence lower-bounds
the full space--time error.  Together with the sample-based Lipschitz
correction, this makes the reported effectivity an empirical diagnostic, not
a rigorous certificate.

For comparison, the direct certificate of \cref{cor:direct-bellman} is
verified on the one-dimensional problem

\[
 U=[-1,1],\quad f(u)=u,\quad h=0.1,\quad N=T=1,
 \quad g(x)=\cos x,
\]

with running cost
$L(\tau,x,u)=1+\tfrac12(\sin h/h)^2e^{-2\gamma\tau}\sin^2x+u^2/2$,
where $\gamma=2(1-\cos h)/h$.  Its exact semi-discrete value is
$V^{h,*}(\tau,x)=e^{-\gamma\tau}\cos x+\tau$.  A $25$-parameter candidate
$W_\theta(\tau,x)=A_\theta(\tau)\cos x+\tau$ is trained from Bellman-residual
samples, without value targets. The prescribed spatial factor permits
analytic verification. On
$Q=(-8,8)$ and $K=(-1,1)$, spatial
dependence of the residual is bounded analytically.  In time we use $20000$
cells, outward-rounded interval evaluation at their endpoints, and the
verified global Lipschitz bound $0.43950$.  The resulting terms are

\[
 \rho_B\le 5.802\mathbin{\cdot}10^{-5}
       +1.099\mathbin{\cdot}10^{-5}
       =6.901\mathbin{\cdot}10^{-5},\qquad
 (A_Q+M_V)\Psi_h\le4.98\mathbin{\cdot}10^{-17}.
\]

Here $A_Q\le2.00001$, $M_V\le2.99834$, and
$\Psi_h\le9.955\cdot10^{-18}$.  The initial mismatch is zero, so the
certificate is $6.901\cdot10^{-5}$.
The same interval procedure gives the independent true-error bound
$9.000\cdot10^{-6}$ (the dense-grid value is $1.687\cdot10^{-6}$).
All certificate terms and the verified true-error bound use $80$-digit Arb
ball arithmetic through \texttt{python-flint} 0.9.0, with the stored
binary64 weights interpreted exactly; no sampled maximum enters the stated
certificate.  The analytic control bound is
$0.99834<1$, so the interior formula used in the residual is valid.

\subsection{Continuous and Shifted Residuals}

The eikonal test of \cref{sec:exp-pinn} uses identical networks,
optimizers, and budgets for continuous and shifted residuals.  Continuous
runs converge to the spurious almost-everywhere solution under all three
sampling schemes.  The shifted residual exposes the stencil-band defect,
but reaches the lower-error solution only when the band receives $O(1)$
sampling mass; see \cref{smtab:e6b-pinn}.

\begin{table}[tbhp]
\centering
\footnotesize
\setlength{\tabcolsep}{4pt}
\caption{Continuous versus shifted residual on the eikonal problem
($h=0.01$, three seeds; ranges over seeds).  Both residual maxima are
evaluated on the same validation net.}
\label{smtab:e6b-pinn}
\begin{tabular}{llcccc}
\toprule
residual & sampling & pointwise-sup error & final loss &
cont.\ max & shifted max\\
\midrule
continuous & uniform / kink / band & $0.5000$ & $3\cdot10^{-8}$
& $0.001$ & $2.00$\\
shifted & uniform & $0.4933$ & $6\cdot10^{-3}$ & $0.014$ & $1.99$\\
shifted & kink concentrated & $0.41$--$0.44$ & $3.5\cdot10^{-2}$
& $0.14$--$0.22$ & $1.90$\\
shifted & $30\%$ on $|x|\le3h$ & $0.070$--$0.096$
& $3\cdot10^{-3}$ & $1.04$--$1.14$ & $0.71$--$0.86$\\
\bottomrule
\end{tabular}
\end{table}

On smooth Duffing, the continuous-residual target is compared with an
$h=0.0025$ grid proxy and gives error $2.0$--$2.3\times10^{-2}$; it is not
ranked against the shifted methods because their reference is the
$h=0.005$ semi-discrete solution.  For the matched comparison at $h=0.005$,
direct shifted training gives $0.00688\pm0.00158$ in $70.3\pm0.6$ seconds,
whereas SDPI gives $0.01029\pm0.00150$ in $73.9\pm0.4$ seconds.  Both use
the same network, box, batch size, optimizer family, $N$, and $18000$ Adam
steps; SDPI's time includes all six policy updates.  The two grid references
at $h=0.005$ and $0.0025$ differ by $1.79\times10^{-2}$.

For the switching-control comparison in \cref{smfig:e6-matched}, all settings are unchanged except that
the control penalty is removed, so
$\min_{|u|\le1}uD_2^{0,h}V=-|D_2^{0,h}V|$.  Seed $0$ fixes the
terminal-greedy initialization $\pi_0(x)=-\operatorname{sign}(x_2)$ and
$6000$-step budget; the reported five
seeds are held out from that choice.  SDPI gives $0.02209\pm0.00445$ versus
$0.03029\pm0.00504$ for direct training and is lower in all five pairs; the
mean paired ratio is $0.728$, with bootstrap $95\%$ interval
$[0.687,0.772]$.  The corresponding times are $43.0\pm0.4$ and
$37.7\pm0.1$ seconds.  Giving direct training $9000$ steps reduces its mean
error to $0.02440\pm0.00274$, still above SDPI in four pairs.  Halving the
reference time step changes the value on $K$ by at most $7.97\cdot10^{-5}$;
enlarging the grid box from $(-2.5,2.5)^2$ to $(-3,3)^2$ changes it by
$6.7\cdot10^{-16}$.

\begin{figure}[tbhp]
\centering
\includegraphics[width=0.96\textwidth]{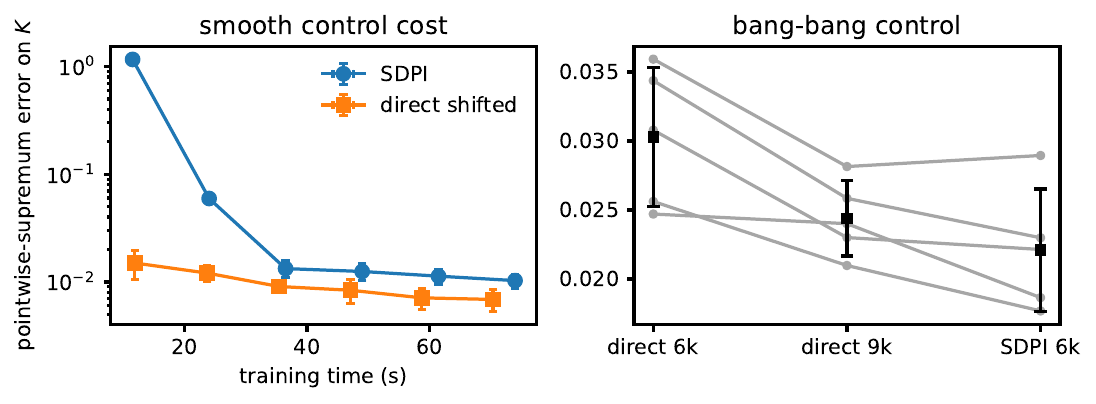}
\caption{Matched Duffing comparisons against the common $h=0.005$
semi-discrete reference.  Left: on the smooth quadratic-control problem,
direct training is more accurate through $18000$ steps.  Right: on the
bang--bang variant, SDPI has lower error at $6000$ steps on all five held-out
seeds; direct is also shown with $9000$ steps.  Black squares and bars show
the mean and sample standard deviation.}
\label{smfig:e6-matched}
\end{figure}

\paragraph{Residual-adaptive refinement}
In \cref{smfig:e6e-adaptive}, each refresh ranks $65536$ candidates by the
relevant residual and retains $512$ points for replay. With the same $6000$ Adam steps,
batch size, and $30\%$ replay fraction, the three-seed errors are
$0.04398\pm0.00260$ for shifted-residual adaptation and
$0.50000\pm0.00004$ for continuous-residual adaptation.  The first refresh
places $0.768\pm0.092$ and $0.0039\pm0.0068$, respectively, of the selected
points in $|x|\le3h$.  The comparison equalizes collocation budgets, not wall
time.  A six-seed shifted-residual continuation over
$h=0.02,0.01,0.005$ gives mean errors $0.0504$, $0.0327$, and $0.0347$; the
last refinement reaches a plateau.

\begin{figure}[tbhp]
\centering
\includegraphics[width=0.92\textwidth]{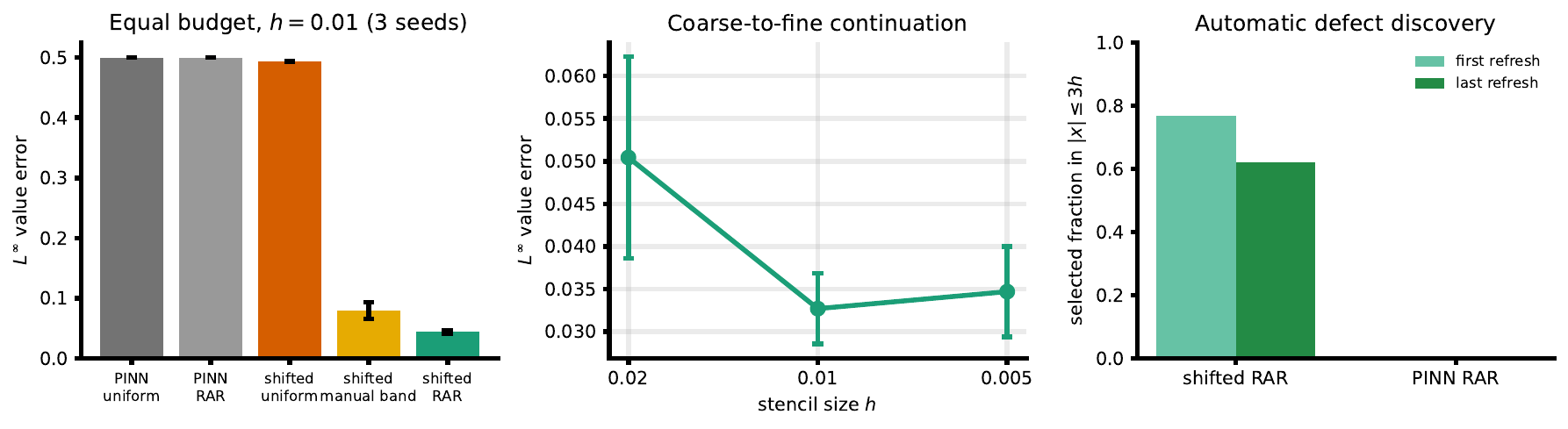}
\caption{Residual-adaptive eikonal experiment.  The shifted residual
identifies the stencil-scale defect and reduces the value error; the
continuous residual does not select the kink. Error bars show sample
standard deviations.}
\label{smfig:e6e-adaptive}
\end{figure}

\FloatBarrier
\subsection{Neural Refinement on the Duffing Problem}

The neural refinement study uses the same architecture, optimizer, six
policy iterations, and $3000$ steps per iteration as the Duffing experiment
in the main text.  Errors are maxima over $K=(-1,1)^2$ at
$\tau=0.25,0.5,0.75,1$; neural entries are ranges over three independent
training seeds.  The grid-bias proxy compares nested grid values with the
$h_{\mathrm{ref}}=0.0025$ monotone solution.

\begin{table}[tbhp]
\centering
\small
\caption{Neural $h$-refinement on Duffing.  ``Own'' denotes the monotone
reference at the training mesh and ``fine'' the $h_{\mathrm{ref}}=0.0025$ proxy.}
\label{smtab:duffing-h-refinement}
\begin{tabular}{cccc}
\toprule
$h$ & neural--own error & grid-bias proxy & neural--fine error\\
\midrule
$0.02$  & $0.0047$--$0.0071$ & $0.1240$ & $0.1276$--$0.1282$\\
$0.01$  & $0.0067$--$0.0094$ & $0.0539$ & $0.0594$--$0.0614$\\
$0.005$ & $0.0089$--$0.0119$ & $0.0179$ & $0.0255$--$0.0266$\\
\bottomrule
\end{tabular}
\end{table}

\subsection{Learned-Dynamics Diagnostics}

We sample $x$ in $(-1.5,1.5)^2$ and $u$ in $[-1,1]$, add independent Gaussian
noise with standard deviation $0.02$ to the drift labels, and fit by
ridge least squares using all monomials in $(x_1,x_2,u)$ of total degree at
most three.  Thus the library contains the Duffing drift without revealing
which coefficients are nonzero.  The finite-grid envelope is evaluated on a
$201\times201\times21$ state--control grid and is the maximum Euclidean
drift error over that grid and all tested controls.  Five identification
seeds are used at each of $M=100,300,1000,3000,10000$; its mean decays as
$M^{-0.605}$.

\begin{table}[tbhp]
\centering
\scriptsize
\setlength{\tabcolsep}{2.5pt}
\caption{Learned Duffing dynamics and end-to-end SDPI.  The finite-grid
envelope uses five identification seeds; value error and transfer term use
three end-to-end seeds.  Entries are mean $\pm$ sample standard deviation.}
\label{smtab:e7c-learned}
\begin{tabular}{rccccc}
\toprule
$M$ & finite-grid $\bar\eta$ & value error & excess over known & model term
& margin pass\\
\midrule
$10^2$ & $0.0992\pm0.0203$ & $0.0342\pm0.0079$ & $0.0243$
& $0.1099\pm0.0300$ & $0/5$\\
$10^3$ & $0.0241\pm0.0030$ & $0.0144\pm0.0020$ & $0.00450$
& $0.0282\pm0.0016$ & $0/5$\\
$10^4$ & $0.00538\pm0.00071$ & $0.0131\pm0.0015$ & $0.00319$
& $0.00575\pm0.00110$ & $5/5$\\
\bottomrule
\end{tabular}
\end{table}

The known-dynamics error is $0.00992$.  The model term evaluated here is
\[
 T\bar\eta\,\esssup_{\tau\in(0,T)}
 \norm{\nabla_0^h\wt V_n(\tau,\cdot)}_{\infty,Q_{L'}^h},
\]
using the finite-grid envelope and a separate candidate sample.  ``Margin pass'' means
that the measured componentwise model error leaves the robust monotonicity
margin in \eqref{eq:robust-monotonicity} positive.  In each of the nine end-to-end
runs, the measured model-error term exceeds the excess over the baseline,
providing a finite-grid audit of the dependence in
\cref{thm:evaluation-certificate}; see \cref{smfig:e7c-learned}.

\begin{figure}[tbhp]
\centering
\includegraphics[width=0.92\textwidth]{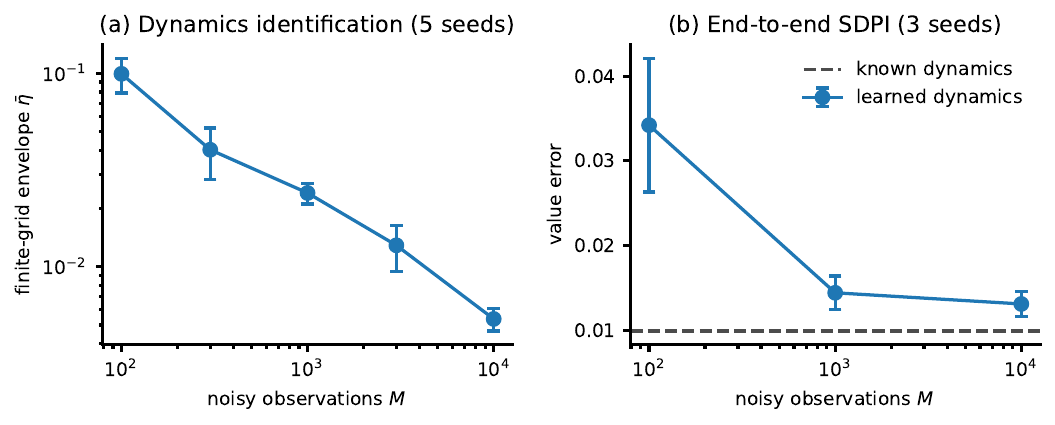}
\caption{Noisy structured dynamics identification and the resulting SDPI
error.  Error bars summarize the independent seeds described above.}
\label{smfig:e7c-learned}
\end{figure}

\subsection{Structured LQR Calibration}

The architecture calibration in \cref{smfig:e8c-lqr} uses the periodic coupled system
$(Ax)_i=-x_i+0.2x_{i-1}+0.2x_{i+1}$ with $|u_i|\le1.5$.  Its Fourier modes
diagonalize the Riccati flow, and centered differences are exact on the
quadratic value.  The spectral model learns a shared degree-four function of
time and the graph eigenvalue.  It is trained only through the
frozen-policy shifted residual; the Riccati solution is used for evaluation.

\begin{table}[tbhp]
\centering
\scriptsize
\setlength{\tabcolsep}{2pt}
\caption{Coupled LQR calibration.  Spectral entries are ranges over three
seeds; generic-MLP entries are single runs used as an
architecture diagnostic.}
\label{smtab:e8c-lqr}
\begin{tabular}{llcccc}
\toprule
model & $d$ & relative $L^2$ & relative sampled max & policy rel. $L^2$
& cost ratio\\
\midrule
spectral & 64 & $(3.18\text{--}5.09)\cdot10^{-4}$ & $(1.39\text{--}2.62)\cdot10^{-3}$
& $(1.71\text{--}2.89)\cdot10^{-3}$ & $1.00000$\\
spectral & 256 & $(7.39\text{--}8.35)\cdot10^{-4}$ & $(3.43\text{--}4.66)\cdot10^{-3}$
& $(8.39\text{--}9.74)\cdot10^{-3}$ & $1.00002$\\
spectral & 1024 & $(1.13\text{--}1.57)\cdot10^{-3}$ & $(3.05\text{--}4.28)\cdot10^{-3}$
& $(1.48\text{--}1.86)\cdot10^{-2}$ & $1.00004$--$1.00005$\\
generic MLP & 64 & $8.92\cdot10^{-3}$ & $3.39\cdot10^{-2}$ & $7.07\cdot10^{-2}$ & $1.001$\\
generic MLP & 256 & $6.76\cdot10^{-2}$ & $0.319$ & $0.836$ & $1.239$\\
generic MLP & 1024 & $0.332$ & $1.03$ & $2.25$ & $10.59$\\
\bottomrule
\end{tabular}
\end{table}

For $h=0.02,0.01,0.005$, the three-seed mean neural errors against the
corresponding semi-discrete values are $(7.42,7.98,8.21)\cdot10^{-4}$,
whereas the viscosity biases are $0.170$, $0.0852$,
and $0.0426$.  The total-error rates are $0.999$ and $0.999$.  This confirms
the expected bias separation for the structured quadratic family. The
audited controls are unsaturated; the following nonlinear benchmark tests
active compact-control constraints.

\begin{figure}[tbhp]
\centering
\includegraphics[width=0.92\textwidth]{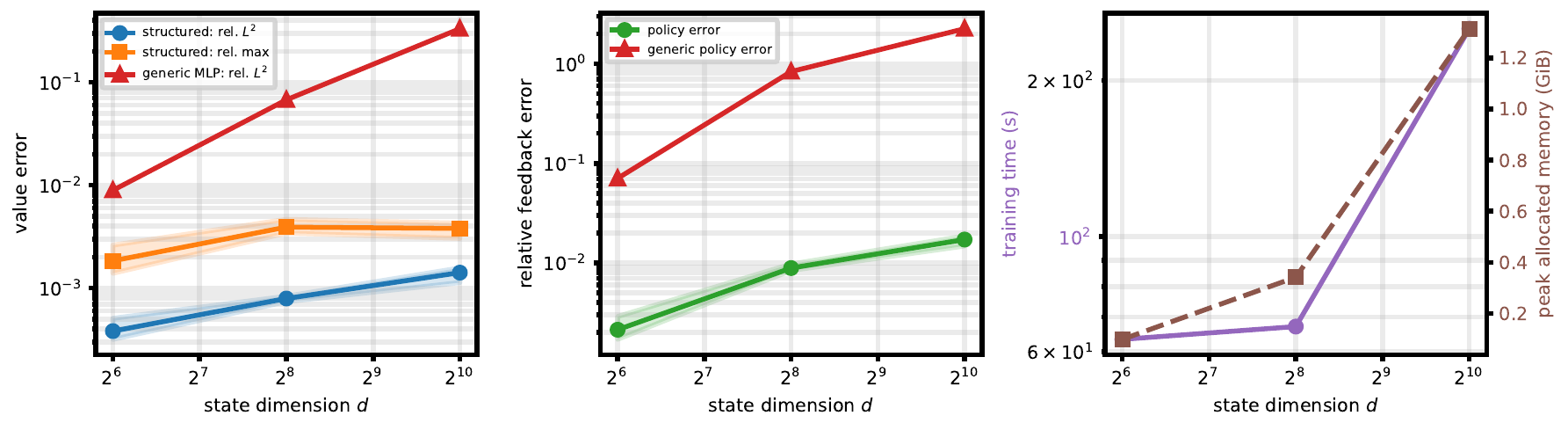}
\caption{Structured LQR calibration through $d=1024$: accuracy of the
spectral model and deterioration of a generic MLP.}
\label{smfig:e8c-lqr}
\end{figure}

\subsection{Nonlinear Active-Box Exact Reference}

For the problem in \cref{sec:exp-scaling}, write
$\operatorname{sat}_{0.5}(z)=\max\{-0.5,\min\{z,0.5\}\}$.  The
componentwise minimizer is
$u_i^*=\operatorname{sat}_{0.5}(-D_i^{0,h}V^{h,*})$.  The manufactured
state cost is chosen by substituting the displayed $V^{h,*}$ into the
semi-discrete Bellman equation and is globally nonnegative; its analytic
lower bound is $0.2025d$.  The exact construction supplies the running cost,
but value and policy targets are not used in training.  Float64 audits give
maximum discrepancies $1.61\cdot10^{-13}$ for the centered gradient,
$4.06\cdot10^{-10}$ for the discrete Laplacian, and zero for the assembled
Bellman identity.

The value model is
\[
 \begin{aligned}
 \wt V_\theta=g+\tau\biggl[{}
 &\sum_{i=1}^d\phi_\theta(\tau/T,\sin x_i,\cos x_i)\\
 &+\sum_{i=1}^{d-1}\psi_\theta\bigl(\tau/T,\sin(x_i-x_{i+1}),
       \cos(x_i-x_{i+1}),1\bigr)+d c_\theta(\tau/T)\biggr].
 \end{aligned}
\]
The shared on-site and pair networks have two hidden layers of width $40$;
$c_\theta$ has one hidden layer of width $20$.  The complete terminal cost
$g=E$, including its pair term, is hard-coded in every run.  Thus the
radius-zero ablation removes the learned pair correction $\psi_\theta$, not
the pair term in $g$.

\begin{table}[tbhp]
\centering
\scriptsize
\setlength{\tabcolsep}{2.5pt}
\caption{Active-box benchmark.  Entries are mean $\pm$ sample standard
deviation over three training seeds; active fraction is fixed by the common
test set.  The first value error is normalized by $\norm{V^{h,*}-g}_{L^2}$;
the second removes the known offset $1.5d\tau$ from both values before
normalization.}
\label{smtab:e11-active}
\begin{tabular}{rcccc}
\toprule
$d$ & correction rel. $L^2$ & offset-free rel. $L^2$ & policy rel. $L^2$
& active fraction / accuracy\\
\midrule
64 & $(6.38\pm3.73)\cdot10^{-4}$ & $(3.69\pm2.16)\cdot10^{-4}$
& $(1.05\pm0.47)\cdot10^{-2}$ & $0.6840$ / $0.9955\pm0.0022$\\
256 & $(4.84\pm1.94)\cdot10^{-4}$ & $(2.80\pm1.12)\cdot10^{-4}$
& $(1.46\pm0.63)\cdot10^{-2}$ & $0.6848$ / $0.9944\pm0.0024$\\
1024 & $(6.15\pm2.49)\cdot10^{-4}$ & $(3.60\pm1.46)\cdot10^{-4}$
& $(2.82\pm1.03)\cdot10^{-2}$ & $0.6853$ / $0.9889\pm0.0041$\\
\bottomrule
\end{tabular}
\end{table}
\FloatBarrier

At $d=256$, removing the learned nearest-neighbor correction gives correction
error $0.29012\pm0.00012$, policy error $0.31224\pm0.00046$, and active-set
accuracy $0.84310\pm0.00021$; restoring it yields the middle row of
\cref{smtab:e11-active}.  The local interaction is therefore essential to
the high-dimensional result.

A matched $d=1024$ comparison uses the same architecture, batch, training and
test samples, AdamW optimizer, and three $600$-step cosine cycles.  Direct
Bellman-residual training gives correction error
$(5.55\pm1.21)\cdot10^{-4}$, policy error
$(2.72\pm0.40)\cdot10^{-2}$, and time $26.81\pm0.01$ seconds.  The SDPI
values are $(6.15\pm2.49)\cdot10^{-4}$,
$(2.82\pm1.03)\cdot10^{-2}$, and $30.23\pm0.02$ seconds.  Checkpoints for
both methods are selected by held-out residuals without exact targets.  The
two solvers attain similar accuracy, with a shorter runtime for direct
training.

\subsection{CUDA Working Memory and Stencil Evaluation}

The scaling benchmark in \cref{smfig:cuda-scaling} uses the range-four, width-$40$ local-energy model
($3682$ parameters) and batch size $64$.  After warmup, entries are means
over ten CUDA-timed repetitions.  Peak allocated memory has log--log exponent
$0.968$ over the displayed dimensions.  The table reports complete training
steps and online policy evaluations for the structured local implementation.
The full-query comparison below times only the two stencil evaluations.

\begin{table}[tbhp]
\centering
\small
\caption{CUDA scaling of the local stencil on one NVIDIA GeForce RTX~5070.}
\label{smtab:cuda-scaling}
\begin{tabular}{rrrr}
\toprule
$d$ & peak allocated (MiB) & training step (ms) & policy evaluation (ms)\\
\midrule
64 & 123.6 & 12.62 & 3.65\\
128 & 236.7 & 12.71 & 3.60\\
256 & 456.7 & 12.68 & 3.56\\
512 & 896.5 & 12.91 & 3.60\\
1024 & 1775.9 & 19.87 & 3.85\\
2048 & 3534.8 & 38.89 & 5.77\\
\bottomrule
\end{tabular}
\end{table}

For a separate batch-$8$ comparison, the vectorized, chunked full-query
stencil is $1.91$ times faster at $d=64$ and is within $6.2\%$ of parity at
$d=128$; the local implementation is $4.04$, $15.81$, and $61.70$ times
faster at $d=256,512,1024$.  In float64 the two implementations differ by at
most $5.26\cdot10^{-14}$ in the centered gradient and
$3.45\cdot10^{-10}$ in the Laplacian (relative $5.70\cdot10^{-12}$).

\begin{figure}[tbhp]
\centering
\includegraphics[width=0.92\textwidth]{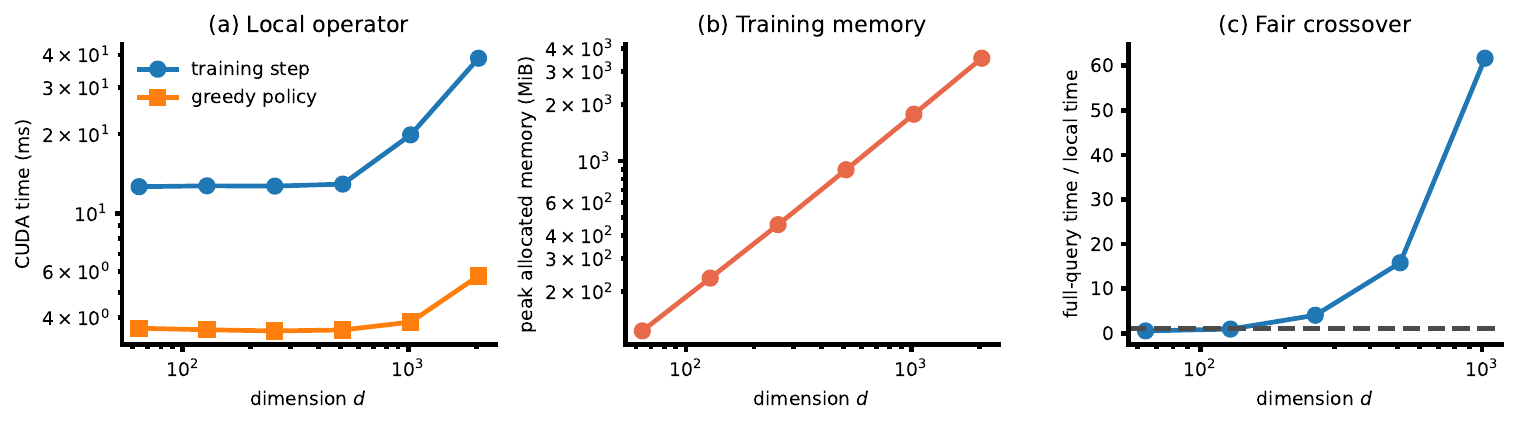}
\caption{Measured CUDA working memory, training time, policy-evaluation
time, and the full-query/local-stencil crossover.}
\label{smfig:cuda-scaling}
\end{figure}

\subsection{Sparse-Actuator Allen--Cahn Audit}

The nonmanufactured test in \cref{sec:exp-allencahn} uses
\[
 \dot x_i=2(x_{i-1}-2x_i+x_{i+1})+2(x_i-x_i^3)+m_i u_i,
 \qquad m_i=\one_{\{i\equiv1\pmod 8\}},
\]
with $L=\tfrac12|x|^2+\tfrac12|u|^2$, $T=0.75$, and $|u_i|\le3$.
The value architecture contains shared pair energies out to radius $R$.
Replicated endpoints impose $x_0=x_1$ and $x_{d+1}=x_d$, and the terminal
cost is $g(x)=|x|^2/2$.  Each initial-state ensemble mixes eight low Fourier
modes with three randomly located Gaussian pulses and is clipped to
$[-0.85,0.85]^d$.  The proportional gain is selected from
$\{0.5,1,2,3,4,5,6,8\}$ on held-out validation states before evaluation.
The ten-ensemble audit contains $64$ new initial states per ensemble and uses
the classical fourth-order Runge--Kutta method with step $0.005$ for every
policy.  Controls are held fixed within each integration step.
Halving the step changes the three mean costs by at most $0.069\%$ and the
two reported cost ratios by at most $0.016\%$.

\begin{table}[tbhp]
\centering
\small
\caption{Sparse Allen--Cahn policy-cost ratios relative to the best
proportional controller.  The $d=1024$ entries aggregate three training
seeds and ten independent test ensembles.}
\label{smtab:e10-audit}
\begin{tabular}{lcc}
\toprule
policy & dimension & cost ratio\\
\midrule
SDPI, $R=4$ & 1024 & $0.94990\pm0.00197$\\
direct nonlinear local & 1024 & $0.94372\pm0.00115$\\
SDPI, $R=0$ & 256 & $1.07780\pm0.00290$\\
SDPI, $R=1$ & 256 & $0.95855\pm0.00357$\\
SDPI, $R=2$ & 256 & $0.95423\pm0.00333$\\
SDPI, $R=4$ & 256 & $0.95370\pm0.00253$\\
\bottomrule
\end{tabular}
\end{table}

The direct nonlinear local policy is lower in cost than SDPI on every audited
state after averaging over training seeds; its mean advantage is $0.65\%$.
Accordingly, this experiment demonstrates the relevance of neighboring
unactuated states and scalable policy evaluation, not SDPI superiority.

\clearpage
\bibliographystyle{plain}
\bibliography{ref}

\end{document}